\documentclass[preprint, 11pt]{elsarticle}

\usepackage[margin=2.5cm]{geometry}

\usepackage{amssymb}
\usepackage{amssymb}
\usepackage{adjustbox}
\usepackage{amsmath}
\usepackage[amssymb]{SIunits}
\usepackage{amssymb}
\usepackage{multirow}
\usepackage{wrapfig}
\usepackage{enumitem}
\usepackage{subcaption}
\usepackage{mathtools}
\usepackage{makecell}
\usepackage{threeparttable}
\usepackage{lipsum}
\usepackage{hyperref}
\usepackage{algorithm}
\usepackage{algpseudocode}
\usepackage{float}
\usepackage{color,soul}               
\usepackage{enumerate}
\usepackage{xurl}
\usepackage{booktabs}
\usepackage{longtable}   

\newtheorem{proposition}{Proposition}

\newtheorem{proof}{Proof}

\usepackage{lipsum}
\let\OLDthebibliography\thebibliography
\renewcommand\thebibliography[1]{
  \OLDthebibliography{#1}
  \setlength{\parskip}{0pt}
  \setlength{\itemsep}{0pt plus 0.3ex}
}

\journal{Elsevier}
\begin{document}

\begin{frontmatter}



\title{A Two-Echelon Covering Tour Vehicle Routing Problem with Drones for Post-Disaster Relief}


\author[a,b]{Dang Viet Anh Nguyen\corref{cor1}}
\ead{andng@dtu.dk}
\cortext[cor1]{Corresponding author.}

\affiliation[a]{organization={Department of Technology, Management and Economics, Technical University of Denmark},
             addressline={Akademivej Bygning 358},
           postcode={2800 Kongens Lyngby},
             country={Denmark}}
\affiliation[b]{organization={School of Mechanical and Aerospace Engineering, Nanyang Technological University},
             addressline={50 Nanyang Avenue},
           postcode={639798},
             country={Singapore}}


\author[b]{Rajesh Piplani}
\author[c]{Aldy Gunawan}

\affiliation[c]{organization={School of Computing and Information Systems, Singapore Management University},
            addressline={80 Stamford Road}, 
            postcode={178902}, 
            country={Singapore}}

\begin{abstract}
We introduce the two-echelon covering tour vehicle routing problem (2E-CTVRP) for the distribution of relief supplies after a disaster. In the first echelon, a fleet of trucks transports supplies and drones from a central depot to satellites at the periphery of the affected area. In the second echelon, drones launched in parallel from the satellites deliver the supplies to the centroids of victim clusters, which are obtained by clustering the victim locations, and each truck waits at a satellite until its drones have returned. The problem combines the assignment of satellites to trucks, the sequencing of the truck routes, and the assignment of clusters to satellites, and minimizes the sum of the arrival times of the trucks at the satellites and at the depot. We formulate the 2E-CTVRP as a mixed integer linear program and propose a hybrid metaheuristic, GRASP-ILS-PR, which combines greedy randomized construction, a single-trajectory search, and periodic path relinking on the assignment of clusters to satellites. On a new benchmark set of 110 instances, GRASP-ILS-PR finds all known optimal solutions within 30 seconds, matches or improves on the one-hour solutions of a commercial solver on most larger instances, and improves on them by up to 11\% for instances with 50 satellites. It clearly outperforms a conventional GRASP with evolutionary path relinking and yields more consistent results than a simulated annealing algorithm that uses the same search components. An analysis of two fleet configurations shows that few large trucks minimize the cumulative arrival time in most instances, whereas many small trucks deliver supplies to the victims earlier and more equitably in every instance.
\end{abstract}

\begin{keyword}
OR in disaster relief \sep Two-echelon vehicle routing \sep Truck--drone delivery \sep Covering tour \sep GRASP \sep Path relinking
\end{keyword}
\end{frontmatter}


\section{Introduction}
\label{Sec1:Intro}

In recent years, the world has witnessed a troubling trend of increasingly severe natural and man-made disasters with devastating impacts on communities and human lives. According to the International Federation of Red Cross and Red Crescent Societies (IFRC), a disaster is a serious disruption of the functioning of a community that exceeds its capacity to cope using its own resources \citep{Disaster41:online}. In such situations, delivering essential items such as food, drinking water, and medicines to affected areas is crucial in reducing human suffering and risk. Humanitarian logistics plays a pivotal role in disaster relief efforts by ensuring timely and effective responses to the needs of affected communities. To address these challenges, Operations Research (OR) models have been developed to support and expedite decision-making during post-disaster conditions. One area that receives considerable attention from researchers and practitioners is the Vehicle Routing Problem (VRP). While classical VRP problems aim to minimize costs or improve customer service, the application of VRP in humanitarian aid has the critical objective of reducing human suffering, with time being the most important factor \citep{toth2014vehicle}. Moreover, infrastructure and road networks after a disaster often have limitations due to damage or congestion caused by unplanned panic evacuations, leading to a low level of service \citep{anuar2021vehicle}. Consequently, ground transport vehicles like trucks and vans may face significant difficulties reaching remote locations.

The emergence of aerial mobility technology has led to unmanned aerial vehicles (UAVs), also known as drones, gaining popularity in supply chains and logistics as a potential solution to last-mile delivery challenges in humanitarian logistics. Drones can be operated without a driver on board and are not restricted by terrain, road networks, or traffic conditions \citep{rejeb2021humanitarian}. They are faster than trucks and have lower transport costs \citep{vu2022two}. However, drones have a small capacity, and GPS data can be inaccurate, affecting package drop-offs \citep{lee2016technological}. In addition, drones need to return to the station to recharge or swap batteries. To combine the speed of drones with the capacity and range of trucks, a substantial and fast-growing body of work has examined two-echelon truck-drone routing, in which trucks carry drones toward a service area before drones complete last-mile delivery \citep{kitjacharoenchai2020two}. Recent studies in this stream have pushed toward exact solution methods \citep{zhou2023exact}, robust formulations under demand and travel-time uncertainty \citep{yin2023robust, faiz2024robust, zhang2023robust}, and dedicated heuristic solvers for post-disaster settings \citep{faiz2024computational}. These models, however, generally require drones to reach every individual demand point directly, which becomes impractical when the number and precise locations of victims cannot be fully enumerated in the immediate aftermath of a disaster.

In post-disaster contexts, exhaustive individual-level delivery is often infeasible due to incomplete location data, infrastructure collapse, and limited operational capacity. Instead, adopting a cluster-based spatial coverage approach allows humanitarian planners to identify representative points (such as cluster centroids) that efficiently cover victims within a given radius, ensuring rapid reach without overextending scarce resources. This approach enhances fairness by ensuring that no region is left unattended while optimizing travel time and resource allocation \citep{seltzer2025clustering, vargas2015clustering}. Recent advances in remote sensing, GIS, and drone-based data acquisition have enabled the identification and clustering of victims' locations in near real time, significantly improving situational awareness and supporting responsive and equitable aid distribution \citep{hein2018remote, ortiz2020geographic}. Thus, embedding spatial coverage principles into routing decisions provides a robust mechanism to integrate efficiency, efficacy, and equity, the three central dimensions of humanitarian logistics performance \citep{huang2012models}. The covering tour problem (CTP) formalizes this idea by allowing a subset of nodes to be served directly while others are considered satisfied through proximity \citep{current1989covering}, and it has been applied to humanitarian relief through the location of satellite distribution centers \citep{naji2012covering} and, more recently, through metaheuristic \citep{allahyari2015hybrid} and inventory-integrated \citep{kazancc2026covering} extensions. These covering tour models, however, route a single echelon of ground vehicles directly to the covering nodes and do not incorporate aerial last-mile delivery.

Two active streams of literature therefore address complementary halves of the same operational problem: two-echelon truck-drone routing offers a multi-modal delivery structure but assumes exhaustive point-to-point service, while covering tour models offer spatial coverage logic but are confined to a single echelon of ground vehicles. This paper closes that gap. We introduce the two-echelon covering tour vehicle routing problem (2E-CTVRP), a novel variant of the two-echelon vehicle routing problem (2E-VRP) that integrates covering-based spatial design into a multi-modal delivery system for post-disaster relief. In this framework, a fleet of trucks in the first echelon transports essential supplies and drones from a central depot to satellites located at the periphery of the affected area. The drones then perform last-mile delivery, distributing supplies from the satellites to representative drop-off points identified through a clustering algorithm. This design enhances the equitable and feasible distribution of relief goods while avoiding exhaustive individual-level deliveries.

The 2E-CTVRP is formulated as a mixed integer linear program (MILP), which a commercial solver solves to optimality within one hour only for small instances, with up to nine satellites in our experiments. For larger instances, which must nevertheless be solved within the short decision times of the response phase, we develop a hybrid metaheuristic, the Greedy Randomized Adaptive Search Procedure with Iterated Local Search and Path Relinking (GRASP-ILS-PR). Instead of performing a complete construction, local search, and path relinking in every iteration, as in a conventional GRASP with path relinking, GRASP-ILS-PR carries a single solution across iterations and invokes the expensive components periodically, which allows many more iterations within a given time limit. Finally, we analyze how the choice between a fleet of few large trucks and a fleet of many small trucks affects the efficiency, efficacy, and equity of the relief operation.

The main contributions of this paper are as follows:
\begin{itemize}
\item We propose the first covering-tour-based two-echelon truck-drone framework for humanitarian logistics, integrating multi-truck, multi-drone delivery with covering-based cluster centroids to enable equitable and operationally feasible coverage of dispersed victim populations without exhaustive individual-level routing.

\item We develop GRASP-ILS-PR, a hybrid metaheuristic tailored to the two-echelon structure of the problem. It exploits a decomposition of the objective function to construct the assignment of centroids to satellites, schedules local search and path relinking periodically around a single search trajectory, and applies path relinking to the second-echelon assignment. GRASP-ILS-PR finds all known optimal solutions within 30\,s, matches or improves on the one-hour incumbents of Gurobi for most larger instances, clearly outperforms a conventional GRASP with evolutionary path relinking that uses the same search components, and yields significantly more consistent results across runs than a simulated annealing algorithm built from these components.

\item We introduce a benchmark set of 110 instances that reflects the spatial characteristics of victim locations after a disaster, including large-scale instances with up to 50 satellites. Using this set, we quantify the trade-off between truck fleet configurations: few large trucks minimize the cumulative arrival time in most instances, whereas many small trucks deliver supplies to the victims earlier and more equitably in every instance.
\end{itemize}

The remainder of this paper is organized as follows. Section~\ref{Sec2:Lit} reviews the relevant literature. Section~\ref{Sec3} describes the 2E-CTVRP and presents its mathematical formulation. Section~\ref{Sec4:GRA} presents the GRASP-ILS-PR algorithm. Section~\ref{Sec5} reports the computational results, and Section~\ref{Sec6} analyzes the impact of the truck fleet on the relief operation. Section~\ref{Sec7:Con} concludes the paper and outlines directions for future research.

\section{Literature review}
\label{Sec2:Lit}
The 2E-CTVRP is related to five streams of research: vehicle routing in humanitarian operations, two-echelon vehicle routing, truck--drone routing, covering tour problems, and routing under uncertainty in post-disaster settings. This section reviews these streams and positions the 2E-CTVRP relative to them.

\subsection{Vehicle routing in humanitarian operations}
\label{Sec2.1:HumVRP}
Since the first applications of vehicle routing to humanitarian relief \citep{knott1988vehicle}, research on routing models for disaster operations has expanded steadily. \cite{farahani2011logistics} distinguished several classes of models in humanitarian logistics, including facility location, transportation and distribution, inventory, and integrated models; transportation and distribution models address the delivery of relief supplies, evacuation, and rescue. This study focuses on the delivery of essential supplies, such as drinking water, food, and medicine, to victims in the affected area. Comprehensive reviews of routing in humanitarian relief are provided by \cite{toth2014vehicle} and \cite{anuar2021vehicle}. Among the many variants studied, \cite{afsar2014exact} proposed a generalized VRP with a flexible fleet size, solved by column generation and iterated local search, and \cite{rivera2015multistart, rivera2016mathematical} considered vehicles performing multiple trips in disaster logistics.

In contrast to commercial routing, humanitarian routing aims to meet urgent needs as quickly and as fairly as possible, particularly in the response phase. The most commonly used performance measures are the response time, the equity of service across affected regions, the satisfied demand, and the transportation cost \citep{golden2014chapter}, and these measures may conflict. \cite{huang2012models} analyzed how the route structure changes when the objective is based on efficiency, efficacy, or equity, and observed that a fleet of small vehicles can perform well with respect to all three, at the price of a greater coordination effort.

Response time is typically incorporated through min-max, min-avg, or min-sum objectives. The min-max objective minimizes the latest arrival time, the min-avg objective minimizes the average arrival time at the demand nodes, and the min-sum objective minimizes the sum of the arrival times. The definition of the arrival time differs between studies: \cite{campbell2008routing} and \cite{rekik2012two} consider the time at which the last demand node is visited, whereas \cite{vu2022two} and \cite{hentenryck2010strategic} consider the time at which the last vehicle returns to the depot. The min-sum objective corresponds to that of the cumulative capacitated vehicle routing problem \citep{ngueveu2010effective}, and it has been adopted in humanitarian settings, for example for the routing of unmanned aerial vehicles in search and rescue operations \citep{kyriakakis2022cumulative}. Other studies measure the impact on victims more directly, for instance by penalizing deprivation time in hybrid truck--drone delivery after disasters \citep{rahimi2024improved}.

\subsection{Two-echelon vehicle routing}
\label{Sec2.2:2EVRP}
Humanitarian supply chains typically connect ports, airports, primary warehouses, satellite depots, and demand points through several networks. Similar multi-echelon distribution systems have been studied extensively in city logistics, multi-modal freight transportation, and e-commerce, most commonly as two-echelon routing problems, in which different vehicles operate in each echelon to exploit economies of scale and to overcome limitations such as vehicle size or road conditions. Comprehensive reviews are provided by \cite{cuda2015survey} and, more recently, by \cite{sluijk2023two}. The 2E-CTVRP belongs to the class of two-echelon vehicle routing problems (2E-VRPs), in which the vehicles of the two echelons must be coordinated at the satellites; \cite{li2021two}, for instance, model the synchronization of vehicles at the satellites explicitly.

Applications of the 2E-VRP to humanitarian logistics have grown rapidly, spanning agile heuristics, exact algorithms, and robust formulations. \citet{do2021agile} studied a two-echelon vehicle routing problem with pickup and delivery in which a single fleet of homogeneous vehicles visits intermediate nodes before serving the customers, motivated by real-time applications such as rescue operations with drones in humanitarian logistics; a biased-randomized heuristic, executed in parallel, provides competitive solutions within milliseconds. \cite{faiz2024robust} proposed a two-echelon problem in which ground vehicles transport UAVs with medical and essential supplies to satellites outside the affected zone, from which the drones serve the final demand points; a two-stage robust model, solved by column-and-constraint generation and column generation, addresses uncertain demand, and a case study on Hurricane Maria in Puerto Rico demonstrates its effectiveness. A companion study develops heuristics for larger instances of the same problem \citep{faiz2024computational}. \cite{xiao2025two} studied a two-echelon drone--truck routing problem with time windows and stochastic demand, in which one truck and one drone per distribution center serve individual demand points, and solved it by particle swarm optimization. Beyond the humanitarian setting, \cite{zhou2023exact} developed a branch-and-price algorithm for a two-echelon vehicle routing problem with drones, in which drones perform multiple trips while their vehicle is stationed at a customer node. Exact approaches of this type provide optimal solutions but do not scale to the instance sizes and decision times of post-disaster planning.

\subsection{Truck--drone routing}
\label{Sec2.3:TruckDrone}
Drones are increasingly used for last-mile delivery, and companies such as Amazon, Google, and UPS are exploring the technology \citep{bamburry2015drones}; \cite{macrina2020drone} provide a comprehensive review of drone-aided routing. \cite{murray2015flying} introduced the flying sidekick traveling salesman problem, in which a truck and a single drone operate in tandem, and the parallel drone scheduling traveling salesman problem. Models with multiple trucks and drones for last-mile delivery followed \citep{wang2017vehicle, poikonen2017vehicle, kitjacharoenchai2019multiple}. \cite{kitjacharoenchai2020two} introduced the two-echelon vehicle routing problem with drones, with multiple trucks in the first echelon and multiple drones in the second, and solved it by a mixed integer program for small instances and by two heuristics for larger ones. \cite{vu2022two} proposed a two-echelon truck--drone model in which, unlike in the flying sidekick problem, the truck waits at a satellite for the return of multiple drones, but only a single truck operates in the first echelon.

Drones offer considerable potential in disaster relief, since they are not restricted by damaged road networks. \cite{rabta2018drone} proposed a drone fleet model for last-mile distribution in disaster relief that accounts for payload and battery limitations, and \cite{shi2024optimal} coordinated helicopters, trucks, and drones for the distribution of emergency supplies after an earthquake. Equity has recently been considered explicitly: \cite{lu2023humanitarian} maximized the fairness of supply allocation in a vehicle routing problem synchronized with drones under time-varying weather, and \cite{khameneh2025multi} minimized the gap between delivery times in a multi-truck, multi-drone system for flood relief. In all these models, however, the drones deliver to individual demand points.

\subsection{Covering tour problems}
\label{Sec2.4:CTP}
In covering tour problems, vehicles visit only a subset of the demand nodes, and the remaining nodes are covered if they lie within a given distance of a visited node, so that customers can collect their supplies nearby. \cite{current1989covering} introduced the concept as the covering salesman problem, and numerous variants have been studied since \citep{kammoun2017integration, pham2017solving}, including the multi-depot covering tour vehicle routing problem, for which \cite{allahyari2015hybrid} developed a hybrid metaheuristic based on variable neighborhood descent and variable neighborhood search. \cite{glock2022spatial} formalized the broader concept of spatial coverage, in which nodes that are not visited directly are considered served for the purpose of the objective or the constraints, and defined the vehicle routing problem with spatial coverage.

Covering tour models have been applied to humanitarian relief in several forms. \cite{naji2012covering} used a covering tour approach to locate satellite distribution centers for delivering aid to affected regions, and \cite{veenstra2018simultaneous} introduced a simultaneous facility location and routing problem with covering options, in which patients collect medical supplies from lockers within a coverage distance. In humanitarian applications, the objective often shifts from routing cost to minimizing walking distance and maximizing population coverage \citep{nolz2010water, nolz2011risk}. More recently, \cite{kazancc2026covering} proposed a two-echelon covering inventory routing problem for the distribution of medical kits under demand uncertainty. Across this literature, however, covering tours are performed by ground vehicles, and aerial last-mile delivery is not considered.

\subsection{Uncertainty in post-disaster routing}
\label{Sec2.4:Uncertainty}
Post-disaster environments are characterized by incomplete and unreliable information, and a growing stream of research models this uncertainty explicitly. \cite{yin2023robust} studied a robust vehicle routing problem with drones under uncertain demands and truck travel times in humanitarian logistics. \cite{zhang2023robust} addressed robust drone selective routing for the assessment of post-disaster transportation networks, whose conditions must be inferred through drone reconnaissance. \cite{ghelichi2022drone} located drone platforms for delivering relief to populations whose demand locations are initially unknown, using chance-constrained stochastic programming. As discussed above, \cite{faiz2024robust} and \cite{kazancc2026covering} protect against uncertain demand through robust and scenario-based stochastic formulations, respectively. These studies show that uncertainty is increasingly treated as a first-order modeling concern. The present study addresses the deterministic version of the problem, which provides the basis for such extensions; we return to uncertainty as a direction for future research in Section~\ref{Sec7:Con}.

\subsection{Positioning of the 2E-CTVRP}
\label{Sec2.5:Gap}
Table~\ref{tab:lit} compares the 2E-CTVRP with the most closely related studies. The literature reveals two largely separate streams. Covering tour models for humanitarian relief achieve equitable and operationally feasible coverage of dispersed victims through representative covering points, but they rely on a single echelon of ground vehicles. Truck--drone models, including their two-echelon humanitarian variants, provide a multi-modal delivery structure suited to damaged road networks, but the drones serve every individual demand point, and the two-echelon humanitarian variants are often restricted to a single fleet that serves both echelons, a single truck in the first echelon, or a single truck--drone pair per distribution center \citep{do2021agile, vu2022two, xiao2025two}.

The 2E-CTVRP combines the strengths of both streams. Building on the concept of spatial coverage \citep{glock2022spatial}, drones deliver supplies to the centroids of victim clusters rather than to individual locations, while a fleet of multiple trucks transports the supplies and drones from the depot to the satellites, and multiple drones operate in parallel at each satellite. The assignment of clusters to satellites is determined jointly with the truck routes, since it determines both the loads of the trucks and the waiting times at the satellites. The objective, the sum of the arrival times of the trucks, extends the cumulative routing objective \citep{ngueveu2010effective, kyriakakis2022cumulative} to a two-echelon setting, in which the drone missions at a satellite delay all subsequent visits of the truck. To the best of our knowledge, no existing study combines multi-truck, multi-drone two-echelon delivery with covering-based cluster centroids for humanitarian relief. To solve the problem, we propose the GRASP-ILS-PR metaheuristic, and we analyze how the choice between a fleet of few large trucks and a fleet of many small trucks affects the efficiency, efficacy, and equity of the relief operation.

\begin{table}[htbp]
\centering
\scriptsize
\setlength{\tabcolsep}{4pt}
\begin{threeparttable}
\caption{Comparison of the 2E-CTVRP with closely related studies. Trucks and drones: S = single, M = multiple, -- = not used. Delivery: Ind.\ = drones or vehicles serve individual demand points; Cov.\ = demand is served through covering nodes or cluster centroids.}
\label{tab:lit}
\begin{tabular}{lcccccc}
\toprule
Study & Two echelons & Trucks & Drones & Delivery & Humanitarian & Uncertainty \\
\midrule
\cite{murray2015flying}                & --           & S  & S  & Ind. & --           & -- \\
\cite{kitjacharoenchai2020two}         & $\checkmark$ & M  & M  & Ind. & --           & -- \\
\cite{zhou2023exact}                   & $\checkmark$ & M  & M  & Ind. & --           & -- \\
\cite{rabta2018drone}                  & --           & -- & M  & Ind. & $\checkmark$ & -- \\
\cite{do2021agile}\tnote{a}            & $\checkmark$ & -- & M  & Ind. & $\checkmark$ & -- \\
\cite{vu2022two}                       & $\checkmark$ & S  & M  & Ind. & $\checkmark$ & -- \\
\cite{yin2023robust}                   & --           & M  & M  & Ind. & $\checkmark$ & $\checkmark$ \\
\cite{lu2023humanitarian}              & --           & M  & M  & Ind. & $\checkmark$ & -- \\
\cite{faiz2024robust}                  & $\checkmark$ & M  & M  & Ind. & $\checkmark$ & $\checkmark$ \\
\cite{xiao2025two}                     & $\checkmark$ & S\tnote{b} & S\tnote{b} & Ind. & $\checkmark$ & $\checkmark$ \\
\cite{khameneh2025multi}               & --           & M  & M  & Ind. & $\checkmark$ & -- \\
\cite{naji2012covering}                & --           & M  & -- & Cov. & $\checkmark$ & -- \\
\cite{allahyari2015hybrid}             & --           & M  & -- & Cov. & --           & -- \\
\cite{kazancc2026covering}             & $\checkmark$ & M  & -- & Cov. & $\checkmark$ & $\checkmark$ \\
\midrule
This study                             & $\checkmark$ & M  & M  & Cov. & $\checkmark$ & -- \\
\bottomrule
\end{tabular}
\begin{tablenotes}
\item[a] A single homogeneous fleet serves both echelons; drone-based rescue operations are given as a motivating application.
\item[b] One truck and one drone per distribution center.
\end{tablenotes}
\end{threeparttable}
\end{table}

%
%

\section{Problem description and mathematical formulation}
\label{Sec3}

\subsection{Problem description}
\label{sec3:description}

The 2E-CTVRP addresses post-disaster humanitarian delivery through a coordinated two-echelon system. It is motivated by the 2025 flooding in Thai Nguyen Province, Vietnam, where severe rainfall submerged residential areas and destroyed critical road infrastructure. The problem reflects situations in which displaced residents gather at evacuation points while damaged bridges and roads prevent trucks from reaching the affected communities. Relief organizations must deliver essential supplies from a central depot to multiple isolated locations under severe infrastructure and resource constraints.

\begin{figure}[H]
\centering
    \includegraphics[width = 0.6\textwidth]{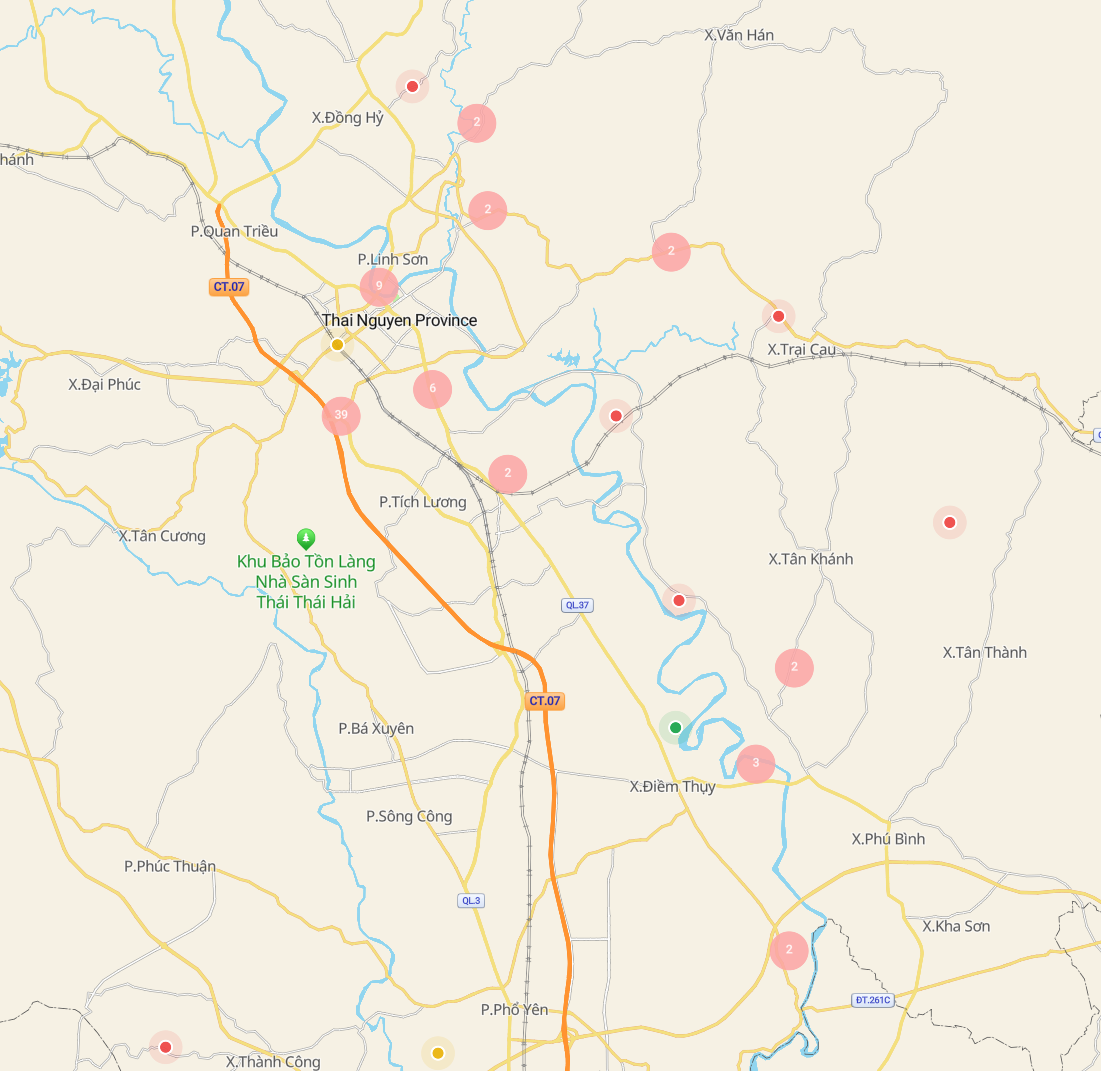}
    \caption{Spatial distribution of victim locations forming distinct clusters, with the number in each red circle indicating the number of victim locations in the area following the 2025 flood in Thai Nguyen Province, Vietnam (source: \url{https://thongtincuuho.org/})}
    \label{figA2}
\end{figure}

The problem is defined on a two-echelon network consisting of (i) a primary depot outside the affected area, where relief supplies and drones are stored; (ii) a set of satellites on the periphery of the affected area, whose locations are determined beforehand, for example by a preceding facility location decision, on the basis of their accessibility from the depot and their suitability for drone operations; and (iii) a set of demand clusters, each representing a group of victims requiring aid (Figure~\ref{figA2}). Delivering relief to every individual victim location is impractical in the immediate aftermath of a disaster, because victim locations are incompletely known and resources are scarce. The victim locations are therefore grouped into clusters in a preprocessing step, and the centroid of each cluster serves as the drone delivery point, from which victims collect supplies within walking distance. The number of clusters is set equal to the number of satellites. The assignment of clusters to satellites is not fixed in advance but is a decision of the problem, since it determines both the load carried by each truck and the waiting time at each satellite.

In the first echelon, a fleet of $K$ homogeneous trucks, each with capacity $Q$, transports relief supplies and drones from the depot to the satellites. Each truck performs one route that starts at the depot, visits a sequence of satellites, and returns to the depot. Each satellite is visited by exactly one truck, and each truck visits at least one satellite. In the second echelon, each satellite serves exactly one cluster centroid, and each centroid is served from exactly one satellite. When a truck arrives at a satellite, up to $U$ drones, each with payload capacity $P$, are launched in parallel to deliver the demand of the assigned centroid, and each drone performs a single round trip. The truck waits at the satellite until all drones have returned and then proceeds to the next satellite on its route.

This truck--drone synchronization assumption is more restrictive than the coordination schemes considered in commercial applications, but it reflects post-disaster conditions. First, communication infrastructure, such as cellular networks and positioning services, is often damaged or unavailable after a disaster, which makes real-time coordination between moving trucks and drones difficult; the Thai Nguyen flood disrupted networks across the affected districts. Second, drones face elevated operational risks, including adverse weather, poor visibility, debris, and unstable landing zones. If a truck departs before its drones return, recovery may fail, and drones may be lost at a time when equipment is scarce. Third, the limited battery endurance of drones requires trucks to remain within retrieval range to guarantee safe recovery. Finally, this conservative policy ensures operational reliability in the initial response phase, when communication is unreliable and spare equipment is unavailable. Since relief operations increasingly draw drones from several sources, such as government agencies, non-governmental organizations, and donors, the drone resources at a satellite are modeled through their aggregate payload capacity $UP$ rather than through individual drone schedules.

The 2E-CTVRP differs from classical covering tour problems, in which vehicles visit a subset of the nodes and the remaining nodes are covered if they lie within a given distance of a visited node. In the 2E-CTVRP, coverage is achieved at the clustering stage, and every cluster centroid must be served by drones. Three interdependent decisions remain: the allocation of satellites to trucks, the sequence in which each truck visits its satellites, and the assignment of centroids to satellites. The assignment determines the demand that each truck must carry, and hence the feasibility of the truck routes with respect to capacity. It also determines the waiting time at each satellite, which delays the arrival of the truck at all subsequent satellites on its route. The objective is to minimize the sum of the arrival times of the trucks at all satellites and of their return times to the depot. This objective is closely related to that of the cumulative capacitated vehicle routing problem \citep{ngueveu2010effective}. It favors the early service of all satellites rather than low travel cost, reflecting that in disaster response, time is the most critical factor in reducing human suffering.

\subsection{Mathematical formulation}
\label{sec3:formulation}

The 2E-CTVRP is defined on a directed graph $G=(V\cup M,E_1\cup E_2)$, where $V\cup M$ is the set of nodes and $E_1\cup E_2$ is the set of arcs. Let $V=V_0\cup V_s$, where $V_0=\{v_0,v_{n+1}\}$ contains the start and end depot nodes and $V_s=\{v_1,\ldots,v_n\}$ denotes the set of $n$ satellites. The nodes $v_0$ and $v_{n+1}$ represent the same physical depot; $v_{n+1}$ is introduced as a copy of $v_0$ to represent the end of each truck route.

Let $\mathcal{K}=\{1,\ldots,K\}$ denote the set of trucks, each with capacity $Q$. The first-echelon arc set is $E_1=\{(i,j)\mid i,j\in V,\ i\neq j\}$, and the travel time of truck $k\in\mathcal{K}$ from node $v_i$ to node $v_j$ is denoted by $t_{ij}^k$. Although the trucks within each fleet configuration are homogeneous, the vehicle-dependent notation is retained to accommodate the different truck fleet types considered in the computational experiments.

In the second echelon, drones deliver relief from the satellites to the cluster centroids. Let $M=\{\mu_1,\ldots,\mu_n\}$ denote the set of $n$ cluster centroids, and let $d_j$ denote the total demand of centroid $\mu_j$. The second-echelon arc set is $E_2=\{(i,j)\mid i\in V_s,\ j\in M\}$. At most $U$ drones, each with payload capacity $P$, can be launched in parallel from a visited satellite. The one-way flight time from satellite $v_i$ to centroid $\mu_j$ is denoted by $t'_{ij}$. Drones launched from a satellite must return to the same satellite before the truck can continue its route, and $w_i$ denotes the resulting waiting time of the truck at satellite $v_i$. We assume that all satellite--centroid pairs satisfy the flight-range requirements of the drones; therefore, no flight-range constraints are included in the formulation.

The planning horizon starts at time zero, when the trucks leave the depot carrying relief supplies and drones. Let $a_i^k$ denote the arrival time of truck $k$ at node $v_i$. Each centroid is assigned to exactly one satellite, and its demand is transported to that satellite by a single truck; multiple drones may be launched in parallel from the satellite to serve the demand of the centroid.

The binary variable $x_{ij}^k$ equals 1 if truck $k$ traverses arc $(i,j)\in E_1$, and 0 otherwise. The binary variable $y_i^k$ equals 1 if truck $k$ visits satellite $v_i\in V_s$, and 0 otherwise. The binary variable $z_{ij}$ equals 1 if centroid $\mu_j\in M$ is served from satellite $v_i\in V_s$, and 0 otherwise. The auxiliary binary variable $\delta_{ij}^k$ equals 1 if satellite $v_i$ is visited by truck $k$ and centroid $\mu_j$ is served from $v_i$, and 0 otherwise. The continuous variables $a_i^k$ and $w_i$ represent the arrival times and waiting times, respectively. Finally, $H$ denotes a sufficiently large constant. A valid value is
\begin{equation*}
\label{eq:bigH}
H=(n+1)\max_{(i,j)\in E_1,\,k\in\mathcal{K}}t_{ij}^{k}+\sum_{i\in V_s}2\max_{j\in M}t'_{ij},
\end{equation*}
since a truck route traverses at most $n+1$ arcs and waits at most once at each satellite, so that $H$ bounds the arrival time of any truck at any node. The notation is summarized in Table~\ref{tab:notation}, and Figure~\ref{fig1} illustrates a solution.

\begin{table}[H]
\centering
\caption{Notation used in the 2E-CTVRP formulation}
\begin{tabular}{p{0.25\textwidth} p{0.70\textwidth}}
\toprule
\textbf{Symbol} & \textbf{Description} \\
\midrule
\multicolumn{2}{l}{\textbf{Sets}} \\
\midrule
$G=(V\cup M,E_1\cup E_2)$ & Directed graph representing the two-echelon network \\
$V=V_0\cup V_s$ & Set of depot and satellite nodes \\
$V_0=\{v_0,v_{n+1}\}$ & Start depot $v_0$ and end-depot copy $v_{n+1}$ \\
$V_s=\{v_1,\ldots,v_n\}$ & Set of $n$ satellites \\
$M=\{\mu_1,\ldots,\mu_n\}$ & Set of $n$ cluster centroids \\
$\mathcal{K}=\{1,\ldots,K\}$ & Set of trucks \\
$E_1=\{(i,j)\mid i,j\in V,\;i\neq j\}$ & Set of first-echelon arcs \\
$E_2=\{(i,j)\mid i\in V_s,\;j\in M\}$ & Set of second-echelon arcs \\
\midrule
\multicolumn{2}{l}{\textbf{Parameters}} \\
\midrule
$K$ & Number of trucks \\
$U$ & Maximum number of drones launched in parallel from a satellite \\
$Q$ & Capacity of each truck (units of supplies) \\
$P$ & Payload capacity of each drone (units of supplies) \\
$d_j$ & Total demand of cluster centroid $\mu_j\in M$ \\
$t_{ij}^{k}$ & Travel time of truck $k\in\mathcal{K}$ on arc $(i,j)\in E_1$ \\
$t'_{ij}$ & One-way flight time from satellite $v_i\in V_s$ to centroid $\mu_j\in M$ \\
$H$ & Sufficiently large constant used in the time-propagation constraints \\
\midrule
\multicolumn{2}{l}{\textbf{Decision variables}} \\
\midrule
$x_{ij}^{k}$ & 1 if truck $k$ traverses arc $(i,j)\in E_1$; 0 otherwise \\
$y_i^{k}$ & 1 if truck $k$ visits satellite $v_i\in V_s$; 0 otherwise \\
$z_{ij}$ & 1 if centroid $\mu_j\in M$ is served from satellite $v_i\in V_s$; 0 otherwise \\
$\delta_{ij}^{k}$ & 1 if satellite $v_i$ is visited by truck $k$ and centroid $\mu_j$ is served from $v_i$; 0 otherwise \\
$a_i^{k}$ & Nonnegative continuous variable: arrival time of truck $k$ at node $v_i\in V$ \\
$w_i$ & Nonnegative continuous variable: waiting time of the truck at satellite $v_i\in V_s$ \\
\bottomrule
\end{tabular}
\label{tab:notation}
\end{table}

\begin{figure}[H]
\centering
    \includegraphics[width = 0.8\textwidth]{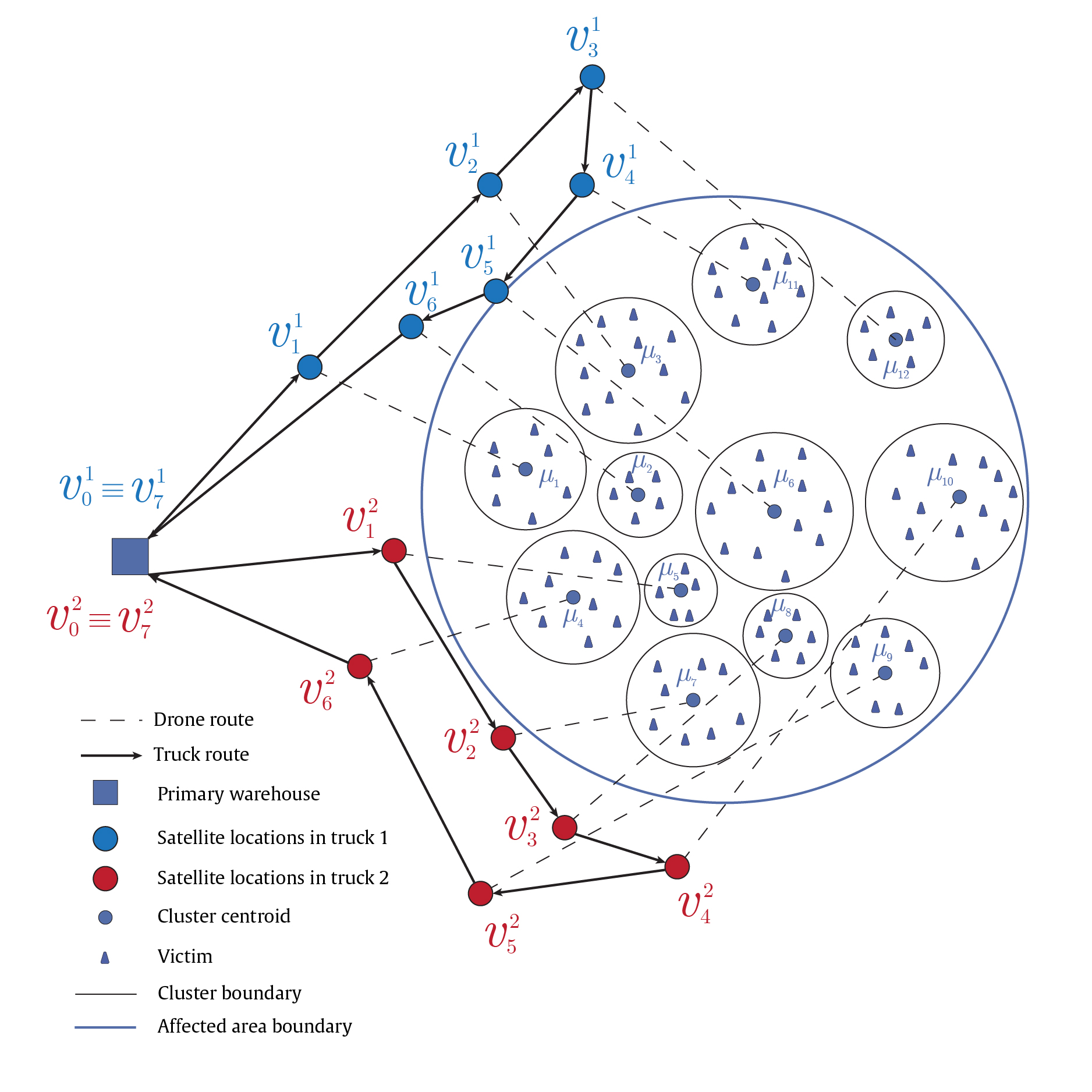}
    \caption{Illustration of a 2E-CTVRP solution}
    \label{fig1}
\end{figure}

The 2E-CTVRP is formulated as follows, where we set $w_0=0$ for compactness.

\begin{flalign}
\min \quad
& \sum_{k \in \mathcal{K}}
  \sum_{i \in V \setminus \{v_0\}} a_i^k
\label{eq1}&
\end{flalign}

\setlength{\belowdisplayskip}{0pt}
\setlength{\belowdisplayshortskip}{0pt}
\setlength{\abovedisplayskip}{0pt}
\setlength{\abovedisplayshortskip}{0pt}

\begin{subequations}
\begin{flalign}
&\text{s.t.}\quad
\sum_{i \in V_s} x_{i0}^k = 0
\quad \forall k \in \mathcal{K}
\label{eq2a}&
\\
&
\sum_{j \in V_s} x_{0j}^k = 1
\quad \forall k \in \mathcal{K}
\label{eq2b}&
\end{flalign}
\end{subequations}

\begin{subequations}
\begin{flalign}
&
\sum_{i \in V_s} x_{i,n+1}^k = 1
\quad \forall k \in \mathcal{K}
\label{eq3a}&
\\
&
\sum_{j \in V_s} x_{n+1,j}^k = 0
\quad \forall k \in \mathcal{K}
\label{eq3b}&
\end{flalign}
\end{subequations}

\begin{flalign}
&
\sum_{k \in \mathcal{K}}
\sum_{\substack{j \in V\\j\neq i}}
x_{ij}^k = 1
\quad \forall i \in V_s
\label{eq4}&
\end{flalign}

\begin{flalign}
&
\sum_{\substack{i \in V\\i\neq j}} x_{ij}^k
=
\sum_{\substack{i \in V\\i\neq j}} x_{ji}^k
\quad
\forall j \in V_s,\;
\forall k \in \mathcal{K}
\label{eq5}&
\end{flalign}

\begin{flalign}
&
\sum_{\substack{i \in V\\i\neq j}} x_{ij}^k
=
y_j^k
\quad
\forall j \in V_s,\;
\forall k \in \mathcal{K}
\label{eq6}&
\end{flalign}

\begin{flalign}
&
z_{ij}
\leq
\sum_{k \in \mathcal{K}} y_i^k
\quad
\forall i \in V_s,\;
\forall j \in M
\label{eq7}&
\end{flalign}

\begin{flalign}
&
\sum_{j \in M} z_{ij}
=
\sum_{k \in \mathcal{K}} y_i^k
\quad
\forall i \in V_s
\label{eq8}&
\end{flalign}

\begin{flalign}
&
\sum_{i \in V_s} z_{ij}
=1
\quad
\forall j \in M
\label{eq9}&
\end{flalign}

\begin{subequations}
\begin{flalign}
&
\delta_{ij}^k \leq y_i^k
\quad
\forall i \in V_s,\;
\forall j \in M,\;
\forall k \in \mathcal{K}
\label{eq10a}&
\\
&
\delta_{ij}^k \leq z_{ij}
\quad
\forall i \in V_s,\;
\forall j \in M,\;
\forall k \in \mathcal{K}
\label{eq10b}&
\\
&
\delta_{ij}^k
\geq
y_i^k + z_{ij} - 1
\quad
\forall i \in V_s,\;
\forall j \in M,\;
\forall k \in \mathcal{K}
\label{eq10c}&
\\
&
\sum_{i \in V_s}
\sum_{j \in M}
d_j \delta_{ij}^k
\leq Q
\quad
\forall k \in \mathcal{K}
\label{eq10d}&
\end{flalign}
\end{subequations}

\begin{flalign}
&
\sum_{j \in M} d_j z_{ij}
\leq UP
\quad
\forall i \in V_s
\label{eq11}&
\end{flalign}

\begin{flalign}
&
a_j^k
\geq
a_i^k + w_i + t_{ij}^k
-H(1-x_{ij}^k)
\nonumber\\[-1mm]
&
\hspace{2cm}
\forall i \in V\setminus\{v_{n+1}\},\;
\forall j \in V\setminus\{v_0,v_i\},\;
\forall k \in \mathcal{K}
\label{eq12}&
\end{flalign}

\begin{subequations}
\begin{flalign}
&
a_i^k \leq H y_i^k
\quad
\forall i \in V_s,\;
\forall k \in \mathcal{K}
\label{eq13a}&
\\
&
a_0^k = 0
\quad
\forall k \in \mathcal{K}
\label{eq13b}&
\end{flalign}
\end{subequations}

\begin{flalign}
&
w_i
\geq
2t'_{ij}z_{ij}
\quad
\forall i \in V_s,\;
\forall j \in M
\label{eq14}&
\end{flalign}

\begin{flalign}
&
x_{ij}^k \in \{0,1\}
\quad
\forall (i,j) \in E_1,\;
\forall k \in \mathcal{K}
\label{eq15}&
\end{flalign}

\begin{flalign}
&
y_i^k \in \{0,1\}
\quad
\forall i \in V_s,\;
\forall k \in \mathcal{K}
\label{eq16}&
\end{flalign}

\begin{flalign}
&
z_{ij} \in \{0,1\}
\quad
\forall i \in V_s,\;
\forall j \in M
\label{eq17}&
\end{flalign}

\begin{subequations}
\begin{flalign}
&
\delta_{ij}^k \in \{0,1\}
\quad
\forall i \in V_s,\;
\forall j \in M,\;
\forall k \in \mathcal{K}
\label{eq18a}&
\\
&
a_i^k \geq 0
\quad
\forall i \in V,\;
\forall k \in \mathcal{K}
\label{eq18b}&
\\
&
w_i \geq 0
\quad
\forall i \in V_s
\label{eq18c}&
\end{flalign}
\end{subequations}

The objective function~\eqref{eq1} minimizes the sum of the arrival times of the trucks at the satellites and of their return times to the end depot $v_{n+1}$. Since each satellite is visited by exactly one truck, Constraints~\eqref{eq13a} force the arrival-time variables of all other trucks at that satellite to zero, so that each satellite contributes exactly once to the objective. Constraints~\eqref{eq2a}--\eqref{eq2b} and~\eqref{eq3a}--\eqref{eq3b} ensure that each truck leaves the start depot $v_0$ towards a satellite and ends its route at the end-depot copy $v_{n+1}$. Constraints~\eqref{eq4} require each satellite to be visited by exactly one truck, and Constraints~\eqref{eq5} impose flow conservation at the satellites. Constraints~\eqref{eq6} link the routing variables $x_{ij}^k$ with the visiting variables $y_i^k$. Constraints~\eqref{eq7} ensure that a centroid can be served from satellite $v_i$ only if that satellite is visited by a truck. These constraints are implied by Constraints~\eqref{eq8}, since $z_{ij}\leq\sum_{j'\in M}z_{ij'}$, and are stated for clarity. Constraints~\eqref{eq8} require every visited satellite to serve exactly one centroid, and Constraints~\eqref{eq9} require every centroid to be served from exactly one satellite. Since Constraints~\eqref{eq4} and~\eqref{eq6} ensure that every satellite is visited, Constraints~\eqref{eq8} and~\eqref{eq9} together imply that the assignment of centroids to satellites is one-to-one. Constraints~\eqref{eq10a}--\eqref{eq10c} define $\delta_{ij}^k$ as the product of $y_i^k$ and $z_{ij}$, and Constraints~\eqref{eq10d} ensure that the total demand carried by each truck does not exceed its capacity $Q$. Constraints~\eqref{eq11} limit the demand served from each satellite to the aggregate payload capacity $UP$ of the drones launched in parallel. Constraints~\eqref{eq12} propagate the arrival times along the arcs traversed by each truck. Constraints~\eqref{eq13a} set the arrival time of a truck at a satellite it does not visit to zero, and Constraints~\eqref{eq13b} set the departure time from the depot to zero. Constraints~\eqref{eq14} set the waiting time at each satellite to at least the round-trip flight time to its assigned centroid. Finally, Constraints~\eqref{eq15}--\eqref{eq18c} define the domains of the decision variables.

Constraints~\eqref{eq12} also eliminate subtours, following the time-propagation principle described by \citet{ngueveu2010effective}. Consider a directed cycle $C$ consisting only of satellites and traversed by truck $k$. For every arc $(i,j)\in C$, $x_{ij}^k=1$, and Constraint~\eqref{eq12} reduces to $a_j^k \geq a_i^k + w_i + t_{ij}^k$. Summing these inequalities over all arcs of $C$ gives
\[
0 \geq \sum_{i\in C} w_i + \sum_{(i,j)\in C} t_{ij}^k.
\]
Since $w_i\geq0$ and $t_{ij}^k>0$, because the satellites are located at distinct points, the right-hand side is strictly positive, which is a contradiction. Hence, no cycle consisting only of satellites can exist, and every truck route is connected to the depot.

\section{Proposed Algorithm}
\label{Sec4:GRA}
In the response phase of a disaster, routing decisions must be made under tight time constraints and may have to be revised as information on damage, road accessibility, and demand becomes available. Solution methods for the 2E-CTVRP should therefore provide high-quality solutions within short computation times. The computational results in Section~\ref{Sec5} show that the MILP formulation of Section~\ref{Sec3} is solved to proven optimality within one hour only for instances with up to nine satellites. To address larger instances, we propose a hybrid metaheuristic, the Greedy Randomized Adaptive Search Procedure with Iterated Local Search and Path Relinking (GRASP-ILS-PR). The remainder of this section is organized as follows. Section~\ref{sec4:prelim} describes the algorithmic framework and the rationale for its design. Section~\ref{sec4:repr} introduces the solution representation, a decomposition of the objective function, and the feasibility conditions. Sections~\ref{sec4:constr} and~\ref{sec4:repair} present the construction heuristic and the repair operator, respectively. Section~\ref{sec4:ls} describes the neighborhood operators and the local search procedure. Section~\ref{sec4:pr} presents the path relinking procedure and the management of the elite pool. Finally, Section~\ref{sec4:hybrid} states the complete GRASP-ILS-PR algorithm.

\subsection{Algorithmic framework}
\label{sec4:prelim}
The Greedy Randomized Adaptive Search Procedure \citep{feo1995greedy,resende2016optimization} is a multi-start metaheuristic. Each iteration constructs a solution by a greedy randomized heuristic and improves it by local search, and the best solution over all iterations is returned. Since the iterations are independent, basic GRASP does not use information from the search history. Path relinking \citep{glover1997tabu,glover2000fundamentals} addresses this limitation by maintaining a pool of elite solutions and exploring trajectories that connect new solutions to members of the pool. Along such a trajectory, the attributes of the initial solution are progressively replaced by those of the guiding solution, so that good attributes of both solutions can be combined. The hybridization of GRASP with path relinking \citep{laguna1999grasp} has been applied successfully to routing problems with intermediate facilities, including the truck and trailer routing problem with satellite depots \citep{villegas2011grasp}.

In a standard GRASP with path relinking, every iteration performs a complete randomized construction, a complete local search, and path relinking against the elite pool. The cost of the construction and of the local search grows with the number of satellites and trucks. Under a fixed time budget, the number of iterations that can be performed on large instances is therefore limited. GRASP-ILS-PR retains the components of GRASP with path relinking but organizes them around a single search trajectory, in the spirit of Iterated Local Search \citep{lourencco2003iterated}. A working solution is carried over from one iteration to the next. In each iteration, one move is sampled from a randomly selected neighborhood, and the resulting solution replaces the working solution if it is feasible and its objective value is not worse. The acceptance of moves of equal value allows the search to traverse plateaus of the objective function. The more expensive components are invoked periodically rather than in every iteration: a complete local search is applied at a fixed iteration interval, and path relinking between the working solution and a member of the elite pool is applied at another fixed interval. The greedy randomized construction generates the initial solution and restarts the search whenever the best solution found has not improved for a given number of iterations. This organization combines the high iteration rate of a trajectory-based method with the intensification provided by path relinking and the diversification provided by randomized restarts.

Path relinking is applied to the second-echelon decisions, that is, to the assignment of cluster centroids to satellites, with the truck routes kept fixed. These assignments form a set of bijections between satellites and centroids, on which a natural distance between solutions can be defined (Section~\ref{sec4:pr}). We also evaluated an extension in which the elite pool is periodically evolved by relinking pairs of its members, following the evolutionary path relinking scheme of \citet{resende2010grasp}. Since this extension did not improve solution quality in the experiments reported in Section~\ref{Sec5}, it is not part of the final algorithm.

\subsection{Solution representation and evaluation}
\label{sec4:repr}
The notation of Section~\ref{Sec3} is used throughout. Since the trucks within a fleet configuration are homogeneous, we omit the truck index of the travel times and write $t_{ij}$ for $t^{k}_{ij}$.

A solution of the 2E-CTVRP is represented by a pair $S=(T,\sigma)$. The first-echelon component $T=(r_1,\dots,r_K)$ contains one route per truck,
\begin{equation}
\label{eq:route}
r_k=\left(v_{\pi_k(0)},v_{\pi_k(1)},\dots,v_{\pi_k(m_k)},v_{\pi_k(m_k+1)}\right),\qquad \pi_k(0)=0,\quad \pi_k(m_k+1)=n+1,\qquad k\in\mathcal{K},
\end{equation}
where $v_{\pi_k(1)},\dots,v_{\pi_k(m_k)}$ are the $m_k$ satellites visited by truck $k$, in the order of visit. By Constraints~\eqref{eq4}, every satellite belongs to exactly one route, so that $\sum_{k\in\mathcal{K}}m_k=n$. By Constraints~\eqref{eq2b}, every truck leaves the depot towards a satellite, so that $m_k\geq 1$ for all $k\in\mathcal{K}$; in particular, $n\geq K$ is a necessary condition for feasibility. The second-echelon component is a mapping $\sigma:\{1,\dots,n\}\rightarrow\{1,\dots,n\}$, where $\sigma(i)$ is the index of the centroid $\mu_{\sigma(i)}$ served from satellite $v_i$. Constraints~\eqref{eq4} and~\eqref{eq6} imply $\sum_{k\in\mathcal{K}}y_i^k=1$ for every satellite, so that Constraints~\eqref{eq8} assign exactly one centroid to each satellite, and Constraints~\eqref{eq9} assign each centroid to exactly one satellite. Hence, $\sigma$ is a permutation of $\{1,\dots,n\}$, and $z_{ij}=1$ if and only if $j=\sigma(i)$.

Given a solution $S$, the waiting time at satellite $v_i$ is the round-trip flight time to its assigned centroid,
\begin{equation}
\label{eq:wait}
w_i=2\,t'_{i\sigma(i)},\qquad i=1,\dots,n,
\end{equation}
and $w_0=0$ as in Section~\ref{Sec3}. The arrival times along route $r_k$ are obtained by the recursion
\begin{equation}
\label{eq:arr}
a^{k}_{\pi_k(0)}=0,\qquad a^{k}_{\pi_k(p)}=a^{k}_{\pi_k(p-1)}+w_{\pi_k(p-1)}+t_{\pi_k(p-1)\pi_k(p)},\qquad p=1,\dots,m_k+1,
\end{equation}
and the objective value of $S$ is
\begin{equation}
\label{eq:fS}
f(S)=\sum_{k\in\mathcal{K}}\sum_{p=1}^{m_k+1}a^{k}_{\pi_k(p)}.
\end{equation}
Let $L_k(S)=\sum_{p=1}^{m_k}d_{\sigma(\pi_k(p))}$ denote the load of truck $k$. Solution $S$ is feasible if
\begin{align}
& L_k(S)\leq Q, && k\in\mathcal{K}, \label{eq:F1}\\
& d_{\sigma(i)}\leq UP, && i=1,\dots,n. \label{eq:F2}
\end{align}
Conditions~\eqref{eq:F1} and~\eqref{eq:F2} are Constraints~\eqref{eq10d} and~\eqref{eq11} expressed in terms of $(T,\sigma)$, since $\sum_{i\in V_s}\sum_{j\in M}d_j\delta^{k}_{ij}=L_k(S)$ and $\sum_{j\in M}d_jz_{ij}=d_{\sigma(i)}$. Proposition~\ref{prop:equiv} shows that the representation loses no generality with respect to the MILP formulation.

\begin{proposition}
\label{prop:equiv}
(i) Every feasible solution $S=(T,\sigma)$ defines a feasible solution of formulation~\eqref{eq1}--\eqref{eq18c} whose objective value equals $f(S)$. (ii) Every feasible solution of formulation~\eqref{eq1}--\eqref{eq18c} defines a feasible solution $S=(T,\sigma)$ whose objective value $f(S)$ does not exceed the objective value~\eqref{eq1} of the MILP solution. Consequently, the minimum of $f(S)$ over all feasible solutions $S$ equals the optimal value of formulation~\eqref{eq1}--\eqref{eq18c}.
\end{proposition}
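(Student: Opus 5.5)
For part (i), I will build the MILP variables explicitly from a feasible pair $S=(T,\sigma)$. For each $k$ and $p=0,\dots,m_k$, set $x^k_{\pi_k(p)\pi_k(p+1)}=1$ and every other $x$ to zero. Set $y_i^k=1$ exactly when $v_i$ lies on $r_k$, and $z_{ij}=1$ exactly when $j=\sigma(i)$. Then put $\delta^k_{ij}=y_i^kz_{ij}$, $w_i=2t'_{i\sigma(i)}$, take $a^k$ from recursion~\eqref{eq:arr} on the nodes of $r_k$, and set $a_i^k=0$ at satellites not on $r_k$.

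I then check the constraints group by group. Constraints~\eqref{eq2a}--\eqref{eq6} hold because each route is a simple path from $v_0$ to $v_{n+1}$ with $m_k\ge1$, and the routes partition $V_s$. Constraints~\eqref{eq7}--\eqref{eq9} hold because $\sigma$ is a permutation. Constraints~\eqref{eq10a}--\eqref{eq10c} hold by the product definition of $\delta$. Constraints~\eqref{eq10d} and~\eqref{eq11} are exactly conditions~\eqref{eq:F1}--\eqref{eq:F2}. Constraint~\eqref{eq14} holds with equality at $j=\sigma(i)$ and trivially for the other $j$, since $t'\ge0$. Constraint~\eqref{eq12} holds with equality on traversed arcs.

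On arcs with $x^k_{ij}=0$, I need $a_j^k\ge a_i^k+w_i+t_{ij}-H$. This follows from $a_j^k\ge0$ together with $a_i^k+w_i+t_{ij}\le H$. That bound is the argument already given after the definition of $H$: a route has at most $n+1$ arcs and at most one wait per satellite. Constraints~\eqref{eq13a} then hold because $a_i^k\le H$ on visited satellites and $a_i^k=0$ elsewhere. Finally, the objective~\eqref{eq1} sums $a^k$ over the route nodes $\pi_k(1),\dots,\pi_k(m_k+1)$, with zeros elsewhere, so it equals $f(S)$.

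For part (ii), I start from a feasible MILP solution $(x,y,z,\delta,a,w)$ and recover $T$ and $\sigma$. By~\eqref{eq4} and~\eqref{eq6}, each satellite has out-degree one in the arcs of exactly one truck. By~\eqref{eq5} and~\eqref{eq2a}--\eqref{eq3b}, the arcs of truck $k$ form a path from $v_0$ to $v_{n+1}$, plus possibly some disjoint cycles of satellites. The cycle-exclusion argument following the formulation rules those cycles out, so the arcs of truck $k$ form exactly one path. I take $r_k$ to be this path; it contains at least one satellite by~\eqref{eq2b}. Next, $\sum_k y_i^k=1$ for every $i$, so~\eqref{eq8}--\eqref{eq9} make $z$ a permutation matrix, and I set $\sigma(i)$ to the unique $j$ with $z_{ij}=1$. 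Because $\delta$ is forced to be the product $y_i^kz_{ij}$, constraints~\eqref{eq10d} and~\eqref{eq11} give~\eqref{eq:F1}--\eqref{eq:F2}, so $S$ is feasible.

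The key inequality in part (ii) is that every MILP arrival time is at least the corresponding recursively computed time. I would prove it by induction along $r_k$. At the start, $a_0^k=0$ by~\eqref{eq13b}. For the step, constraint~\eqref{eq12} on a traversed arc gives $a^k_{\pi_k(p)}\ge a^k_{\pi_k(p-1)}+w_{\pi_k(p-1)}+t_{\pi_k(p-1)\pi_k(p)}$. Constraint~\eqref{eq14} gives $w_i\ge 2t'_{i\sigma(i)}$, which is the waiting time used in the recursion. All remaining terms of~\eqref{eq1} are nonnegative by~\eqref{eq18b}, so the MILP objective is at least $f(S)$.

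The "consequently" statement then follows by combining the two parts: (i) shows the MILP optimum is at most $\min_S f(S)$, and (ii) shows the reverse inequality. I expect the main obstacle to be the path decomposition in part (ii), in particular making sure that each truck's arc set is a single depot path. The cycle argument assumes $t_{ij}>0$, which I will take as given from distinct satellite locations.
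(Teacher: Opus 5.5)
Your proposal is correct and follows the paper's proof essentially step by step. In (i) you use the same explicit construction, the same constraint-by-constraint check, and the same big-$H$ argument for non-traversed arcs. In (ii) you use the same path-plus-cycles decomposition, with cycles excluded by Constraints~\eqref{eq12}, followed by the same induction along each route using Constraints~\eqref{eq13b} and~\eqref{eq14} and nonnegativity. Your account is only slightly less explicit in one place: the paper separately notes the case of a satellite $v_i$ not on $r_k$, where $a_i^k+w_i+t_{ij}=w_i+t_{ij}\le H$ holds trivially.
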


\begin{proof}
(i) Set $x^{k}_{ij}=1$ if and only if $v_j$ immediately follows $v_i$ in $r_k$; $y^{k}_{i}=1$ if and only if $v_i$ belongs to $r_k$; $z_{ij}=1$ if and only if $j=\sigma(i)$; and $\delta^{k}_{ij}=y^{k}_{i}z_{ij}$. Let $w_i$ be given by~\eqref{eq:wait}, let $a^{k}_{i}$ be given by~\eqref{eq:arr} for the nodes of $r_k$, and let $a^{k}_{i}=0$ for the satellites not in $r_k$. Constraints~\eqref{eq2a}--\eqref{eq6} hold because each $r_k$ is a path from $v_0$ to $v_{n+1}$ that visits at least one satellite, and the routes partition $V_s$. Constraints~\eqref{eq7}--\eqref{eq9} hold because $\sigma$ is a permutation, Constraints~\eqref{eq10a}--\eqref{eq10c} hold by the definition of $\delta^{k}_{ij}$, and Constraints~\eqref{eq10d} and~\eqref{eq11} hold by Conditions~\eqref{eq:F1} and~\eqref{eq:F2}. Constraints~\eqref{eq14} hold with equality for $j=\sigma(i)$ and trivially otherwise. Constraints~\eqref{eq12} hold with equality for the arcs of $r_k$. For any other arc, $x^{k}_{ij}=0$, and the constraint holds because $a^{k}_{j}\geq 0$ and $a^{k}_{i}+w_i+t_{ij}\leq H$. The latter inequality holds because $a^{k}_{i}+w_i$ is either $w_i$, if $v_i$ is not in $r_k$, or the departure time of truck $k$ from $v_i$, which is bounded by $n$ arc traversals and one waiting time at each satellite. Together with the traversal of arc $(i,j)$, this gives at most $n+1$ travel times, so the bound follows from the definition of $H$ in Section~\ref{Sec3}. The same argument gives $a^{k}_{i}\leq H$, so Constraints~\eqref{eq13a} hold, and Constraints~\eqref{eq13b} and~\eqref{eq15}--\eqref{eq18c} hold by construction. Since $a^{k}_{i}=0$ for the satellites not visited by truck $k$, objective function~\eqref{eq1} equals $f(S)$.

(ii) Consider a feasible solution of the MILP. By Constraints~\eqref{eq4} and~\eqref{eq6}, every satellite is visited by exactly one truck. By Constraints~\eqref{eq2a}--\eqref{eq5}, the arcs traversed by truck $k$ form a path from $v_0$ to $v_{n+1}$ through the satellites with $y^{k}_{i}=1$, possibly together with cycles among these satellites. Such cycles are excluded by Constraints~\eqref{eq12}, as shown in Section~\ref{Sec3}. This path defines $r_k$, and $\sigma(i)$ is defined as the unique index $j$ with $z_{ij}=1$, which exists by Constraints~\eqref{eq8} and~\eqref{eq9}. Conditions~\eqref{eq:F1} and~\eqref{eq:F2} follow from Constraints~\eqref{eq10d} and~\eqref{eq11}. By Constraints~\eqref{eq14}, $w_i\geq 2t'_{i\sigma(i)}$. By Constraints~\eqref{eq12} applied to the arcs of $r_k$, and by induction on $p$, the value of $a^{k}_{\pi_k(p)}$ in the MILP solution is not smaller than the value given by recursion~\eqref{eq:arr}. Since all arrival-time variables are nonnegative by Constraints~\eqref{eq18b}, objective function~\eqref{eq1} is not smaller than $f(S)$.
\end{proof}

Unrolling recursion~\eqref{eq:arr}, the travel time on the $p$-th arc of $r_k$ delays the arrivals at $v_{\pi_k(p)}$ and at the $m_k+1-p$ subsequent nodes. The waiting time at $v_{\pi_k(p)}$ delays only the arrivals at the $m_k+1-p$ subsequent nodes. Objective~\eqref{eq:fS} can therefore be written as
\begin{equation}
\label{eq:decomp}
f(S)=\sum_{k\in\mathcal{K}}\left[\,\sum_{p=1}^{m_k+1}\left(m_k+2-p\right)t_{\pi_k(p-1)\pi_k(p)}+\sum_{p=1}^{m_k}\left(m_k+1-p\right)w_{\pi_k(p)}\right].
\end{equation}
Decomposition~\eqref{eq:decomp} has three implications for the design of the algorithm. First, $f(S)$ can be evaluated in $O(n)$ time. Second, the weight of a waiting time decreases along a route: the waiting time at the first satellite of $r_k$ is counted $m_k$ times, whereas the waiting time at the last satellite is counted once. Long drone missions at satellites visited early are therefore particularly costly, a property exploited by the construction heuristic in Section~\ref{sec4:constr}. Third, the two echelons interact only through the waiting times, which depend on $\sigma$, and their positional weights, which depend on $T$. If $\sigma$ is modified while $T$ is kept fixed, the objective changes by $\sum_{i}(m_{k(i)}+1-p(i))(w'_i-w_i)$, where the sum runs over the satellites whose centroids are modified, $k(i)$ and $p(i)$ denote the truck and the position of satellite $v_i$ in $T$, and $w'_i$ denotes the new waiting time. Such a modification can therefore be evaluated without recomputing the arrival times.

Regarding feasibility, the number of drones $U$ and their payload capacity $P$ are identical at all satellites, and $\sigma$ is a permutation. Condition~\eqref{eq:F2} therefore holds for every assignment $\sigma$ if and only if $\max_{j\in M}d_j\leq UP$. This condition is a property of the instance and is verified once before the search starts; if it is violated, the instance is infeasible. During the search, only Condition~\eqref{eq:F1} has to be verified. Since the load $L_k(S)$ depends on both components of the solution, a modification of $\sigma$ can violate Condition~\eqref{eq:F1} even if $T$ is unchanged. No flight-range condition is required, since all satellite--centroid pairs are assumed to satisfy the flight-range requirements of the drones (Section~\ref{Sec3}).

\subsection{Construction heuristic}
\label{sec4:constr}
The construction heuristic builds a solution in three stages: the truck routes are seeded with one satellite each, the remaining satellites are inserted into the routes by a randomized cheapest-insertion rule, and centroids are assigned to satellites by a randomized nearest-centroid rule. The two randomized stages, and two of the neighborhood operators in Section~\ref{sec4:ls}, use the following biased selection rule. Given a finite set $\mathcal{C}$ of candidates and a nonnegative score $s(c)$ for each candidate $c\in\mathcal{C}$, candidate $c$ is selected with probability
\begin{equation}
\label{eq:bias}
\psi(c)=\frac{\left(s(c)+\varepsilon\right)^{-1}}{\sum_{c'\in\mathcal{C}}\left(s(c')+\varepsilon\right)^{-1}},\qquad c\in\mathcal{C},
\end{equation}
where $\varepsilon>0$ is a small constant that keeps the probabilities well defined when a score equals zero. Candidates with small scores are the most likely to be selected, but every candidate is selected with positive probability, so that repeated constructions produce different solutions.

\paragraph{Seeding of the truck routes}
Each of the $K$ routes is initialized with one seed satellite, which leaves $Y=n-K\geq 0$ satellites to be inserted. Since each insertion adds a satellite to one route, at most $Y$ routes receive insertions, and at least $C=\max\{K-Y,0\}$ routes remain single-satellite routes in the constructed solution. By Decomposition~\eqref{eq:decomp} with $m_k=1$, and since $t_{i,n+1}=t_{0i}$ because $v_0$ and $v_{n+1}$ represent the same depot, a single-satellite route serving $v_i$ contributes $3t_{0i}+w_i$ to the objective. Single-satellite routes should therefore serve satellites close to the depot. The seeds are selected by one of two procedures, depending on the value of $C$.
\begin{itemize}
\item \textit{Procedure H1} ($C>0$, i.e., $Y<K$). The $C$ satellites with the shortest travel times from the depot seed $C$ routes, which anticipates the single-satellite routes of the constructed solution. The remaining $K-C=Y$ routes are seeded with the $Y$ satellites with the longest travel times from the depot, so that these routes extend in distinct directions during the insertion stage.
\item \textit{Procedure H2} ($C=0$, i.e., $Y\geq K$). The first seed is the satellite with the longest travel time from the depot. Each subsequent seed is the unselected satellite with the longest travel time from the previously selected seed, until $K$ seeds are selected. The routes thus start from satellites that are far from the depot and far from each other.
\end{itemize}

\paragraph{Randomized insertion}
The remaining $Y$ satellites are inserted one at a time. For a satellite $v_i$, every pair $(v_a,v_b)$ of consecutive nodes in a route, including the pairs that contain $v_0$ or $v_{n+1}$, defines a candidate insertion position, with insertion cost
\begin{equation}
\label{eq:inscost}
c_i(a,b)=t_{ai}+t_{ib}-t_{ab}.
\end{equation}
Since truck travel times are proportional to Manhattan distances (Section~\ref{Sec5}), they are symmetric and satisfy the triangle inequality, so that $c_i(a,b)\geq 0$. Under the Manhattan metric, $c_i(a,b)=0$ whenever $v_i$ lies in the axis-parallel rectangle spanned by $v_a$ and $v_b$; zero insertion costs are therefore frequent, which motivates the constant $\varepsilon$ in rule~\eqref{eq:bias}. The insertion position of $v_i$ is selected by rule~\eqref{eq:bias}, with the set of all candidate positions of all routes as $\mathcal{C}$ and $c_i(a,b)$ as score. Truck capacity is not considered at this stage, because the loads depend on the assignment $\sigma$, which is constructed afterwards; capacity violations are resolved by the repair operator of Section~\ref{sec4:repair}.

\paragraph{Assignment of centroids to satellites}
The satellites are arranged in a list $A$ in increasing order of their positions in their routes, with ties broken by truck index:
\begin{equation}
\label{eq:listA}
A=\left(v_{\pi_1(1)},v_{\pi_2(1)},\dots,v_{\pi_K(1)},\,v_{\pi_1(2)},v_{\pi_2(2)},\dots\right),
\end{equation}
where the routes with fewer than $p$ satellites are skipped at position $p$. Figure~\ref{fig2} illustrates the construction of $A$ for the two routes of Figure~\ref{fig1}. The satellites are processed in the order of $A$, and $\Omega$ denotes the set of indices of the centroids assigned so far, initially empty. For satellite $v_i$, a centroid index $j\in\{1,\dots,n\}\setminus\Omega$ is selected by rule~\eqref{eq:bias} with score $t'_{ij}$, and we set $\sigma(i)=j$ and $\Omega\leftarrow\Omega\cup\{j\}$. After the last satellite of $A$ has been processed, $\sigma$ is a permutation of $\{1,\dots,n\}$. By Decomposition~\eqref{eq:decomp}, the waiting time at the $p$-th satellite of route $r_k$ carries the weight $m_k+1-p$, which is largest at the first positions of the routes. Processing the satellites in the order of $A$ gives the satellites visited early the first choice among the nearby centroids, so that short drone missions tend to be placed where waiting times are most costly. This ordering approximates the ordering by weight, since the weight also depends on the route length $m_k$.

\begin{figure}[H]
\centering
    \includegraphics[width = 0.9\textwidth]{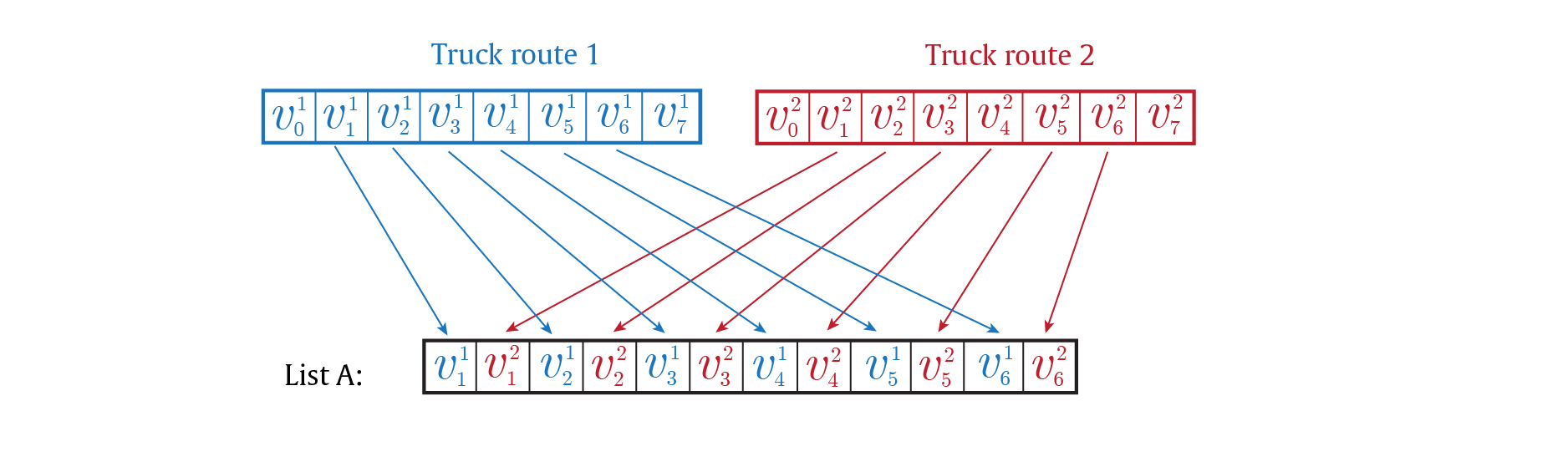}
    \caption{Construction of list $A$ for two truck routes}
    \label{fig2}
\end{figure}

Algorithm~\ref{alg:construct} summarizes the construction heuristic. Each insertion evaluates $O(n)$ candidate positions, and each assignment evaluates at most $n$ centroids, so that a construction requires $O(n^{2})$ time. The constructed solution satisfies Condition~\eqref{eq:F2} whenever the instance is feasible (Section~\ref{sec4:repr}), but it may violate Condition~\eqref{eq:F1}. In that case, the repair operator of Section~\ref{sec4:repair} is applied.

\begin{algorithm}[htbp]
\scriptsize
\caption{Greedy randomized construction heuristic}
\label{alg:construct}
\begin{algorithmic}[1]
\Require satellites $V_s$, centroids $M$, number of trucks $K$
\State $Y\gets n-K$; \quad $C\gets\max\{K-Y,0\}$ \label{alg:construct:seed1}
\If{$C>0$}
    \State $T\gets\Call{H1}{V_s,C}$ \Comment{$K$ single-satellite routes}
\Else
    \State $T\gets\Call{H2}{V_s}$
\EndIf \label{alg:construct:seed2}
\For{each satellite $v_i$ not used as a seed} \label{alg:construct:ins1}
    \State compute $c_i(a,b)$ by~\eqref{eq:inscost} for every pair $(v_a,v_b)$ of consecutive nodes of every route of $T$
    \State insert $v_i$ between the pair $(v_a,v_b)$ selected by rule~\eqref{eq:bias} with score $c_i(a,b)$
\EndFor \label{alg:construct:ins2}
\State build list $A$ from $T$ by~\eqref{eq:listA}; \quad $\Omega\gets\emptyset$ \label{alg:construct:ass1}
\For{each satellite $v_i$ in the order of $A$}
    \State select $j\in\{1,\dots,n\}\setminus\Omega$ by rule~\eqref{eq:bias} with score $t'_{ij}$
    \State $\sigma(i)\gets j$; \quad $\Omega\gets\Omega\cup\{j\}$
\EndFor \label{alg:construct:ass2}
\State \Return $S=(T,\sigma)$
\end{algorithmic}
\end{algorithm}

\subsection{Repair operator}
\label{sec4:repair}
The construction heuristic builds the truck routes before the assignment $\sigma$ is known, so the constructed solution may violate Condition~\eqref{eq:F1}. The repair operator restores feasibility by modifying the routes $T$ while keeping $\sigma$ fixed. The demand to be transported to satellite $v_i$ is then fixed and equal to $d_{\sigma(i)}$, and finding feasible routes amounts to partitioning the satellites into $K$ nonempty groups whose loads do not exceed $Q$. The sequence of the satellites within a route does not affect Condition~\eqref{eq:F1}.

Keeping $\sigma$ fixed entails no loss of generality. Since $\sigma$ is a permutation, the multiset of demands $\{d_{\sigma(i)}:i=1,\dots,n\}$ coincides with the multiset $\{d_j:j=1,\dots,n\}$ of centroid demands for every assignment $\sigma$. Hence, a feasible solution with assignment $\sigma$ exists if and only if the centroid demands can be packed into $K$ bins of capacity $Q$, and this condition is independent of $\sigma$. The requirement that every route contains at least one satellite imposes no additional restriction when $n\geq K$. If a packing leaves a bin empty, another bin contains at least two items, and moving one of them to the empty bin preserves feasibility, since each item fits in a bin on its own. Consequently, if the instance is feasible, a feasible solution exists for every assignment $\sigma$ produced by the construction heuristic. Deciding whether such a packing exists is the decision version of the bin packing problem, which is NP-complete \citep{garey1979computers}. The repair operator is therefore a heuristic, which may fail even if a feasible packing exists. It consists of three stages, each of which is applied only if the previous stages have not restored feasibility.

\paragraph{Stage 1: reinsertion}
Satellites are removed from every route whose load exceeds $Q$ until the load of the route satisfies Condition~\eqref{eq:F1}, keeping at least one satellite in each route. The removed satellites are reinserted one at a time by the randomized insertion of Section~\ref{sec4:constr}. A candidate position is excluded, i.e., its selection probability in rule~\eqref{eq:bias} is set to zero, if the residual capacity of its route is smaller than the demand of the satellite.

\paragraph{Stage 2: merging}
If some removed satellites cannot be reinserted, the operator searches for two routes whose combined load does not exceed $Q$ and merges them into a single route, which frees one truck. The satellites that could not be reinserted form the route of the freed truck, so that the number of routes remains $K$. This stage succeeds if such a pair of routes exists and the total demand of the remaining satellites does not exceed $Q$.

\paragraph{Stage 3: re-packing}
If the solution is still infeasible, all satellites are reassigned to the trucks by the packing heuristic of Algorithm~\ref{alg:repack}. The trucks are filled one at a time. Each truck first receives the unassigned satellite with the largest demand and then, as long as the capacity permits, the unassigned satellites with the smallest demands. Starting each truck with the largest remaining item ensures that large demands, which are the most difficult to place, are assigned while the trucks are still empty, and completing each truck with small items keeps the number of items per truck large. The heuristic fails if some satellite remains unassigned after $K$ trucks have been filled. Otherwise, the satellites assigned to each truck are sequenced into a route by the randomized insertion of Section~\ref{sec4:constr}.

\begin{algorithm}[htbp]
\scriptsize
\caption{Re-packing heuristic (Stage 3 of the repair operator)}
\label{alg:repack}
\begin{algorithmic}[1]
\Require assignment $\sigma$, demands $d_j$, number of trucks $K$, capacity $Q$
\State $R\gets\{1,\dots,n\}$ \Comment{indices of the unassigned satellites}
\For{$k=1$ \textbf{to} $K$}
    \If{$R=\emptyset$} \Return \textsc{Fail} \EndIf
    \State $i\gets\operatorname*{arg\,max}_{i'\in R}d_{\sigma(i')}$
    \State $B_k\gets\{i\}$; \quad $R\gets R\setminus\{i\}$
    \While{$R\neq\emptyset$ \textbf{and} $\sum_{i'\in B_k}d_{\sigma(i')}+\min_{i'\in R}d_{\sigma(i')}\leq Q$}
        \State $i\gets\operatorname*{arg\,min}_{i'\in R}d_{\sigma(i')}$
        \State $B_k\gets B_k\cup\{i\}$; \quad $R\gets R\setminus\{i\}$
    \EndWhile
\EndFor
\If{$R\neq\emptyset$} \Return \textsc{Fail} \EndIf
\State sequence the satellites $\{v_i:i\in B_k\}$ into route $r_k$ by randomized insertion, for all $k\in\mathcal{K}$
\State \Return $T=(r_1,\dots,r_K)$
\end{algorithmic}
\end{algorithm}

The check in Line 3 of Algorithm~\ref{alg:repack} cannot fail when $n\geq K$ and the demands are positive. Each item fits in a bin on its own (otherwise, the instance is infeasible), so the first item assigned to a truck never violates the capacity, and each truck receives at least one item. The heuristic therefore fails only if items remain after the last truck has been filled.

If the repair operator fails, a new solution is constructed with new random choices. Algorithm~\ref{alg:feasible} repeats the construction and the repair up to $R_{\max}$ times. If no feasible solution is obtained, the algorithm terminates and reports that no feasible solution was found. Since the success of Stages 1 and 2 depends on the routes and on the assignment produced by the construction heuristic, repeated attempts can succeed where an earlier attempt failed.

\begin{algorithm}[htbp]
\scriptsize
\caption{Construction of a feasible solution}
\label{alg:feasible}
\begin{algorithmic}[1]
\Require maximum number of attempts $R_{\max}$
\For{$r=1$ \textbf{to} $R_{\max}$}
    \State $S=(T,\sigma)\gets$ \Call{Construct}{\,} \Comment{Algorithm~\ref{alg:construct}}
    \If{$S$ satisfies Condition~\eqref{eq:F1}} \Return $S$ \EndIf
    \State $T\gets$ \Call{Repair}{$T,\sigma$} \Comment{Stages 1--3}
    \If{$T\neq\textsc{Fail}$} \Return $(T,\sigma)$ \EndIf
\EndFor
\State \Return \textsc{Fail}
\end{algorithmic}
\end{algorithm}

\subsection{Neighborhood operators and local search}
\label{sec4:ls}
The search uses seven neighborhood operators. Operators N1--N6 modify the truck routes $T$ and keep the assignment $\sigma$ fixed, whereas operator N7 modifies $\sigma$ and keeps $T$ fixed. Each call of an operator samples a single move at random and returns the resulting candidate solution; if the sampled move is not applicable to the current solution or the candidate violates Condition~\eqref{eq:F1}, no candidate is returned. Every candidate is required to keep at least one satellite in each route, as required by Constraints~\eqref{eq2b}. Each neighborhood contains $O(n^{2})$ solutions, whereas sampling a move and evaluating the candidate by Equation~\eqref{eq:decomp} requires $O(n)$ time. Sampling therefore keeps the cost of an operator call independent of the neighborhood size. This property is used both in the perturbation step of the main algorithm (Section~\ref{sec4:hybrid}) and, through repeated sampling, in the local search described below. Table~\ref{tab:operators} summarizes the operators, and Figure~\ref{fig:ops} illustrates operators N1--N6.

\begin{table}[htbp]
\centering
\scriptsize
\caption{Neighborhood operators}
\label{tab:operators}
\begin{tabular}{lllll}
\toprule
Operator & Modified component & Scope & Move & Capacity check \\
\midrule
N1 & $T$ & intra-route & relocate a satellite & not required \\
N2 & $T$ & intra-route & biased swap of two satellites & not required \\
N3 & $T$ & intra-route & 2-opt & not required \\
N4 & $T$ & inter-route & swap of two satellites & both routes \\
N5 & $T$ & inter-route & 2-opt* & both routes \\
N6 & $T$ & inter-route & relocate a satellite & receiving route \\
N7 & $\sigma$ & -- & biased swap of two centroids & if the trucks differ \\
\bottomrule
\end{tabular}
\end{table}

\paragraph{Intra-route operators}
Operators N1--N3 change the order of the satellites within a route $r_k$, which is selected uniformly at random among the routes with at least two satellites. Since the set of satellites of the route is unchanged, the load $L_k(S)$ is unchanged, and the candidate satisfies Condition~\eqref{eq:F1} whenever the current solution does.
\begin{itemize}
\item \textit{N1 -- Relocation.} A satellite of $r_k$, selected uniformly at random, is removed from its position and reinserted at a different position of $r_k$, also selected uniformly at random.
\item \textit{N2 -- Biased swap.} A satellite $v_i$ of $r_k$ is selected uniformly at random. A second satellite $v_j$ of $r_k$, $j\neq i$, is selected by rule~\eqref{eq:bias} with score $t_{ij}$, and the positions of $v_i$ and $v_j$ are exchanged. The bias favors swaps between nearby satellites, which change the travel times of the route only moderately.
\item \textit{N3 -- 2-opt.} Two positions $p$ and $q$ with $0\leq p<q-1\leq m_k-1$ are selected at random. The arcs $(v_{\pi_k(p)},v_{\pi_k(p+1)})$ and $(v_{\pi_k(q)},v_{\pi_k(q+1)})$ are replaced by the arcs $(v_{\pi_k(p)},v_{\pi_k(q)})$ and $(v_{\pi_k(p+1)},v_{\pi_k(q+1)})$, which reverses the segment $(v_{\pi_k(p+1)},\dots,v_{\pi_k(q)})$. Since the truck travel times are symmetric, the travel times of the arcs within the reversed segment are unchanged. The objective nevertheless changes through the two new arcs and through the positional weights in Decomposition~\eqref{eq:decomp} of the arcs and waiting times in the segment, whose order is reversed.
\end{itemize}

\paragraph{Inter-route operators}
Operators N4--N6 move satellites between two distinct routes $r_{k_1}$ and $r_{k_2}$, selected uniformly at random. Since they change the sets of satellites served by the two trucks, a candidate is returned only if both modified routes satisfy Condition~\eqref{eq:F1} and contain at least one satellite. The other routes are not affected.
\begin{itemize}
\item \textit{N4 -- Swap.} A satellite of $r_{k_1}$ and a satellite of $r_{k_2}$, each selected uniformly at random, exchange their positions.
\item \textit{N5 -- 2-opt*.} A cut position is selected at random in each route, which splits the routes into $r_{k_1}=(v_0,A_1,B_1,v_{n+1})$ and $r_{k_2}=(v_0,A_2,B_2,v_{n+1})$, where $A_1$, $B_1$, $A_2$, and $B_2$ are sequences of satellites. Two recombinations are generated. The first exchanges the tails of the routes, giving $(v_0,A_1,B_2,v_{n+1})$ and $(v_0,A_2,B_1,v_{n+1})$. The second connects the heads and the tails of the two routes, giving $(v_0,A_1,\overleftarrow{A_2},v_{n+1})$ and $(v_0,\overleftarrow{B_1},B_2,v_{n+1})$, where $\overleftarrow{X}$ denotes sequence $X$ in reverse order. Among the recombinations that satisfy Condition~\eqref{eq:F1} and leave both routes nonempty, the one with the smaller objective value is returned.
\item \textit{N6 -- Relocation between routes.} A satellite of $r_{k_2}$, which must contain at least two satellites, is selected uniformly at random, removed from $r_{k_2}$, and inserted into $r_{k_1}$ at a position selected uniformly at random.
\end{itemize}

\begin{figure}[htbp]
\centering
\begin{subfigure}{0.32\textwidth}
    \centering
    \includegraphics[width=\linewidth]{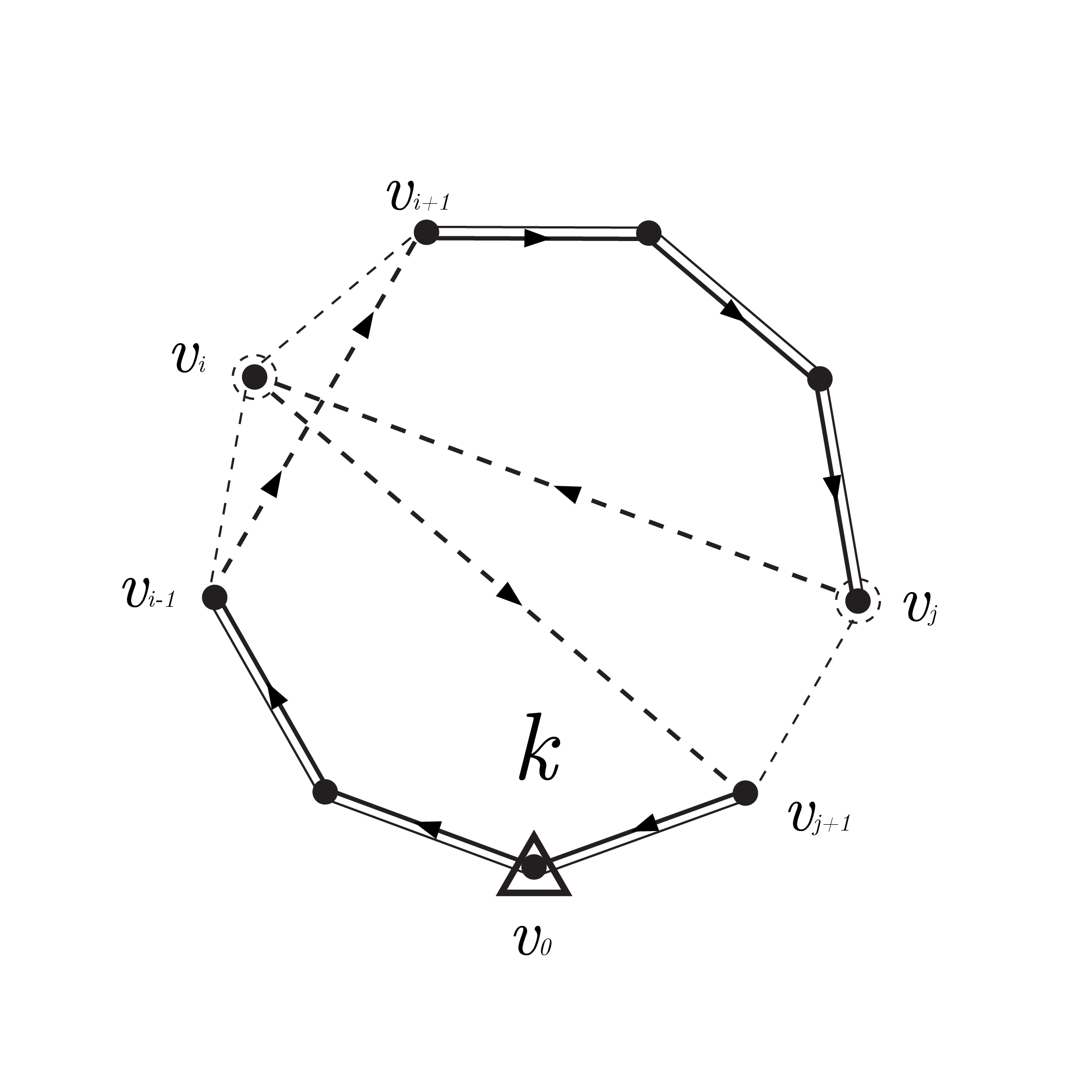}
    \caption{N1 -- Relocation (intra-route)}
    \label{fig:n1}
\end{subfigure}\hfill
\begin{subfigure}{0.32\textwidth}
    \centering
    \includegraphics[width=\linewidth]{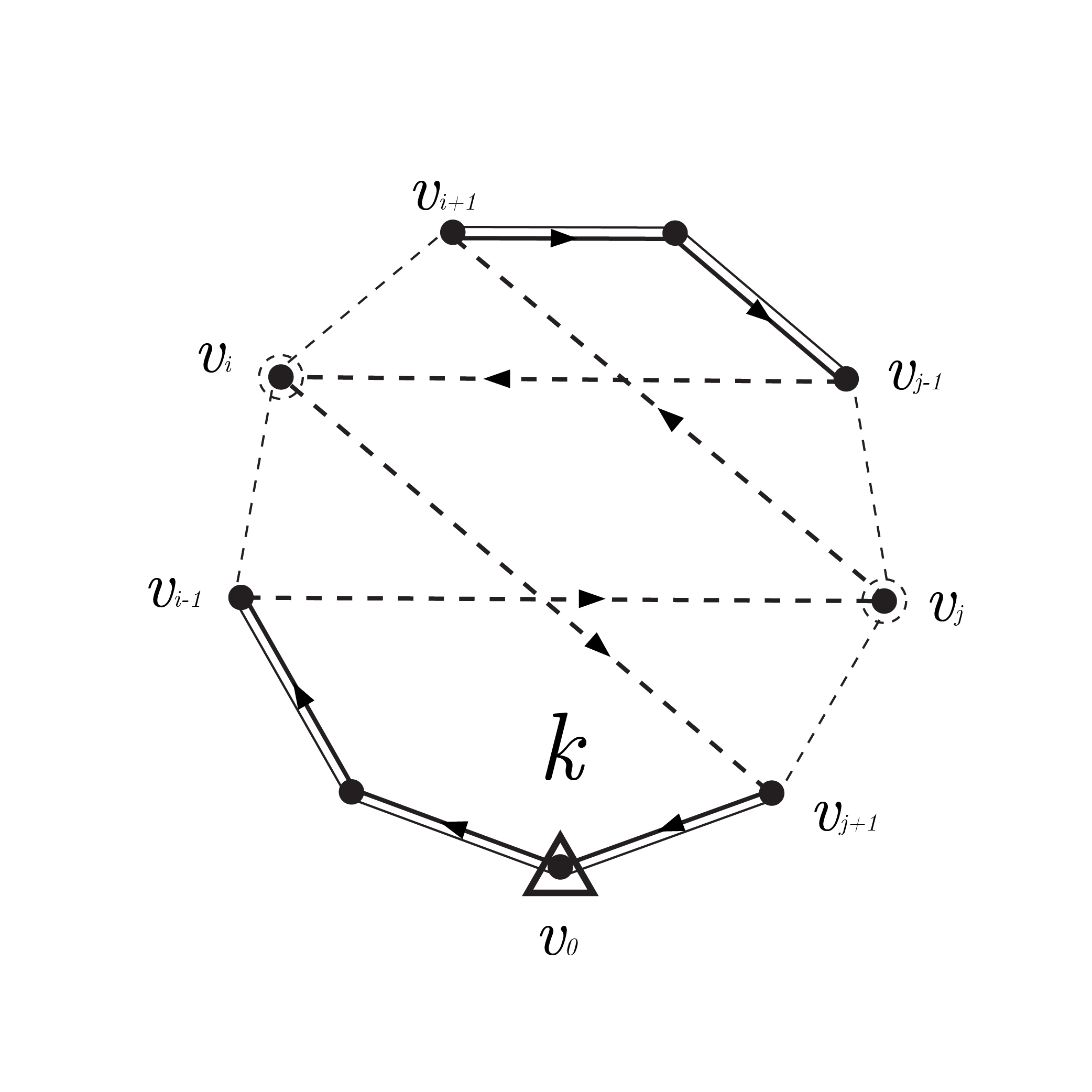}
    \caption{N2 -- Biased swap (intra-route)}
    \label{fig:n2}
\end{subfigure}\hfill
\begin{subfigure}{0.32\textwidth}
    \centering
    \includegraphics[width=\linewidth]{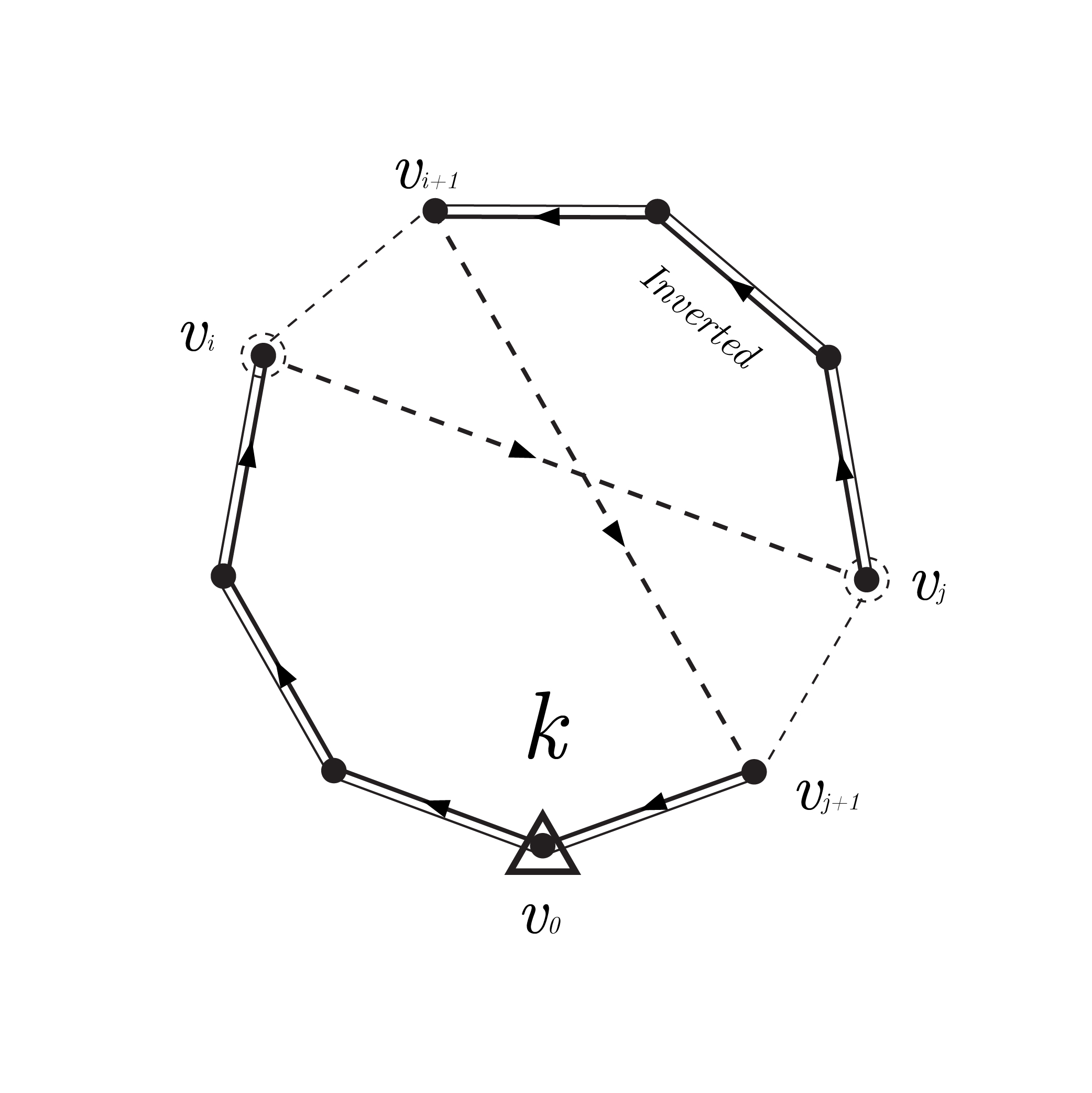}
    \caption{N3 -- 2-opt (intra-route)}
    \label{fig:n3}
\end{subfigure}

\vspace{2mm}
\begin{subfigure}{0.32\textwidth}
    \centering
    \includegraphics[width=\linewidth]{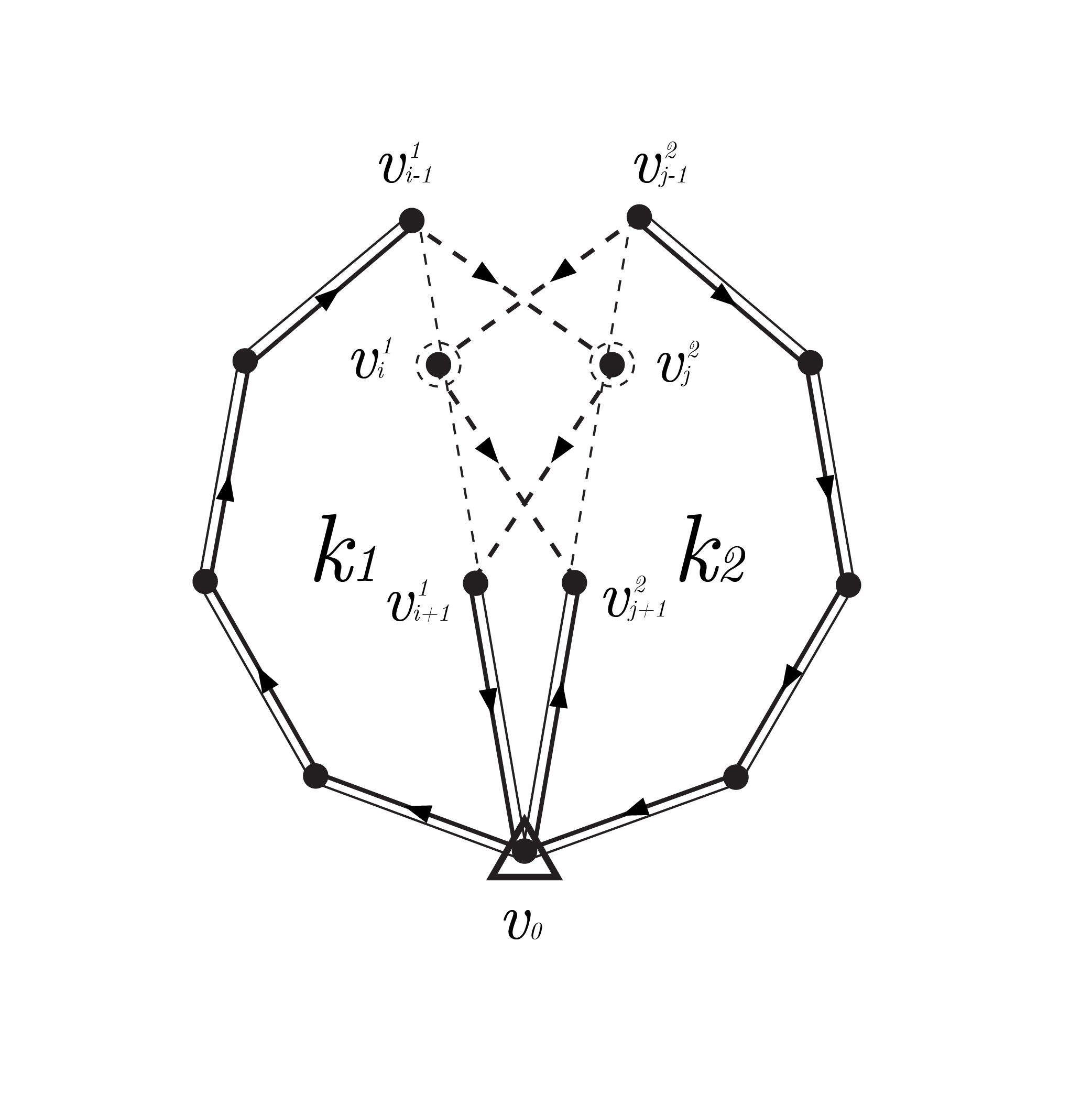}
    \caption{N4 -- Swap (inter-route)}
    \label{fig:n4}
\end{subfigure}\hfill
\begin{subfigure}{0.32\textwidth}
    \centering
    \includegraphics[width=\linewidth]{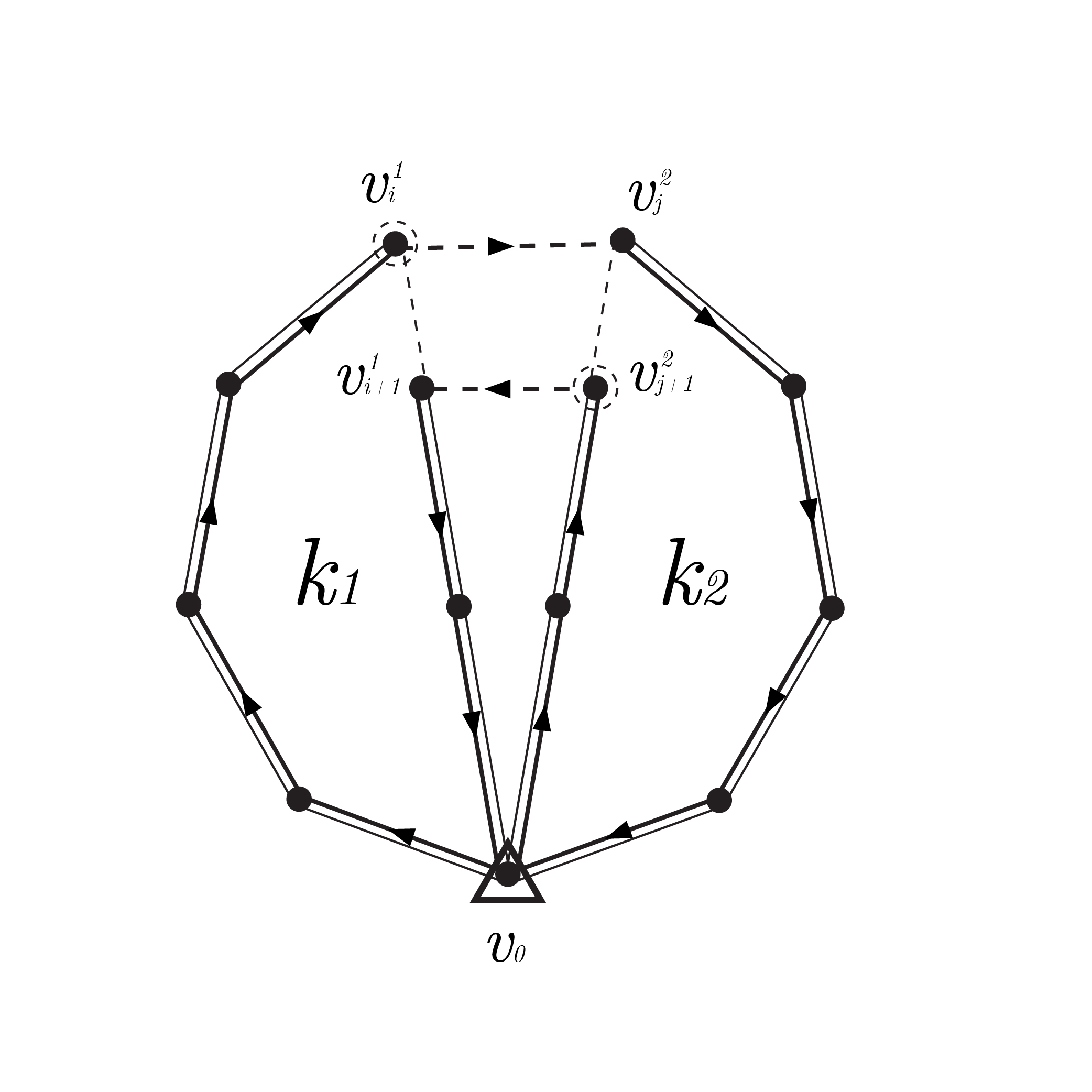}
    \caption{N5 -- 2-opt* (inter-route)}
    \label{fig:n5}
\end{subfigure}\hfill
\begin{subfigure}{0.32\textwidth}
    \centering
    \includegraphics[width=\linewidth]{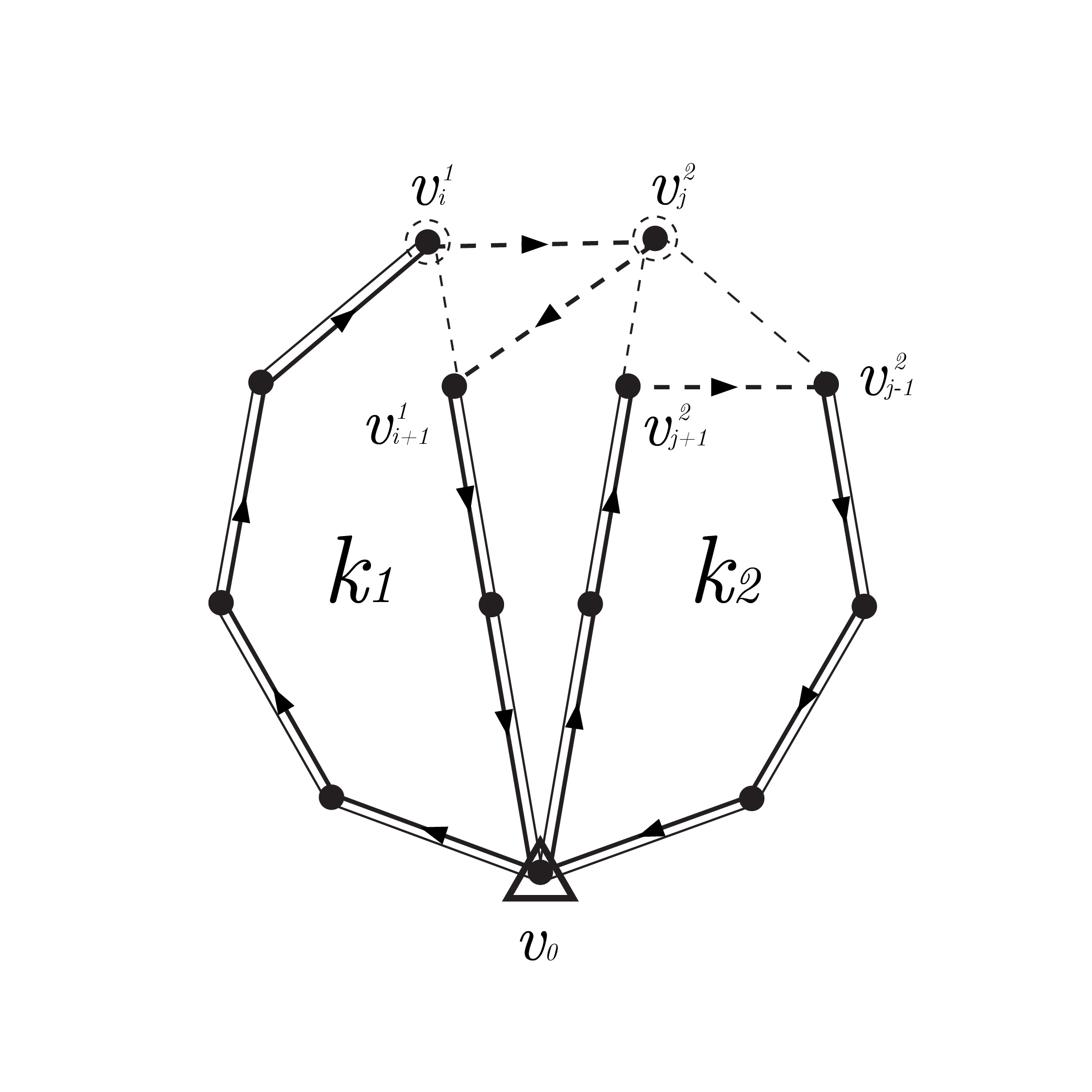}
    \caption{N6 -- Relocation (inter-route)}
    \label{fig:n6}
\end{subfigure}
\caption{Neighborhood operators for the truck routes}
\label{fig:ops}
\end{figure}

\paragraph{Assignment operator}
\textit{N7 -- Biased centroid swap.} A satellite $v_i$ is selected uniformly at random. A second satellite $v_j$, $j\neq i$, is selected by rule~\eqref{eq:bias}, with the Euclidean distance between the centroids $\mu_{\sigma(i)}$ and $\mu_{\sigma(j)}$ as score. The centroids of the two satellites are then exchanged: $\sigma'(i)=\sigma(j)$, $\sigma'(j)=\sigma(i)$, and $\sigma'(l)=\sigma(l)$ for $l\notin\{i,j\}$. Since $\sigma'$ is again a permutation, the candidate satisfies Condition~\eqref{eq:F2}. The bias favors exchanges between nearby centroids, for which the flight times of both satellites change only moderately. By Decomposition~\eqref{eq:decomp}, the move changes only the waiting times $w_i$ and $w_j$, so its effect on the objective can be computed from the positions of $v_i$ and $v_j$ in their routes. If $v_i$ and $v_j$ belong to the same route, the load of that route is unchanged. Otherwise, the loads of both routes change by $\pm(d_{\sigma(j)}-d_{\sigma(i)})$, and the candidate is returned only if both routes satisfy Condition~\eqref{eq:F1}. N7 is the only operator that modifies $\sigma$, and hence the only operator that can change the loads of the trucks without modifying $T$.

\paragraph{Local search}
The local search, outlined in Algorithm~\ref{alg:ls}, is a first-improvement descent over sampled neighborhoods. A sweep processes the operators in the fixed order N1--N7. For each operator, $\kappa$ candidates are sampled, and the best feasible candidate replaces the current solution if its objective value is strictly smaller. Sweeps are repeated until a complete sweep yields no improvement. Since every accepted candidate strictly decreases the objective value and the number of solutions is finite, the procedure terminates. Each sweep samples $7\kappa$ candidates, which requires $O(\kappa n)$ time. Because the neighborhoods are sampled rather than enumerated, the returned solution is not necessarily a local optimum with respect to the complete neighborhoods N1--N7. It is a solution for which a complete sweep of $\kappa$ samples per operator has found no improving candidate. The sampling parameter $\kappa$ controls the trade-off between the thoroughness of the descent and its cost.

\begin{algorithm}[htbp]
\scriptsize
\caption{Sampled first-improvement local search}
\label{alg:ls}
\begin{algorithmic}[1]
\Require feasible solution $S$, number of samples per operator $\kappa$
\Repeat
    \State \textit{improved} $\gets$ \textsc{false}
    \For{each operator $N\in(\mathrm{N1},\mathrm{N2},\dots,\mathrm{N7})$}
        \State sample $\kappa$ candidates from $N(S)$ and let $S'$ be the best feasible candidate
        \If{$S'$ exists \textbf{and} $f(S')<f(S)$}
            \State $S\gets S'$; \quad \textit{improved} $\gets$ \textsc{true}
        \EndIf
    \EndFor
\Until{\textit{improved} $=$ \textsc{false}}
\State \Return $S$
\end{algorithmic}
\end{algorithm}

\subsection{Path relinking and elite pool}
\label{sec4:pr}
Path relinking is applied to the second-echelon assignment $\sigma$, with the truck routes $T$ kept fixed. Two properties motivate this choice. First, the assignments form the set of permutations of $\{1,\dots,n\}$, on which the attributes in which two solutions differ, and hence a relinking path, are naturally defined. Second, by Decomposition~\eqref{eq:decomp}, an assignment can be evaluated with any fixed set of routes, so assignments found in different parts of the search can be combined with the current routes. For two assignments $\sigma$ and $\sigma'$, let
\begin{equation}
\label{eq:delta}
\Delta(\sigma,\sigma')=\{i\in\{1,\dots,n\}:\sigma(i)\neq\sigma'(i)\}
\end{equation}
denote the set of satellites whose centroids differ.

\paragraph{Relinking procedure}
Given the current routes $T$, an initial assignment $\sigma^{s}$, and a guiding assignment $\sigma^{g}$, path relinking (Algorithm~\ref{alg:pr}) transforms $\sigma^{s}$ into $\sigma^{g}$ by a sequence of swaps. Let $\sigma$ denote the current assignment on the path, initially $\sigma^{s}$. At each step, the satellite $v_i$ with the smallest index $i\in\Delta(\sigma,\sigma^{g})$ is selected. The satellite $v_u$ that currently serves the centroid required by the guide, i.e., $u=\sigma^{-1}(\sigma^{g}(i))$, is identified, and the centroids of $v_i$ and $v_u$ are exchanged. After the exchange, $\sigma(i)=\sigma^{g}(i)$, so each step is a move of the type performed by operator N7, directed towards the guiding solution. Every intermediate assignment is evaluated with the fixed routes $T$, and the best assignment on the path that satisfies Condition~\eqref{eq:F1} is returned, including the two endpoints. Since the loads $L_k$ depend on $\sigma$, intermediate assignments may violate Condition~\eqref{eq:F1} even if both endpoints satisfy it; such assignments are traversed but not returned. Figure~\ref{fig:pr} illustrates the procedure for an instance with seven satellites.

The path generated by the procedure has the following property. Consider the permutation $\rho=\sigma^{-1}\circ\sigma^{g}$, which maps each satellite index $i$ to the index of the satellite that serves centroid $\sigma^{g}(i)$ in the current assignment. The fixed points of $\rho$ are the satellites not in $\Delta(\sigma,\sigma^{g})$, and the satellites in $\Delta(\sigma,\sigma^{g})$ form $c$ cycles of $\rho$, each of length at least two. A step with selected satellite $v_i$ removes $i$ from its cycle, which reduces the length of that cycle by one and resolves a cycle of length two completely; satellite $v_i$ is not involved in any later step. A cycle of length $\ell$ is therefore resolved in $\ell-1$ steps, and the path from $\sigma^{s}$ to $\sigma^{g}$ consists of exactly
\begin{equation}
\label{eq:prlength}
\left|\Delta(\sigma^{s},\sigma^{g})\right|-c\;\leq\;n-1
\end{equation}
steps, where $c$ is the number of cycles of $(\sigma^{s})^{-1}\circ\sigma^{g}$ of length at least two. This number equals the minimum number of exchanges required to transform one permutation into the other \citep{deza1998metrics}. The procedure therefore follows a shortest path in the exchange distance, and each step reduces the distance to the guiding solution by exactly one. In the example of Figure~\ref{fig:pr}, the two assignments differ in seven satellites that form two cycles, and the path consists of five exchanges. Each intermediate assignment is evaluated in $O(n)$ time by Equation~\eqref{eq:decomp}, so a relinking path requires $O(n^{2})$ time. Since each exchange modifies only two waiting times, the evaluation can also be performed incrementally.

In the main algorithm, relinking is performed in both directions between the current assignment and an elite assignment: forward, from the current assignment towards the elite assignment, and backward, from the elite assignment towards the current assignment. The two paths connect the same endpoints but, since the satellites are selected in index order from different starting assignments, they generally traverse different intermediate assignments. The better of the two results is retained.

\begin{algorithm}[htbp]
\scriptsize
\caption{Path relinking on the second-echelon assignment}
\label{alg:pr}
\begin{algorithmic}[1]
\Require truck routes $T$, initial assignment $\sigma^{s}$, guiding assignment $\sigma^{g}$
\State $\sigma\gets\sigma^{s}$
\State $\sigma^{*}\gets$ the better of $\sigma^{s}$ and $\sigma^{g}$ among those for which $(T,\cdot)$ satisfies Condition~\eqref{eq:F1}
\While{$\Delta(\sigma,\sigma^{g})\neq\emptyset$}
    \State $i\gets\min\Delta(\sigma,\sigma^{g})$
    \State $u\gets\sigma^{-1}\!\left(\sigma^{g}(i)\right)$
    \State exchange $\sigma(i)$ and $\sigma(u)$ \Comment{now $\sigma(i)=\sigma^{g}(i)$}
    \If{$(T,\sigma)$ satisfies Condition~\eqref{eq:F1} \textbf{and} $f(T,\sigma)<f(T,\sigma^{*})$}
        \State $\sigma^{*}\gets$ a copy of $\sigma$
    \EndIf
\EndWhile
\State \Return $\sigma^{*}$
\end{algorithmic}
\end{algorithm}

\begin{figure}[H]
\centering
    \includegraphics[width = \textwidth]{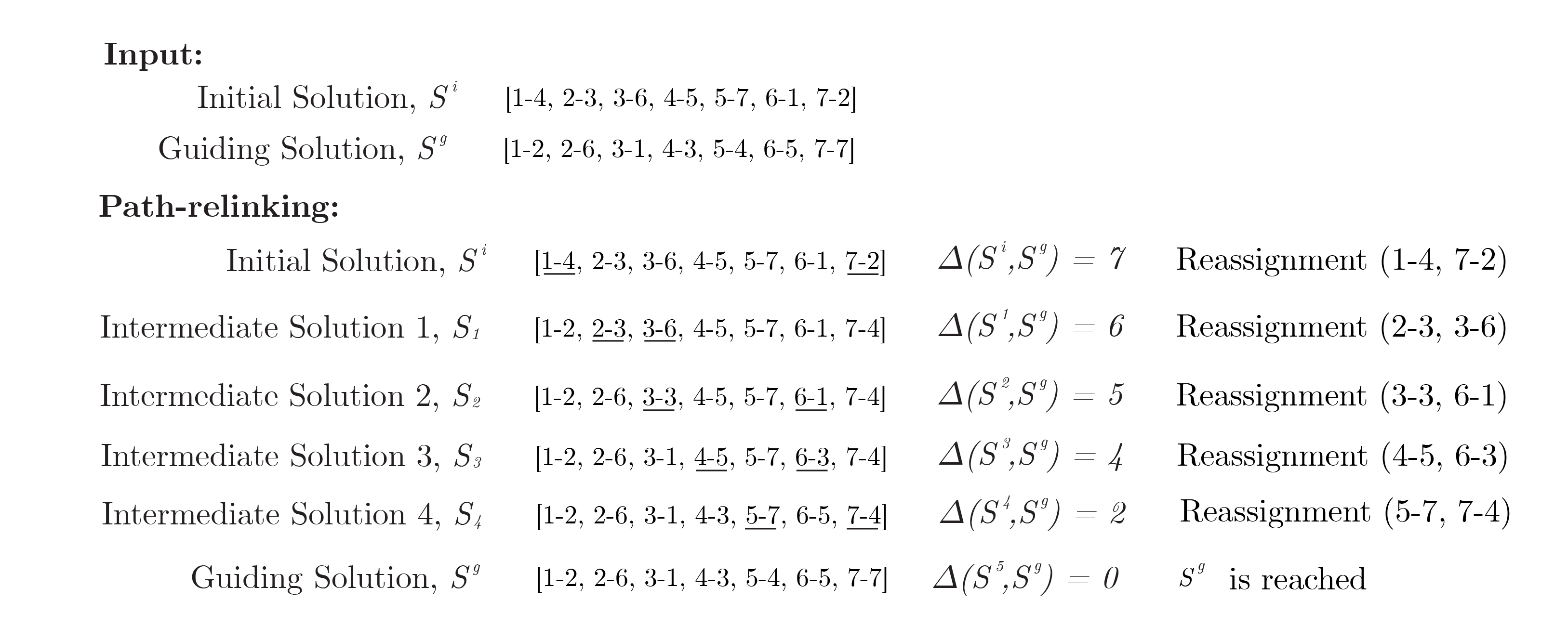}
    \caption{Path relinking between two assignments of seven satellites. A pair $i$--$j$ denotes $\sigma(i)=j$; underlined pairs are exchanged in the next step.}
    \label{fig:pr}
\end{figure}

\paragraph{Elite pool}
The elite pool $\mathcal{E}$ stores at most $\omega$ distinct assignments; the truck routes are not stored. When an elite assignment is used, it is evaluated with the current routes $T$. This allows good assignments found with earlier routes to be recombined with the current routes, at the price that an elite assignment may violate Condition~\eqref{eq:F1} for the current routes; Algorithm~\ref{alg:pr} accounts for this by returning only assignments that satisfy the condition. The pool is initialized with the assignment of the first solution obtained after the initial local search. Let $S^{*}$ denote the best solution found so far. The current assignment $\sigma$ enters the pool if $\sigma\notin\mathcal{E}$ and
\begin{equation}
\label{eq:entry}
f(T,\sigma)<(1+\beta)\,f(S^{*}),
\end{equation}
where $\beta\geq 0$ determines how much worse than the best solution an elite solution may be, and thereby balances the quality and the diversity of the pool. The result of each relinking call enters the pool if it is not already contained in it. If the pool is full, the new assignment replaces the member that has been in the pool longest.

\subsection{The GRASP-ILS-PR algorithm}
\label{sec4:hybrid}
Algorithm~\ref{alg:graspils} states the complete GRASP-ILS-PR algorithm, and Table~\ref{tab:params} lists its parameters, whose values are reported in Section~\ref{Sec5}. Before the search starts, the algorithm verifies the instance conditions $n\geq K$ and $\max_{j\in M}d_j\leq UP$ (Section~\ref{sec4:repr}); if either condition is violated, the instance is infeasible (Lines~\ref{ln:chk1}--\ref{ln:chk2}). An initial feasible solution is obtained by Algorithm~\ref{alg:feasible} and improved by the local search of Algorithm~\ref{alg:ls}. This solution becomes the current solution $S$ and the best solution $S^{*}$, and its assignment initializes the elite pool (Lines~\ref{ln:init1}--\ref{ln:init2}).

Each iteration starts with a perturbation step (Lines~\ref{ln:pert1}--\ref{ln:pert2}). An operator is selected uniformly at random from N1--N7, and one candidate $S'$ is sampled from its neighborhood. The candidate replaces the current solution if it is feasible and $f(S')\leq f(S)$. Accepting candidates of equal objective value allows the search to move across plateaus of the objective function. Every $\varphi_{\mathrm{LS}}$ iterations, the current solution is improved by the local search (Line~\ref{ln:ls}). Every $\varphi_{\mathrm{PR}}$ iterations, an elite assignment $\sigma^{e}$ is selected uniformly at random from $\mathcal{E}$, and forward and backward path relinking are performed between $\sigma$ and $\sigma^{e}$ with the current routes $T$ (Lines~\ref{ln:pr1}--\ref{ln:pr2}). The better result $\sigma^{p}$ is offered to the elite pool, and it replaces the current assignment if $(T,\sigma^{p})$ is feasible and strictly better than $S$. The current assignment then enters the elite pool if it satisfies the entry rule~\eqref{eq:entry} (Lines~\ref{ln:pool1}--\ref{ln:pool2}).

At the end of each iteration, the best solution is updated (Lines~\ref{ln:inc1}--\ref{ln:inc2}). If $f(S)<f(S^{*})$, the current solution becomes the best solution and the stagnation counter $s$ is reset to zero; otherwise, $s$ is increased by one. If the best solution has not improved for $\rho$ consecutive iterations, the search is restarted (Lines~\ref{ln:rs1}--\ref{ln:rs2}). A new feasible solution is constructed by Algorithm~\ref{alg:feasible}, improved by Algorithm~\ref{alg:ls}, and adopted as the current solution. The elite pool and the best solution are retained across restarts, so that path relinking can combine assignments found before and after a restart. The search terminates when the time limit $t_{\max}$ is reached or $n_{\max}$ iterations have been performed, and returns $S^{*}$.

\begin{algorithm}[htbp]
\scriptsize
\caption{GRASP-ILS-PR}
\label{alg:graspils}
\begin{algorithmic}[1]
\Require parameters $t_{\max},n_{\max},\kappa,\varphi_{\mathrm{LS}},\varphi_{\mathrm{PR}},\rho,\omega,\beta,R_{\max}$
\If{$n<K$ \textbf{or} $\max_{j\in M}d_j>UP$} \label{ln:chk1}
    \State \Return \textsc{Infeasible}
\EndIf \label{ln:chk2}
\State $S\gets$ \Call{ConstructFeasible}{$R_{\max}$} \Comment{Algorithm~\ref{alg:feasible}} \label{ln:init1}
\State $S\gets$ \Call{LocalSearch}{$S,\kappa$} \Comment{Algorithm~\ref{alg:ls}}
\State $S^{*}\gets S$; \quad $\mathcal{E}\gets\{\sigma\}$; \quad $s\gets 0$; \quad $i\gets 0$ \label{ln:init2}
\While{elapsed time $<t_{\max}$ \textbf{and} $i<n_{\max}$}
    \State $i\gets i+1$
    \State select $N\in\{\mathrm{N1},\dots,\mathrm{N7}\}$ uniformly at random and sample one candidate $S'$ from $N(S)$ \label{ln:pert1}
    \If{$S'$ exists \textbf{and} $S'$ is feasible \textbf{and} $f(S')\leq f(S)$}
        \State $S\gets S'$
    \EndIf \label{ln:pert2}
    \If{$i \bmod \varphi_{\mathrm{LS}}=0$}
        \State $S\gets$ \Call{LocalSearch}{$S,\kappa$} \label{ln:ls}
    \EndIf
    \If{$i \bmod \varphi_{\mathrm{PR}}=0$} \label{ln:pr1}
        \State select $\sigma^{e}\in\mathcal{E}$ uniformly at random
        \State $\sigma^{f}\gets$ \Call{PathRelinking}{$T,\sigma,\sigma^{e}$} \Comment{Algorithm~\ref{alg:pr}}
        \State $\sigma^{b}\gets$ \Call{PathRelinking}{$T,\sigma^{e},\sigma$}
        \State $\sigma^{p}\gets\operatorname*{arg\,min}_{\sigma'\in\{\sigma^{f},\sigma^{b}\}}f(T,\sigma')$
        \If{$\sigma^{p}\notin\mathcal{E}$}
            \State \Call{UpdatePool}{$\mathcal{E},\sigma^{p}$}
        \EndIf
        \If{$(T,\sigma^{p})$ satisfies Condition~\eqref{eq:F1} \textbf{and} $f(T,\sigma^{p})<f(S)$}
            \State $\sigma\gets\sigma^{p}$
        \EndIf \label{ln:pr2}
    \EndIf
    \If{$\sigma\notin\mathcal{E}$ \textbf{and} $f(S)<(1+\beta)f(S^{*})$} \label{ln:pool1}
        \State \Call{UpdatePool}{$\mathcal{E},\sigma$}
    \EndIf \label{ln:pool2}
    \If{$f(S)<f(S^{*})$} \label{ln:inc1}
        \State $S^{*}\gets S$; \quad $s\gets 0$
    \Else
        \State $s\gets s+1$
    \EndIf \label{ln:inc2}
    \If{$s\geq\rho$} \label{ln:rs1}
        \State $S\gets$ \Call{ConstructFeasible}{$R_{\max}$}
        \State $S\gets$ \Call{LocalSearch}{$S,\kappa$}; \quad $s\gets 0$
    \EndIf \label{ln:rs2}
\EndWhile
\State \Return $S^{*}$
\end{algorithmic}
\end{algorithm}

\begin{table}[htbp]
\centering
\scriptsize
\caption{Parameters of GRASP-ILS-PR}
\label{tab:params}
\begin{tabular}{ll}
\toprule
Symbol & Meaning \\
\midrule
$t_{\max}$ & Time limit (primary stopping criterion) \\
$n_{\max}$ & Maximum number of iterations \\
$\kappa$ & Number of candidates sampled per operator in a local search sweep \\
$\varphi_{\mathrm{LS}}$ & Number of iterations between two calls of the local search \\
$\varphi_{\mathrm{PR}}$ & Number of iterations between two calls of path relinking \\
$\rho$ & Number of iterations without improvement of $S^{*}$ before a restart \\
$\omega$ & Maximum size of the elite pool \\
$\beta$ & Elite pool entry threshold, Equation~\eqref{eq:entry} \\
$R_{\max}$ & Maximum number of construction attempts, Algorithm~\ref{alg:feasible} \\
\bottomrule
\end{tabular}
\end{table}

\paragraph{Feasibility}
The current solution and the best solution remain feasible throughout the search. The initial solution and every restart solution are feasible by construction of Algorithm~\ref{alg:feasible}. The local search and the perturbation step accept only feasible candidates, and a relinking result replaces the current assignment only if it satisfies Condition~\eqref{eq:F1}. Condition~\eqref{eq:F2} holds for every assignment once the instance conditions have been verified. Since the best solution is only replaced by the current solution, it is feasible as well.

\paragraph{Search dynamics and computational effort}
Between two restarts, the objective value of the current solution never increases: the perturbation step accepts only non-worsening candidates, and the local search and path relinking accept only improving ones. Diversification is provided by three mechanisms: the acceptance of equal-valued candidates, which lets the search traverse plateaus; path relinking, which combines the current routes with assignments found earlier in the search; and restarts from new randomized constructions. The computational effort is dominated by the periodic components. A perturbation step requires $O(n)$ time, a local search sweep $O(\kappa n)$ time, and a path relinking call $O(n^{2})$ time (Sections~\ref{sec4:ls} and~\ref{sec4:pr}). Because the local search and path relinking are called only every $\varphi_{\mathrm{LS}}$ and $\varphi_{\mathrm{PR}}$ iterations, their cost is spread over many inexpensive perturbation steps. By contrast, a conventional GRASP with path relinking performs a complete construction, a complete local search, and two relinking paths in every iteration. The resulting difference in the number of iterations completed within a given time limit is analyzed in Section~\ref{Sec5}.

\section{Computational experiments}
\label{Sec5}

\subsection{Benchmark instances}
\label{sec5:instances}
Since no benchmark instances exist for the 2E-CTVRP, we generate a new set of instances that reflects the spatial characteristics of victim locations after a disaster. The generator follows the approaches used for two-echelon vehicle routing problems \citep{crainic2010two, christofides1969algorithm, vu2022two} and for truck--drone routing \citep{murray2015flying}. Victims tend to move to safer locations and form clusters within the affected area; these clusters are generated following \citet{perboli2011two}.

The affected area is a circle with a radius of 1000 distance units, corresponding to 10\,km. The centers of $n_c$ victim clusters are placed uniformly at random within a concentric circle with a radius of 850 units. The radius of each cluster is selected uniformly from $\{110,120,130,140,150\}$, and a cluster with one of these radii contains 80, 90, 100, 120, or 130 victims, respectively. The victims are located uniformly at random within their cluster, and the demand of each victim is drawn uniformly from $[0,25]$. The depot is located to the right of the affected area. Its $x$-coordinate lies between 2000 and 3000 units from the boundary of the affected area, and its $y$-coordinate deviates by at most 500 units from that of the center of the affected area. The $n$ satellites are divided into two groups of $n_1$ and $n_2$ satellites, located at random on the arcs of two circles concentric with the affected area, with radii of 1000 and 1200 units, on the side facing the depot. Figure~\ref{fig:instance} illustrates the resulting spatial distribution.

\begin{figure}[htbp]
\centering
    \includegraphics[width = 0.7\textwidth]{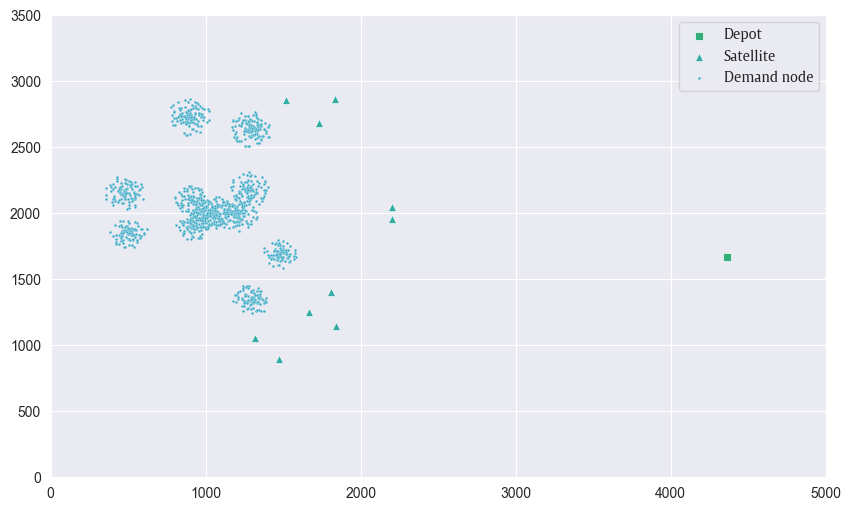}
    \caption{Spatial distribution of the depot, the satellites, and the victims in a benchmark instance}
    \label{fig:instance}
\end{figure}

The victims are aggregated into $n$ clusters, one per satellite, by the $k$-means algorithm with a fixed random seed. The centroid $\mu_j$ of each cluster is the drone delivery point, and its demand $d_j$ is the total demand of the victims in the cluster. When the number $n_c$ of generated clusters differs from the number $n$ of satellites, the clusters used in the optimization differ from the generated ones. The resulting centroids and demands are part of the published instances, so that the results can be reproduced without repeating the clustering.

Truck travel times are computed from Manhattan distances, reflecting movement along the road network, with a truck speed of 40\,km/h. Drone flight times are computed from Euclidean distances, and five drone speeds $v_d\in\{40,50,60,70,80\}$\,km/h are considered. All travel times, and hence all objective values, are expressed in hours. Each drone has a payload capacity of $P=1000$ units. The number of drones $U$ at a satellite is set according to the average cluster demand $D/n$, where $D=\sum_{j\in M}d_j$ is the total demand. Two truck fleets are considered for each instance: a fleet of large trucks with capacity $Q=12\,500$ and a fleet of small trucks with capacity $Q=3500$. The number of trucks $K$ is set according to the total demand. Across all instances, the ratio $(D/n)/(UP)$ between the average cluster demand and the aggregate drone payload ranges from 0.27 to 0.45, and the ratio $D/(KQ)$ between the total demand and the capacity of the fleet ranges from 0.38 to 0.82. In all instances, the largest cluster demand does not exceed $UP$, so that Condition~\eqref{eq:F2} is satisfied, and the cluster demands can be packed into the $K$ trucks, so that all instances are feasible.

Each combination of $n$, fleet type, and drone speed defines one instance. An instance is named M-$n_c$-$n$-$v_d$-$f$, where $f=1$ denotes the large-truck fleet and $f=2$ the small-truck fleet. For example, instance M-15-13-60-2 contains 15 generated victim clusters, 13 satellites, drones flying at 60\,km/h, and small trucks. The instances of a set M-$n_c$-$n$-$f$ differ only in the drone speed. Two instance sets are generated.
\begin{itemize}
\item The \textit{main set} contains 90 instances with nine values of $n$ between 3 and 15. It is used for the comparison with the MILP (Section~\ref{sec5:comparison}), since it includes instances that Gurobi solves to optimality as well as instances for which it reaches the time limit. Table~\ref{table1} summarizes its characteristics.
\item The \textit{large-scale set} contains 20 instances with $n=20$ and $n=50$ satellites, in which the number of generated clusters equals the number of satellites and the satellites are divided equally between the two groups. It is used to assess the scalability of the metaheuristics beyond the size that the MILP can handle (Section~\ref{sec5:large}). Table~\ref{table_large} summarizes its characteristics.
\end{itemize}
Since the ratios $(D/n)/(UP)$ and $D/(KQ)$ lie within the same ranges in both sets, the large-scale instances differ from the main set essentially in their size. All instances are available at \url{https://github.com/andngdtudk/2E_CTVRP}.

\begin{table}[htbp]
\centering
\scriptsize
\caption{Characteristics of the main instance sets: number of generated victim clusters ($n_c$), number of satellites ($n$) in the two groups ($n_1$, $n_2$), number of victims ($n_v$), total demand ($D$), truck capacity ($Q$), number of trucks ($K$), and number of drones per satellite ($U$). The drone payload capacity is $P=1000$ in all instances.}
\label{table1}
\begin{tabular}{lrrrrrrrrr}
\toprule
Set & $n_c$ & $n$ & $n_1$ & $n_2$ & $n_v$ & $D$ & $Q$ & $K$ & $U$ \\
\midrule
M-05-03-1 & 5  & 3  & 1 & 2 & 460  & 5650    & 12\,500 & 1 & 6 \\
M-05-03-2 & 5  & 3  & 1 & 2 & 460  & 5650    & 3500    & 2 & 6 \\
M-05-04-1 & 5  & 4  & 2 & 2 & 520  & 6403    & 12\,500 & 1 & 4 \\
M-05-04-2 & 5  & 4  & 2 & 2 & 520  & 6403    & 3500    & 3 & 4 \\
M-08-05-1 & 8  & 5  & 2 & 3 & 730  & 8972    & 12\,500 & 1 & 4 \\
M-08-05-2 & 8  & 5  & 2 & 3 & 730  & 8972    & 3500    & 4 & 4 \\
M-10-07-1 & 10 & 7  & 4 & 3 & 990  & 12\,146 & 12\,500 & 2 & 6 \\
M-10-07-2 & 10 & 7  & 4 & 3 & 990  & 12\,146 & 3500    & 5 & 6 \\
M-10-08-1 & 10 & 8  & 4 & 4 & 940  & 11\,576 & 12\,500 & 2 & 5 \\
M-10-08-2 & 10 & 8  & 4 & 4 & 940  & 11\,576 & 3500    & 5 & 4 \\
M-09-09-1 & 9  & 9  & 4 & 5 & 780  & 9587    & 12\,500 & 2 & 3 \\
M-09-09-2 & 9  & 9  & 4 & 5 & 780  & 9587    & 3500    & 4 & 3 \\
M-12-11-1 & 12 & 11 & 6 & 5 & 1240 & 15\,238 & 12\,500 & 2 & 5 \\
M-12-11-2 & 12 & 11 & 6 & 5 & 1240 & 15\,238 & 3500    & 7 & 5 \\
M-15-13-1 & 15 & 13 & 6 & 7 & 1420 & 17\,273 & 12\,500 & 3 & 5 \\
M-15-13-2 & 15 & 13 & 6 & 7 & 1420 & 17\,273 & 3500    & 7 & 5 \\
M-17-15-1 & 17 & 15 & 7 & 8 & 1650 & 20\,000 & 12\,500 & 3 & 4 \\
M-17-15-2 & 17 & 15 & 7 & 8 & 1650 & 20\,000 & 3500    & 7 & 4 \\
\bottomrule
\end{tabular}
\end{table}

\begin{table}[htbp]
\centering
\scriptsize
\caption{Characteristics of the large-scale instance sets: number of generated victim clusters ($n_c$), number of satellites ($n$) in the two groups ($n_1$, $n_2$), number of victims ($n_v$), total demand ($D$), truck capacity ($Q$), number of trucks ($K$), and number of drones per satellite ($U$). The drone payload capacity is $P=1000$ in all instances.}
\label{table_large}
\begin{tabular}{lrrrrrrrrr}
\toprule
Set & $n_c$ & $n$ & $n_1$ & $n_2$ & $n_v$ & $D$ & $Q$ & $K$ & $U$ \\
\midrule
M-20-20-1 & 20 & 20 & 10 & 10 & 2050 & 24\,065 & 12\,500 & 3  & 4 \\
M-20-20-2 & 20 & 20 & 10 & 10 & 2050 & 24\,065 & 3500    & 11 & 4 \\
M-50-50-1 & 50 & 50 & 25 & 25 & 4830 & 58\,298 & 12\,500 & 6  & 3 \\
M-50-50-2 & 50 & 50 & 25 & 25 & 4830 & 58\,298 & 3500    & 26 & 3 \\
\bottomrule
\end{tabular}
\end{table}

\subsection{Experimental setup and compared methods}
\label{sec5:setup}
All algorithms were implemented in Python~3.10.12, and the MILP formulation of Section~\ref{Sec3} was solved with Gurobi~13.0.2. The experiments were performed on an AMD Ryzen Threadripper 3960X (24 cores, 48 threads) with 125\,GB RAM, running Ubuntu 22.04.3 LTS. The metaheuristic runs were executed in parallel, 14 at a time, with one thread per run. On the main instance set, Gurobi was run with the default number of threads (up to 48), solving one instance at a time; the settings for the large-scale instances are given in Section~\ref{sec5:large}.

\paragraph{Compared methods}
Four methods are compared on the 90 instances of the main set described in Section~\ref{sec5:instances}.

\begin{itemize}
    \item \textit{MILP.} Formulation~\eqref{eq1}--\eqref{eq18c} is solved by Gurobi with a time limit of 3600\,s and a relative MIP gap tolerance of zero. For each instance, we report the objective value $z^{\mathrm{G}}$ of the best solution found (the incumbent), the best lower bound $z^{\mathrm{LB}}$, the MIP gap $(z^{\mathrm{G}}-z^{\mathrm{LB}})/z^{\mathrm{G}}$ at termination, and the runtime.

    \item \textit{GRASP-EvPR.} A conventional GRASP with evolutionary path relinking \citep{resende2010grasp}. It uses the construction heuristic, repair operator, local search, and path relinking procedure of Sections~\ref{sec4:constr}--\ref{sec4:pr}, but combines them in the standard way: every iteration constructs a new solution, improves it by local search, and relinks it in both directions with an elite solution selected at random from the pool, and every $i_{\mathrm{ev}}=100$ iterations the elite pool is evolved by relinking all pairs of its members. The pool size and the entry threshold, $\omega=20$ and $\beta=0.008$, were set in preliminary experiments. Since GRASP-EvPR differs from GRASP-ILS-PR only in the organization of the search, the comparison of the two methods measures the effect of carrying a single solution across iterations and invoking the expensive components periodically (Section~\ref{sec4:prelim}).

    \item \textit{Simulated annealing (SA).} As an alternative single-trajectory strategy, we implemented a simulated annealing algorithm \citep{kirkpatrick1983optimization} built from the same components as GRASP-ILS-PR: the construction heuristic and repair operator of Sections~\ref{sec4:constr} and~\ref{sec4:repair}, the seven neighborhood operators of Section~\ref{sec4:ls}, and the evaluation of Section~\ref{sec4:repr}. In each iteration, an operator is selected uniformly at random and one candidate is sampled from its neighborhood. A candidate that worsens the objective value by $\delta>0$ is accepted with probability $\exp(-\delta/\tau)$, where $\tau$ is the current temperature, and the best solution found is returned at the end of the run. The temperature decreases geometrically with the elapsed time $t$,
    \begin{equation*}
    \tau(t)=\tau_0\left(\frac{\tau_{\min}}{\tau_0}\right)^{t/t_{\max}},
    \end{equation*}
    from $\tau_0=0.10\,z_0$ at the start of the run to $\tau_{\min}=10^{-4}z_0$ at the time limit, where $z_0$ is the objective value of the initial solution. Expressing $\tau_0$ and $\tau_{\min}$ relative to $z_0$ makes the acceptance behavior comparable across instances with different objective scales. Linking the schedule to the elapsed time rather than to the number of iterations ensures that the whole temperature range is traversed within $t_{\max}$ on every instance, although the number of iterations completed by SA varies by a factor of 16 across instances (Table~\ref{tab:iterations}). SA thus differs from GRASP-ILS-PR in its acceptance criterion and in the absence of periodic local search, path relinking, and restarts.

   \item \textit{GRASP-ILS-PR.} The algorithm of Section~\ref{Sec4:GRA}, with the parameter values of Table~\ref{tab:paramvalues}.
    
\end{itemize}

\paragraph{Experimental protocol}
Each metaheuristic is run ten times on each instance with different random seeds and a time limit of $t_{\max}=30$\,s per run. The iteration limits are set high enough never to be reached, so that the time limit is the binding stopping criterion for all three metaheuristics. A time limit of 30\,s is compatible with the repeated re-planning required in the response phase of a disaster and allows the metaheuristics to be compared on equal terms. All methods solve identical instances, including the same clustering of the victims. The parameter values of GRASP-ILS-PR, reported in Table~\ref{tab:paramvalues}, were set in preliminary experiments; their influence is analyzed in Section~\ref{sec5:sensitivity}.

\begin{table}[htbp]
\centering
\scriptsize
\caption{Parameter values of GRASP-ILS-PR.}
\label{tab:paramvalues}
\begin{tabular}{llr}
\toprule
Symbol & Meaning & Value \\
\midrule
$t_{\max}$ & time limit & 30\,s \\
$n_{\max}$ & maximum number of iterations & $10^{8}$ \\
$\kappa$ & candidates sampled per operator in a local search sweep & 5 \\
$\varphi_{\mathrm{LS}}$ & iterations between two calls of the local search & 30 \\
$\varphi_{\mathrm{PR}}$ & iterations between two calls of path relinking & 10 \\
$\rho$ & iterations without improvement before a restart & 2000 \\
$\omega$ & maximum size of the elite pool & 20 \\
$\beta$ & elite pool entry threshold & 0.007 \\
$R_{\max}$ & maximum number of construction attempts & 20 \\
\bottomrule
\end{tabular}
\end{table}

\paragraph{Performance measures}
For a metaheuristic, let $z^{\mathrm{best}}$ and $z^{\mathrm{avg}}$ denote the best and the average objective values over the ten runs on an instance. Solution quality is measured by the relative deviations from the Gurobi incumbent,
\begin{equation}
\label{eq:dev}
\Delta_{\mathrm{best}}=100\,\frac{z^{\mathrm{best}}-z^{\mathrm{G}}}{z^{\mathrm{G}}},\qquad
\Delta_{\mathrm{avg}}=100\,\frac{z^{\mathrm{avg}}-z^{\mathrm{G}}}{z^{\mathrm{G}}}.
\end{equation}
If Gurobi proves optimality, $\Delta_{\mathrm{best}}$ and $\Delta_{\mathrm{avg}}$ are optimality gaps; otherwise, they measure the deviation from the best solution found by Gurobi within one hour, and negative values indicate that the metaheuristic found a better solution. The best-known solution (BKS) of an instance is the best solution found by any of the four methods. Two objective values are considered equal if their relative difference does not exceed $10^{-5}$; this tolerance absorbs the deviations of the Gurobi incumbents caused by the feasibility and integrality tolerances of the solver, which reach about $2\times10^{-7}$ in our experiments, and treats solutions whose objective values differ by less than 0.001\% as equivalent. The differences between GRASP-ILS-PR and each of the other two metaheuristics are tested with the Wilcoxon signed-rank test \citep{wilcoxon1945individual} on the paired per-instance values of $z^{\mathrm{best}}$ and of $z^{\mathrm{avg}}$, discarding pairs with zero difference, at a significance level of 0.05.

To exclude errors in the solution evaluation of Section~\ref{sec4:repr}, the best solutions of GRASP-ILS-PR and SA on all 110 instances were verified independently. For each solution, the truck routes and the assignment of centroids to satellites were fixed in the MILP of Section~\ref{Sec3}, and the resulting model was solved by Gurobi. In all cases, the objective value obtained by Gurobi agreed with the value computed by the heuristic to within a relative difference of $10^{-12}$.

\subsection{Comparison with the MILP and alternative metaheuristics}
\label{sec5:comparison}

Table~\ref{tab:summary} summarizes the results by instance set, and Tables~\ref{tab:appA1} and~\ref{tab:appA2} in ~\ref{app:results} report the results for each instance with the large-truck and the small-truck fleet, respectively. We first discuss the performance of the MILP, then the performance of the two alternative metaheuristics, GRASP-EvPR and SA, and finally the performance of the proposed GRASP-ILS-PR relative to all three methods.

\begin{table}[htbp]
\centering
\scriptsize
\setlength{\tabcolsep}{3.5pt}
\caption{Summary comparison of Gurobi, GRASP-EvPR, SA, and the proposed GRASP-ILS-PR by instance set; the proposed method is shown with a bold header. Gurobi: number of instances solved to proven optimality (\#Opt), average runtime, average MIP gap at termination, and number of best-known solutions found (\#BKS). Metaheuristics (ten runs of 30\,s per instance): average relative deviation of the best ($\Delta_{\mathrm{best}}$) and average ($\Delta_{\mathrm{avg}}$) objective values from the Gurobi incumbent, in \%, and number of best-known solutions found (\#BKS). Negative deviations indicate improvements over the Gurobi incumbent. Each set contains five instances.}
\label{tab:summary}
\begin{tabular}{lrrrrrrrrrrrrr}
\toprule
 & \multicolumn{4}{c}{Gurobi} & \multicolumn{3}{c}{GRASP-EvPR} & \multicolumn{3}{c}{SA} & \multicolumn{3}{c}{\textbf{GRASP-ILS-PR}} \\
\cmidrule(lr){2-5}\cmidrule(lr){6-8}\cmidrule(lr){9-11}\cmidrule(lr){12-14}
Set & \#Opt & CPU (s) & Gap (\%) & \#BKS & $\Delta_{\mathrm{best}}$ & $\Delta_{\mathrm{avg}}$ & \#BKS & $\Delta_{\mathrm{best}}$ & $\Delta_{\mathrm{avg}}$ & \#BKS & $\Delta_{\mathrm{best}}$ & $\Delta_{\mathrm{avg}}$ & \#BKS \\
\midrule
M-05-03-1 & 5 & 0.01 & 0.00 & 5 & 0.000 & 0.000 & 5 & 0.000 & 0.000 & 5 & 0.000 & 0.000 & 5 \\
M-05-03-2 & 5 & 0.02 & 0.00 & 5 & 0.000 & 0.000 & 5 & 0.000 & 0.479 & 5 & 0.000 & 0.000 & 5 \\
M-05-04-1 & 5 & 0.02 & 0.00 & 5 & 0.000 & 0.000 & 5 & 0.000 & 0.000 & 5 & 0.000 & 0.000 & 5 \\
M-05-04-2 & 5 & 0.23 & 0.00 & 5 & 0.000 & 0.000 & 5 & 0.000 & 0.000 & 5 & 0.000 & 0.000 & 5 \\
M-08-05-1 & 5 & 0.08 & 0.00 & 5 & 0.000 & 0.000 & 5 & 0.000 & 0.000 & 5 & 0.000 & 0.000 & 5 \\
M-08-05-2 & 5 & 0.48 & 0.00 & 5 & 0.000 & 0.000 & 5 & 0.000 & 0.000 & 5 & 0.000 & 0.000 & 5 \\
M-10-07-1 & 5 & 4.67 & 0.00 & 5 & 0.000 & 0.000 & 5 & 0.000 & 0.000 & 5 & 0.000 & 0.000 & 5 \\
M-10-07-2 & 5 & 7.53 & 0.00 & 5 & 0.000 & 0.000 & 5 & 0.000 & 0.000 & 5 & 0.000 & 0.000 & 5 \\
M-10-08-1 & 5 & 36.27 & 0.00 & 5 & 0.000 & 0.000 & 5 & 0.000 & 0.000 & 5 & 0.000 & 0.002 & 5 \\
M-10-08-2 & 5 & 279.10 & 0.00 & 5 & 0.000 & 0.000 & 5 & 0.000 & 0.006 & 5 & 0.000 & 0.000 & 5 \\
M-09-09-1 & 5 & 161.38 & 0.00 & 5 & 0.000 & 0.038 & 5 & 0.000 & 0.000 & 5 & 0.000 & 0.000 & 5 \\
M-09-09-2 & 5 & 1527.18 & 0.00 & 5 & 0.000 & 0.080 & 5 & 0.000 & 0.019 & 5 & 0.000 & 0.041 & 5 \\
M-12-11-1 & 0 & 3600.02 & 40.70 & 5 & 0.094 & 0.514 & 2 & 0.013 & 0.184 & 4 & 0.000 & 0.010 & 5 \\
M-12-11-2 & 2 & 2630.81 & 11.77 & 5 & 0.060 & 0.294 & 0 & 0.051 & 0.256 & 4 & 0.000 & 0.058 & 5 \\
M-15-13-1 & 0 & 3600.01 & 55.85 & 4 & 0.811 & 1.568 & 0 & -0.190 & 0.020 & 5 & -0.190 & -0.159 & 5 \\
M-15-13-2 & 0 & 3600.08 & 29.93 & 2 & 0.810 & 1.178 & 0 & -0.062 & 0.191 & 3 & 0.017 & 0.211 & 1 \\
M-17-15-1 & 0 & 3600.02 & 59.89 & 4 & 1.963 & 2.662 & 0 & 0.305 & 0.680 & 0 & -0.011 & 0.277 & 2 \\
M-17-15-2 & 0 & 3600.02 & 34.41 & 2 & 1.106 & 1.570 & 0 & 0.017 & 0.380 & 1 & -0.034 & 0.257 & 4 \\
\midrule
All & 62 & 1258.22 & 12.92 & 82 & 0.269 & 0.439 & 62 & 0.007 & 0.123 & 77 & -0.012 & 0.039 & 82 \\
\bottomrule
\end{tabular}
\end{table}

\paragraph{Performance of the MILP}
Gurobi solves all 60 instances with up to nine satellites to proven optimality, but its runtime increases rapidly with the number of satellites: from an average of 0.01\,s for $n=3$ to 158\,s for $n=8$ and 844\,s for $n=9$. For $n=11$, only two of ten instances are solved within the time limit of 3600\,s, and no instance with $n\geq 13$ is solved to optimality. On the 28 instances for which the time limit is reached, the MIP gap at termination ranges from 15.5\% to 64.1\%, with an average of 41.5\%, and it is larger for the large-truck fleet (52.1\% on average) than for the small-truck fleet (29.3\%). Since the lower bounds remain weak, a phenomenon typical of formulations with big-M time-propagation constraints, the quality of the solutions for these instances can only be assessed relative to the incumbents found by Gurobi, not relative to the optimal values.

\paragraph{Performance of the alternative metaheuristics}
On the instances with up to nine satellites, both alternative metaheuristics find the optimal solution in their best run on all or almost all instances, but their behavior differs as the instance size increases. GRASP-EvPR finds the optimal solution in its best run on 60 of the 62 instances solved to optimality, but its performance deteriorates on the larger instances. On the 28 instances for which Gurobi reaches the time limit, its best solution is worse than the Gurobi incumbent on 25 instances, by up to 2.76\%, and its average deviation in the best run increases from 0.08\% for $n=11$ to 0.81\% for $n=13$ and 1.54\% for $n=15$.

SA is competitive in its best run. It finds the optimal solution on all 62 instances solved to optimality, and on the 28 instances for which Gurobi reaches the time limit, its best solution is better than the Gurobi incumbent on 6 instances, equal on 13, and worse on 9, with deviations between $-0.95\%$ and 0.71\%. Its results vary more between runs, however. Its average objective value equals the optimal value on 46 of the 62 instances solved to optimality, and its average run deviates from the Gurobi incumbent by 0.123\% over all 90 instances. The variation is not limited to the larger instances; on set M-05-03-2, with three satellites, the average run of SA deviates from the optimal value by 0.48\%.

\paragraph{Performance of GRASP-ILS-PR}
GRASP-ILS-PR finds the optimal solution in its best run on all 62 instances solved to optimality by Gurobi. On 53 of these instances, all ten runs return the optimal solution, and the average objective value exceeds the optimal value by at most 0.066\%. On the 28 instances for which Gurobi reaches the time limit, the best solution of GRASP-ILS-PR is better than the Gurobi incumbent on 7 instances, equal on 14, and worse on 7. The improvements reach 0.95\% (instance M-15-13-60-1), whereas the largest deterioration is 0.28\% (instance M-17-15-50-1). Over all 90 instances, the average deviation from the Gurobi incumbent is $-0.012\%$ for the best run and 0.039\% for the average run, and GRASP-ILS-PR finds the best-known solution on 82 instances, the same number as Gurobi, compared with 77 for SA and 62 for GRASP-EvPR. These results are obtained with a single thread and a time limit of 30\,s per run, whereas Gurobi uses up to 48 threads and up to 3600\,s.

Compared with GRASP-EvPR, GRASP-ILS-PR maintains its solution quality as the instance size increases: its average deviations in the best run for $n=11$, 13, and 15 are 0.00\%, $-0.09\%$, and $-0.02\%$. It is better than GRASP-EvPR on 28 instances in the best run and equal on the remaining 62; in the average run, it is better on 38 instances, equal on 50, and worse on 2. Since the two methods share all search components, this difference is attributable to the organization of the search, which is analyzed in Section~\ref{sec5:design}.

The comparison with SA is closer. In the best run, GRASP-ILS-PR is better than SA on 13 instances, equal on 73, and worse on 4. The four instances on which SA finds the better solution (M-15-13-60-2, M-15-13-70-2, M-15-13-80-2, and M-17-15-80-2) all belong to the small-truck fleet, a pattern that recurs on the large-scale instances (Section~\ref{sec5:large}). In the average run, GRASP-ILS-PR is better on 36 instances, equal on 44, and worse on 10, and its average objective value equals the optimal value on 53 of the 62 instances solved to optimality, compared with 46 for SA. The Wilcoxon signed-rank test confirms that GRASP-ILS-PR is better than GRASP-EvPR in both the best and the average run ($p=2.9\times10^{-6}$ and $p=2.4\times10^{-8}$). Against SA, the difference is significant for the average run ($p=1.3\times10^{-5}$) but not for the best run ($p=0.068$).

In summary, GRASP-ILS-PR matches the MILP on all instances that the MILP solves to optimality and remains within 0.28\% of the Gurobi incumbent, or improves on it, for the larger instances. It clearly outperforms GRASP-EvPR. Compared with SA, it reaches a similar solution quality in the best of ten runs but is significantly more consistent across runs, which matters in practice when the time available for re-planning allows only a single run.

\subsection{Analysis of the algorithm design}
\label{sec5:design}
GRASP-ILS-PR and GRASP-EvPR use the same construction heuristic, repair operator, local search, and path relinking procedure, and differ only in how these components are scheduled. In GRASP-EvPR, every iteration performs a complete construction, a complete local search, and two relinking paths, whose costs increase with the number of satellites (Sections~\ref{sec4:constr}--\ref{sec4:pr}). In GRASP-ILS-PR, most iterations consist of a single sampled move, whereas the local search and path relinking are invoked every $\varphi_{\mathrm{LS}}$ and $\varphi_{\mathrm{PR}}$ iterations, respectively. Table~\ref{tab:iterations} reports the number of iterations completed within the time limit of 30\,s on six representative instances that cover the range of instance and fleet sizes.

\begin{table}[htbp]
\centering
\scriptsize
\caption{Average number of iterations completed within 30\,s (ten runs per instance). For GRASP-EvPR, an iteration consists of a construction, a local search, and path relinking; for SA and GRASP-ILS-PR, an iteration consists of one sampled move.}
\label{tab:iterations}
\begin{tabular}{lrrrrr}
\toprule
Instance & $n$ & $K$ & GRASP-EvPR & SA & \textbf{GRASP-ILS-PR} \\
\midrule
M-05-03-60-1 & 3  & 1 & 41\,738 & 756\,524 & 365\,666 \\
M-08-05-60-1 & 5  & 1 & 17\,829 & 415\,163 & 295\,854 \\
M-10-07-60-2 & 7  & 5 & 7155    & 147\,831 & 103\,868 \\
M-12-11-60-1 & 11 & 2 & 2073    & 64\,603  & 100\,399 \\
M-15-13-60-1 & 13 & 3 & 1250    & 54\,408  & 86\,314  \\
M-17-15-60-2 & 15 & 7 & 3607    & 47\,957  & 54\,978  \\
\bottomrule
\end{tabular}
\end{table}

The number of iterations of GRASP-EvPR decreases sharply with the instance size, from 41\,738 for $n=3$ to 1250 for $n=13$, a reduction by a factor of 33. Over the same range, the number of iterations of GRASP-ILS-PR decreases only by a factor of 4, from 365\,666 to 86\,314. As a result, GRASP-ILS-PR completes 9 times as many iterations as GRASP-EvPR on the smallest instance and 69 times as many on instance M-15-13-60-1. The ratio does not increase monotonically with $n$, since the cost of an iteration of GRASP-EvPR also depends on the number of trucks $K$; on instance M-17-15-60-2, with seven trucks, GRASP-EvPR completes 3607 iterations, and the ratio is 15. The higher iteration rate does not come at the expense of the expensive components. Within the same 30\,s, GRASP-ILS-PR performs on average 5595 local search descents and 16\,785 path relinking calls on the six instances. On instances M-12-11-60-1 and M-15-13-60-1, both numbers exceed the total number of iterations of GRASP-EvPR; on instance M-15-13-60-1, for example, GRASP-ILS-PR performs 2877 local search descents and 8631 path relinking calls, compared with 1250 iterations of GRASP-EvPR. The iteration rate of GRASP-ILS-PR is of the same order as that of SA, and higher on the three largest instances, although GRASP-ILS-PR additionally performs periodic local search and path relinking.

The effect of these differences on solution quality corresponds to the results of Section~\ref{sec5:comparison}. On the instances with up to nine satellites, GRASP-EvPR completes enough iterations to find the optimal solution in almost all runs, and its results are practically identical to those of GRASP-ILS-PR. As the number of satellites increases and its number of iterations decreases, its average deviation in the best run increases from 0.08\% for $n=11$ to 1.54\% for $n=15$, whereas the deviation of GRASP-ILS-PR remains at or below zero. Since the two methods apply the same moves and the same relinking procedure to the same instances, these results indicate that, under a fixed time limit, the frequency with which the search components can be applied matters more than applying all of them in every iteration.

Finally, we tested whether adding evolutionary path relinking to GRASP-ILS-PR, i.e., periodically relinking all pairs of elite solutions \citep{resende2010grasp}, improves the results. On the six representative instances, with 20 paired runs of 30\,s each, the variants with and without EvPR returned identical results in 76 of 120 runs, and the difference was not significant (Wilcoxon signed-rank test, $p=0.46$); a repetition with a time limit of 300\,s gave the same conclusion ($p=0.11$). Since path relinking against the elite pool is already invoked every $\varphi_{\mathrm{PR}}$ iterations in GRASP-ILS-PR, the additional recombination provided by EvPR is redundant, and EvPR is therefore not part of the final algorithm.

\subsection{Parameter sensitivity}
\label{sec5:sensitivity}

To assess the sensitivity of GRASP-ILS-PR to its parameters, we varied each parameter individually while keeping the others at the values of Table~\ref{tab:paramvalues}. The experiment uses the six representative instances of Section~\ref{sec5:design}, with ten runs of 30\,s per instance and configuration. All configurations use the same random seeds, so that the results of each configuration can be paired with those of the default configuration, whose runs are those reported in Section~\ref{sec5:comparison}. Table~\ref{tab:sensitivity} reports the average objective value over the 60 runs of each configuration, its relative difference from the default configuration, and the $p$-value of the Wilcoxon signed-rank test on the paired runs.

\begin{table}[htbp]
\centering
\scriptsize
\caption{Sensitivity of GRASP-ILS-PR to its parameters: average objective value over six instances and ten runs per instance, relative difference $\Delta$ from the default configuration, and $p$-value of the Wilcoxon signed-rank test on the 60 paired runs. Default values are shown in bold. Positive values of $\Delta$ indicate a deterioration.}
\label{tab:sensitivity}
\begin{tabular}{llrrr}
\toprule
Parameter & Value & Avg.\ objective & $\Delta$ (\%) & $p$ \\
\midrule
Default configuration & & 19.3887 & -- & -- \\
\midrule
$\kappa$ (samples per operator) & 3 & 19.3851 & $-$0.019 & 0.497 \\
 & \textbf{5} & & & \\
 & 10 & 19.3871 & $-$0.008 & 0.967 \\
\addlinespace
$\varphi_{\mathrm{LS}}$ (local search interval) & 10 & 19.3967 & 0.041 & 0.178 \\
 & \textbf{30} & & & \\
 & 60 & 19.3848 & $-$0.021 & 0.577 \\
 & 100 & 19.3853 & $-$0.018 & 0.497 \\
\addlinespace
$\varphi_{\mathrm{PR}}$ (path relinking interval) & 5 & 19.3897 & 0.005 & 0.952 \\
 & \textbf{10} & & & \\
 & 20 & 19.3881 & $-$0.003 & 0.781 \\
 & 50 & 19.3861 & $-$0.014 & 0.714 \\
\addlinespace
$\rho$ (restart patience) & 500 & 19.3802 & $-$0.044 & 0.050 \\
 & \textbf{2000} & & & \\
 & 5000 & 19.4093 & 0.106 & $<0.001$ \\
\addlinespace
$\omega$ (elite pool size) & 10 & 19.3833 & $-$0.028 & 0.410 \\
 & \textbf{20} & & & \\
 & 40 & 19.3804 & $-$0.043 & 0.397 \\
\addlinespace
$\beta$ (pool entry threshold) & 0.003 & 19.3939 & 0.027 & 0.383 \\
 & \textbf{0.007} & & & \\
 & 0.015 & 19.3893 & 0.003 & 0.487 \\
 & 0.03 & 19.3871 & $-$0.008 & 0.936 \\
\bottomrule
\end{tabular}
\end{table}

The results are robust to the parameter values. The average objective value varies by at most 0.11\% across all configurations, and only one of the 15 tested values differs significantly from the default configuration at the 0.05 level. The sampling parameter $\kappa$, the intervals $\varphi_{\mathrm{LS}}$ and $\varphi_{\mathrm{PR}}$, the pool size $\omega$, and the entry threshold $\beta$ have no significant effect in the tested ranges ($p\geq0.17$), and all corresponding differences are below 0.05\%. The restart patience $\rho$ is the only parameter with a noticeable effect. With $\rho=5000$, the average objective value deteriorates by 0.11\% ($p<0.001$), whereas $\rho=500$ yields a slightly better average ($-0.04\%$) with a $p$-value of 0.050, at the boundary of significance.

The differences are confined to the three instances with 11 or more satellites. On the three smaller instances, all configurations yield the same average objective value, since the search converges well within the time limit. The effect of $\rho$ is consistent with the role of restarts as the main source of diversification (Section~\ref{sec4:hybrid}). Since the search between restarts never accepts a worse solution, a longer restart patience keeps the search longer in the neighborhood of a local optimum and reduces the number of restarts that can be performed within the time limit. The result for $\rho=500$ suggests that more frequent restarts could slightly benefit the largest instances, but the gain is small and not significant, and we therefore retain the default value.

These results indicate that the default values of Table~\ref{tab:paramvalues} lie in a region in which the performance of GRASP-ILS-PR is insensitive to moderate changes of the parameters, so that no instance-specific tuning is required. The analysis varies one parameter at a time on six instances; interactions between parameters were not investigated.

\subsection{Large-scale instances}
\label{sec5:large}

The large-scale instances of Table~\ref{table_large} are used to assess the performance of the methods beyond the size that the MILP can solve. Since these instances are considerably larger than those of the main set, the time limit of the metaheuristics is increased to 120\,s per run; all other settings are as in Section~\ref{sec5:setup}. Gurobi is run with a time limit of 3600\,s and 8 threads per instance, with six instances solved in parallel. Since no optimal solutions are available, all deviations are measured relative to the best-known solution of each instance, that is, the best solution found by any of the four methods. Table~\ref{tab:large} summarizes the results, and Table~\ref{tab:appA3} in \ref{app:results} reports the results for each instance.

\begin{table}[htbp]
\centering
\scriptsize
\setlength{\tabcolsep}{3.5pt}
\caption{Summary results on the large-scale instances. Gurobi (time limit 3600\,s): average MIP gap at termination, average relative deviation of the incumbent from the best-known solution ($\Delta$), and number of best-known solutions found (\#BKS). Metaheuristics (ten runs of 120\,s per instance): average relative deviation of the best ($\Delta_{\mathrm{best}}$) and average ($\Delta_{\mathrm{avg}}$) objective values from the best-known solution, in \%, and \#BKS. Each set contains five instances; the proposed method is shown with a bold header.}
\label{tab:large}
\begin{tabular}{lrrrrrrrrrrrr}
\toprule
 & \multicolumn{3}{c}{Gurobi} & \multicolumn{3}{c}{GRASP-EvPR} & \multicolumn{3}{c}{SA} & \multicolumn{3}{c}{\textbf{GRASP-ILS-PR}} \\
\cmidrule(lr){2-4}\cmidrule(lr){5-7}\cmidrule(lr){8-10}\cmidrule(lr){11-13}
Set & Gap (\%) & $\Delta$ & \#BKS & $\Delta_{\mathrm{best}}$ & $\Delta_{\mathrm{avg}}$ & \#BKS & $\Delta_{\mathrm{best}}$ & $\Delta_{\mathrm{avg}}$ & \#BKS & $\Delta_{\mathrm{best}}$ & $\Delta_{\mathrm{avg}}$ & \#BKS \\
\midrule
M-20-20-1 & 76.10 & 0.06 & 3 & 4.80 & 6.12 & 0 & 0.08 & 0.44 & 1 & 0.01 & 0.13 & 4 \\
M-20-20-2 & 48.56 & 0.15 & 1 & 1.66 & 2.14 & 0 & 0.03 & 0.22 & 4 & 0.03 & 0.19 & 4 \\
M-50-50-1 & 94.48 & 11.52 & 0 & 17.27 & 19.48 & 0 & 1.46 & 3.08 & 0 & 0.00 & 1.48 & 5 \\
M-50-50-2 & 65.14 & 5.10 & 0 & 4.83 & 5.84 & 0 & 0.15 & 0.44 & 1 & 0.03 & 0.51 & 4 \\
\midrule
All & 71.07 & 4.21 & 4 & 7.14 & 8.39 & 0 & 0.43 & 1.05 & 6 & 0.02 & 0.57 & 17 \\
\bottomrule
\end{tabular}
\end{table}

\paragraph{Performance of the MILP}
Gurobi reaches the time limit on all 20 instances, with MIP gaps between 48.1\% and 96.7\%, so that its lower bounds provide no meaningful information on solution quality. For $n=20$, its incumbents remain competitive: they deviate from the best-known solution by 0.06\% and 0.15\% on average for the two fleets, and Gurobi finds the best-known solution on 4 of the 10 instances. For $n=50$, the model becomes too large to be explored effectively within one hour. For the small-truck fleet with $K=26$, Gurobi explores at most 317 nodes of the branch-and-bound tree, and only the root node on three of the five instances. Its incumbents deviate from the best-known solutions by 5.10\% on average for the small-truck fleet and by 11.52\% for the large-truck fleet.

\paragraph{Performance of the metaheuristics}
GRASP-ILS-PR finds the best-known solution on 17 of the 20 instances, and its best solution deviates from the best-known solution by 0.02\% on average. For $n=20$, it is better than the Gurobi incumbent on 5 instances, equal on 3, and worse on 2, by at most 0.03\%, with a time limit of 120\,s instead of 3600\,s. For $n=50$, it improves on the Gurobi incumbent on all 10 instances, by 1.3\% to 11.1\%. On set M-50-50-1, it finds the best-known solution on all five instances, whereas the best solutions of SA and GRASP-EvPR deviate from it by 1.46\% and 17.27\% on average.

The performance of GRASP-EvPR deteriorates strongly with the instance size: its best solution deviates from the best-known solution by 7.14\% on average and by 17.27\% on set M-50-50-1, and GRASP-ILS-PR is better on all 20 instances in both the best and the average run (Wilcoxon signed-rank test, $p=1.9\times10^{-6}$). SA remains competitive and finds the best-known solution on 6 instances. Over all 20 instances, GRASP-ILS-PR is better than SA on 14 instances in the best run, equal on 4, and worse on 2 ($p=0.005$), and better on 13 instances in the average run and worse on 7 ($p=0.019$). The comparison depends on the fleet type, however. With the large-truck fleet, GRASP-ILS-PR is better than SA on 9 of the 10 instances in the best run, and equal on the remaining one, and better on all 10 in the average run; on set M-50-50-1, its average deviations are 0.00\% and 1.48\%, against 1.46\% and 3.08\% for SA. With the small-truck fleet, GRASP-ILS-PR is better in the best run on 5 instances and worse on 2, whereas SA is better in the average run on 7 of the 10 instances. In these instances, each truck serves only a few satellites (on set M-50-50-2, 26 trucks serve 50 satellites, so that most routes contain only one or two satellites), so that improvements mainly require moving satellites between many short, capacity-constrained routes. The occasional acceptance of worse solutions by SA appears to be more effective for such moves than the non-worsening acceptance rule of GRASP-ILS-PR, in line with the observations for the small-truck fleet on the main instance set (Section~\ref{sec5:comparison}).

The variability of GRASP-ILS-PR also increases with the instance size. Its average run deviates from the best-known solution by 0.13\% and 0.19\% for $n=20$, but by 1.48\% on set M-50-50-1, where each truck visits about eight satellites. On these instances, the search has not converged within 120\,s, so that longer runs or several independent runs are advisable in practice. Overall, the results show that the MILP is not a viable solution method for instances with 50 satellites, whereas GRASP-ILS-PR provides the best solutions on most large-scale instances within two minutes, and SA is a competitive alternative for fleets of many small trucks.

\subsection{Illustrative solution}
\label{sec5:example}
Figure~\ref{fig:example} shows the optimal solutions of instance M-09-09-60 with the large-truck fleet (panel a, $K=2$) and the small-truck fleet (panel b, $K=4$), and Table~\ref{tab:example} reports the routes, the assigned centroids, the waiting times, and the arrival times. Both solutions are proven optimal by Gurobi; GRASP-ILS-PR finds the first in all ten runs and the second in five of the ten runs.

\begin{figure}[htbp]
\centering
    \includegraphics[width=\textwidth]{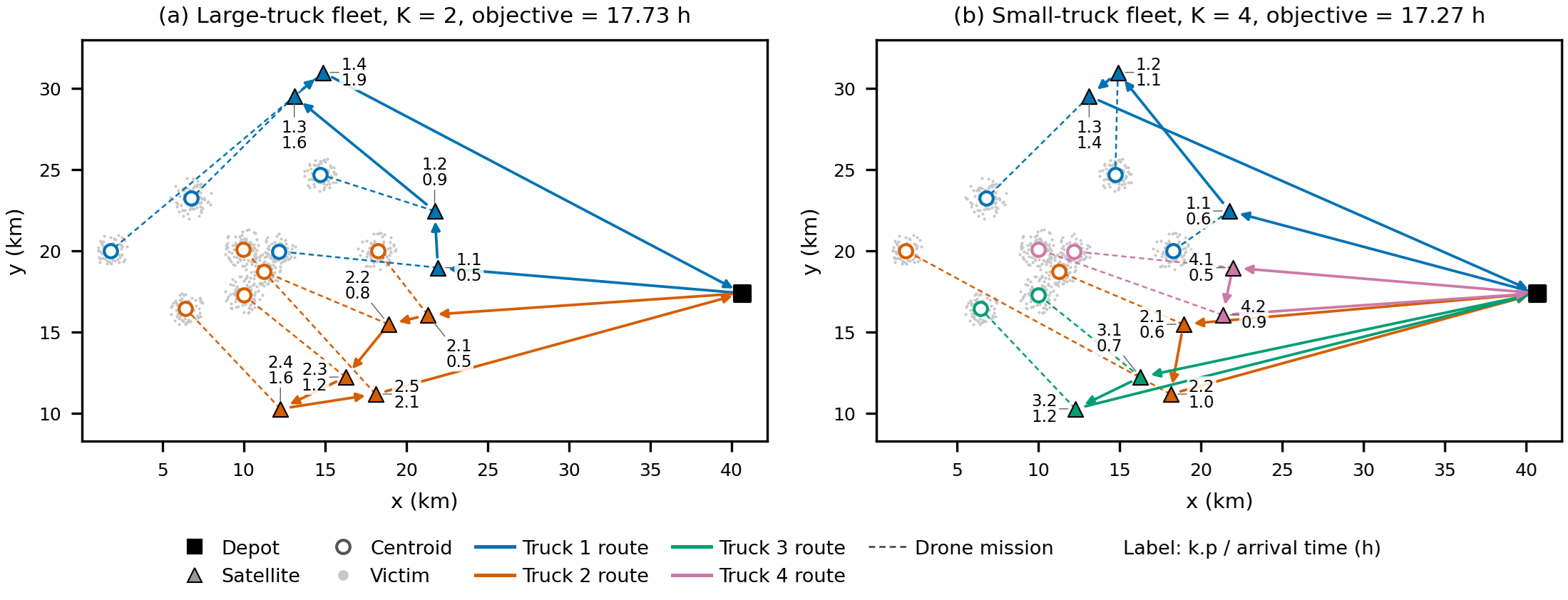}
    \caption{Optimal solutions of instance M-09-09-60 with (a) the large-truck fleet and (b) the small-truck fleet. Each satellite is labeled with the truck $k$ and its position $p$ on the route ($k.p$) and with the arrival time of the truck in hours. Truck routes are drawn as straight lines, although truck travel times are based on Manhattan distances; dashed lines connect each satellite to the centroid served by its drones.}
    \label{fig:example}
\end{figure}

\begin{table}[htbp]
\centering
\scriptsize
\setlength{\tabcolsep}{5pt}
\caption{Optimal solutions of instance M-09-09-60. For each truck, the satellites are listed in the order of visit, with the assigned centroid, the waiting time, and the arrival time; the return time to the depot and the load are given in the first row of each truck. Times are in hours.}
\label{tab:example}
\begin{tabular}{crrrrrr}
\toprule
Truck & Satellite & Centroid & Waiting & Arrival & Return & Load \\
\midrule
\multicolumn{7}{l}{\textit{(a) Large-truck fleet, $K=2$, $Q=12\,500$, objective 17.73}} \\
1 & 9 & 8 & 0.33 & 0.51 & 3.50 & 4223 \\
  & 8 & 5 & 0.25 & 0.93 &      &      \\
  & 2 & 4 & 0.30 & 1.57 &      &      \\
  & 5 & 1 & 0.57 & 1.94 &      &      \\
\addlinespace
2 & 7 & 3 & 0.17 & 0.52 & 3.17 & 5364 \\
  & 3 & 9 & 0.28 & 0.76 &      &      \\
  & 1 & 6 & 0.27 & 1.18 &      &      \\
  & 4 & 7 & 0.28 & 1.60 &      &      \\
  & 6 & 2 & 0.40 & 2.05 &      &      \\
\midrule
\multicolumn{7}{l}{\textit{(b) Small-truck fleet, $K=4$, $Q=3500$, objective 17.27}} \\
1 & 8 & 3 & 0.14 & 0.60 & 2.71 & 3302 \\
  & 5 & 5 & 0.21 & 1.13 &      &      \\
  & 2 & 4 & 0.30 & 1.42 &      &      \\
\addlinespace
2 & 3 & 9 & 0.28 & 0.59 & 2.33 & 1875 \\
  & 6 & 1 & 0.62 & 1.00 &      &      \\
\addlinespace
3 & 1 & 6 & 0.27 & 0.74 & 2.33 & 2126 \\
  & 4 & 7 & 0.28 & 1.16 &      &      \\
\addlinespace
4 & 9 & 8 & 0.33 & 0.51 & 1.84 & 2284 \\
  & 7 & 2 & 0.40 & 0.92 &      &      \\
\bottomrule
\end{tabular}
\end{table}

The solutions reflect the structure of the objective function described in Section~\ref{sec4:repr}. Since the waiting time at a satellite delays the arrivals at all subsequent satellites of the route, long drone missions are placed at the end of the routes. In both solutions, the longest waiting time of every route occurs at its last satellite: truck~1 in panel~(a) waits 0.57\,h at its last satellite, compared with at most 0.33\,h at the preceding ones, and truck~2 in panel~(b) waits 0.62\,h at its last satellite, compared with 0.28\,h at the first. Short drone missions tend to be assigned to the satellites visited first, which is the principle underlying the construction of the second-echelon assignment in Section~\ref{sec4:constr}.

The two solutions also illustrate the interaction between the echelons. With two large trucks, each truck visits four or five satellites, and the waiting times accumulate along the routes: the last satellite is reached after 2.05\,h, and the trucks are loaded to only 34\% and 43\% of their capacity. With four small trucks, each route contains two or three satellites, all satellites are reached within 1.42\,h, and the objective value is 2.6\% lower, although truck~1 uses almost its entire capacity (3302 of 3500). This instance is one of the few in which the small-truck fleet also yields the lower objective value; in 37 of the 45 pairs of instances, the large-truck fleet does (Section~\ref{Sec6}). The assignment of centroids to satellites differs between the two solutions for four of the nine satellites (satellites 5, 6, 7, and 8), since the positions of the satellites in the routes, and hence the weights of their waiting times, change with the fleet. The optimal assignment therefore cannot be determined independently of the routes, which confirms the need to optimize both echelons jointly. The effect of the fleet on the quality of the relief operation is analyzed in Section~\ref{Sec6}.

\section{Impact of the truck fleet on relief operations}
\label{Sec6}
This section compares the two truck fleets with respect to three dimensions of humanitarian logistics: efficiency, efficacy, and equity \citep{huang2012models}. The analysis uses the best solutions found by GRASP-ILS-PR for the 90 instances of the main set, which form 45 pairs of instances that differ only in the truck fleet. Let $e_i=a_i+t'_{i\sigma(i)}$ denote the time at which the demand of the centroid assigned to satellite $v_i$ is delivered, where $a_i$ is the arrival time of the truck at $v_i$. We use four measures:
\begin{align}
Z_1 &= \sum_{k\in\mathcal{K}}\sum_{i\in V\setminus\{v_0\}}a_i^k, &
Z_2 &= \frac{D}{\max_{i\in V_s}e_i}, \label{eq:Z12}\\
Z_3 &= \max_{i\in V_s}a_i-\min_{i\in V_s}a_i, &
\bar{T} &= \frac{1}{D}\sum_{i\in V_s}d_{\sigma(i)}\,e_i. \label{eq:Z3T}
\end{align}
Efficiency is measured by the objective value $Z_1$ and by the demand-weighted average delivery time $\bar{T}$. Unlike $Z_1$, $\bar{T}$ includes the drone flight times and excludes the return trips to the depot, and therefore measures the time at which supplies reach the victims. Efficacy is measured by $Z_2$, the demand delivered per hour until the last centroid is served. Equity is measured by $Z_3$, the difference between the latest and the earliest arrival of a truck at a satellite. Table~\ref{tab:fleet} reports these measures, together with the completion time $C=\max_{i\in V_s}e_i$ of the deliveries and the average truck utilization, averaged over the five drone speeds. Figure~\ref{fig:fleet} shows $Z_1$, $\bar{T}$, and $Z_3$ for the lowest and the highest drone speed.

\begin{table}[htbp]
\centering
\scriptsize
\setlength{\tabcolsep}{4pt}
\caption{Performance of the large-truck (L) and small-truck (S) fleets: objective value $Z_1$, demand-weighted average delivery time $\bar{T}$, completion time of the deliveries $C$, demand delivered per hour $Z_2$, spread of the arrival times at the satellites $Z_3$, and average truck utilization. Values are averages over the five drone speeds; times are in hours.}
\label{tab:fleet}
\begin{tabular}{rrrrrrrrrrrrr}
\toprule
 & \multicolumn{2}{c}{$Z_1$} & \multicolumn{2}{c}{$\bar{T}$} & \multicolumn{2}{c}{$C$} & \multicolumn{2}{c}{$Z_2$} & \multicolumn{2}{c}{$Z_3$} & \multicolumn{2}{c}{Util.\ (\%)} \\
\cmidrule(lr){2-3}\cmidrule(lr){4-5}\cmidrule(lr){6-7}\cmidrule(lr){8-9}\cmidrule(lr){10-11}\cmidrule(lr){12-13}
$n$ & L & S & L & S & L & S & L & S & L & S & L & S \\
\midrule
3  & 7.20  & 8.08  & 1.26 & 1.12 & 2.08 & 1.72 & 2761 & 3312  & 1.01 & 0.65 & 45 & 81 \\
4  & 7.46  & 7.68  & 1.15 & 0.83 & 2.10 & 1.21 & 3117 & 5361  & 1.29 & 0.53 & 51 & 61 \\
5  & 12.90 & 13.83 & 1.65 & 1.09 & 3.18 & 1.67 & 2909 & 5395  & 2.24 & 0.78 & 72 & 64 \\
7  & 13.97 & 16.03 & 1.33 & 0.94 & 1.96 & 1.42 & 6312 & 8602  & 1.16 & 0.64 & 49 & 69 \\
8  & 18.72 & 21.16 & 1.68 & 1.23 & 2.38 & 1.66 & 4923 & 7000  & 1.41 & 0.78 & 46 & 66 \\
9  & 18.15 & 17.53 & 1.43 & 1.08 & 2.32 & 1.60 & 4210 & 6046  & 1.60 & 0.93 & 38 & 68 \\
11 & 23.97 & 26.28 & 1.58 & 1.08 & 3.13 & 1.67 & 5023 & 9197  & 2.22 & 0.84 & 61 & 62 \\
13 & 27.58 & 30.08 & 1.46 & 1.17 & 2.35 & 1.73 & 7460 & 10036 & 1.55 & 0.90 & 46 & 71 \\
15 & 30.63 & 30.50 & 1.50 & 1.10 & 2.68 & 1.67 & 7549 & 12006 & 1.92 & 1.03 & 53 & 82 \\
\bottomrule
\end{tabular}
\end{table}

\begin{figure}[htbp]
\centering
    \includegraphics[width=\textwidth]{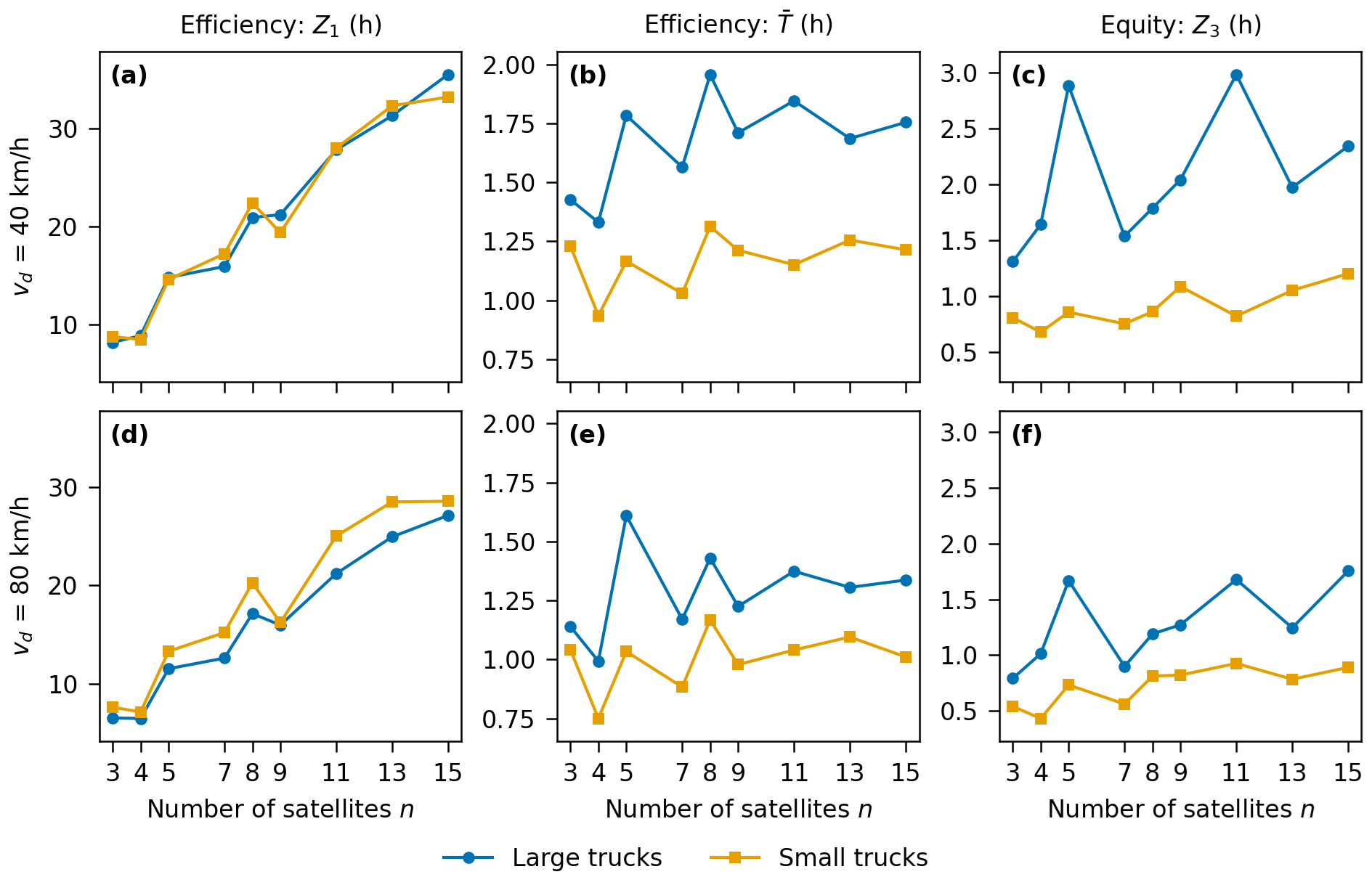}
    \caption{Objective value $Z_1$, demand-weighted average delivery time $\bar{T}$, and spread of the arrival times $Z_3$ of the large-truck and small-truck fleets for drone speeds of 40\,km/h (top) and 80\,km/h (bottom). Each point corresponds to one instance; the lines only connect instances with different numbers of satellites.}
    \label{fig:fleet}
\end{figure}

\paragraph{Efficiency}
The large-truck fleet yields the lower objective value $Z_1$ in 37 of the 45 pairs, and the small-truck fleet yields values that are 7.6\% higher on average. This comparison is affected by the structure of $Z_1$, which contains one return time per truck, so that a fleet with more trucks accumulates more return times. When efficiency is measured from the perspective of the victims, the ranking is reversed: the demand-weighted average delivery time $\bar{T}$ is lower for the small-truck fleet in all 45 pairs, by 25.5\% on average, and for every value of $n$ at both drone speeds shown in Figure~\ref{fig:fleet}(b) and~(e). The preferred fleet therefore depends on whether the return of the vehicles to the depot is part of the performance criterion.

The comparison of $Z_1$ also depends on the drone speed, as shown in Figure~\ref{fig:fleet}(a) and~(d). At a drone speed of 40\,km/h, the small-truck fleet yields the lower objective value for four of the nine values of $n$; at 80\,km/h, the large-truck fleet yields the lower value for all values of $n$. For every value of $n$, the relative difference in favor of the large-truck fleet increases with the drone speed. Increasing the drone speed from 40 to 80\,km/h reduces $Z_1$ by 22.3\% on average for the large-truck fleet but only by 12.4\% for the small-truck fleet. This asymmetry follows from Decomposition~\eqref{eq:decomp}: the waiting time at a satellite delays the arrivals at all subsequent satellites of its route, so that shorter drone missions have a larger effect on the long routes of the large trucks than on the short routes of the small trucks.

\paragraph{Efficacy}
The small-truck fleet completes the deliveries earlier in all 45 pairs, by 33.4\% on average and by between 14.1\% and 57.9\%. Its completion time $C$ lies between 1.06 and 1.99\,h across all instances and hardly depends on the number of satellites, whereas that of the large-truck fleet lies between 1.64 and 4.02\,h. Accordingly, the small-truck fleet delivers more demand per hour ($Z_2$) for every value of $n$. With several trucks leaving the depot simultaneously, each small truck visits at most three satellites, compared with up to seven for a large truck, and the waiting times at the satellites accumulate over fewer subsequent visits. The small trucks are also used more intensively: their average utilization is 69\%, with individual trucks loaded up to 99\% of their capacity, compared with 51\% for the large trucks.

\paragraph{Equity}
The spread $Z_3$ of the arrival times at the satellites is smaller for the small-truck fleet in all 45 pairs, by 48.2\% on average (Figure~\ref{fig:fleet}(c) and~(f)). It ranges from 0.43 to 1.20\,h for the small-truck fleet and from 0.79 to 2.99\,h for the large-truck fleet, so that with small trucks, the last affected community is reached at most 1.2\,h after the first. Faster drones reduce the spread by 37.6\% for the large-truck fleet but only by 19.2\% for the small-truck fleet, for the same reason as for $Z_1$.

\paragraph{Managerial implications}
The results reveal a trade-off between the two fleets. A fleet of few large trucks minimizes the cumulative arrival time, including the return trips, in most instances, and its advantage grows with the drone speed. A fleet of many small trucks delivers supplies to the victims earlier on average, completes all deliveries earlier, and distributes the arrival times more equitably across the affected communities, in every instance of the benchmark set. In the first hours after a disaster, when the time until supplies reach the victims and the fairness of the service are the primary concerns, the small-truck fleet is therefore preferable, provided that enough vehicles and drivers are available. The large-truck fleet is preferable when the availability of the vehicles for subsequent trips matters, and its disadvantage in efficacy and equity decreases as faster drones become available. Since the solutions are optimized with respect to $Z_1$, the measures $\bar{T}$, $Z_2$, and $Z_3$ are not optimized directly; optimizing them explicitly is a direction for future research (Section~\ref{Sec7:Con}).

\section{Conclusion}
\label{Sec7:Con}
This study introduced the two-echelon covering tour vehicle routing problem (2E-CTVRP) for the distribution of relief supplies after a disaster. Trucks transport supplies and drones from a depot to satellites at the periphery of the affected area, and drones deliver the supplies from the satellites to the centroids of victim clusters. We formulated the problem as a mixed integer linear program and proposed GRASP-ILS-PR, a hybrid metaheuristic that combines greedy randomized construction, a single-trajectory search, and periodic path relinking on the assignment of centroids to satellites.

The computational experiments show that Gurobi solves all instances with up to nine satellites to optimality within one hour, but only two of the 50 instances with 11 or more satellites. GRASP-ILS-PR finds all known optimal solutions within 30\,s, is at most 0.28\% worse than the Gurobi incumbent on the larger instances and better on several of them, and improves on the one-hour Gurobi incumbent by up to 11.1\% on instances with 50 satellites. It clearly outperforms a conventional GRASP with evolutionary path relinking that uses the same search components, which shows the benefit of scheduling the expensive components periodically around a single trajectory. Compared with a simulated annealing algorithm built from the same components, it reaches a similar best-run quality with significantly smaller variation between runs, whereas simulated annealing remains competitive for fleets of many small trucks. The comparison of the two truck fleets reveals a trade-off: few large trucks yield the lower sum of arrival times in most instances, whereas many small trucks deliver supplies to the victims 25.5\% earlier on average and distribute the arrival times more equitably across the affected communities in every instance.

The study has several limitations. The instances are synthetic, although motivated by the 2025 flooding in Thai Nguyen Province, Vietnam. Travel times and demands are deterministic, trucks wait at each satellite until their drones return, each satellite serves exactly one cluster, and the objective measures the arrival times of the trucks rather than the times at which supplies reach the victims. Future research could optimize victim-centered measures directly, account for uncertainty in travel times, demands, and road conditions, relax the synchronization between trucks and drones, integrate the location of the satellites into the model, and apply the approach to real data from an actual disaster.

\appendix
\section{Detailed computational results}
\label{app:results}
\setcounter{table}{0}
Tables~\ref{tab:appA1} and~\ref{tab:appA2} report the results for each instance with the large-truck and the small-truck fleet, respectively. The instances are grouped by the number of satellites $n$ and ordered by drone speed within each group.

\begin{scriptsize}
\setlength{\tabcolsep}{4pt}
\begin{longtable}{lrrrrrrrrrr}
\caption{Per-instance results for the large-truck fleet ($Q=12\,500$). Gurobi: incumbent (Obj.), best lower bound (LB), MIP gap at termination, and runtime, with a time limit of 3600\,s. Metaheuristics: best and average objective values over ten runs of 30\,s; the proposed method is shown with a bold header. Bold values denote the best-known solution (ties within a relative tolerance of $10^{-5}$).}
\label{tab:appA1}\\
\toprule
 & \multicolumn{4}{c}{Gurobi} & \multicolumn{2}{c}{GRASP-EvPR} & \multicolumn{2}{c}{SA} & \multicolumn{2}{c}{\textbf{GRASP-ILS-PR}} \\
\cmidrule(lr){2-5}\cmidrule(lr){6-7}\cmidrule(lr){8-9}\cmidrule(lr){10-11}
Instance & Obj. & LB & Gap (\%) & CPU (s) & Best & Avg. & Best & Avg. & Best & Avg. \\
\midrule
\endfirsthead
\caption[]{Per-instance results for the large-truck fleet (continued).}\\
\toprule
 & \multicolumn{4}{c}{Gurobi} & \multicolumn{2}{c}{GRASP-EvPR} & \multicolumn{2}{c}{SA} & \multicolumn{2}{c}{\textbf{GRASP-ILS-PR}} \\
\cmidrule(lr){2-5}\cmidrule(lr){6-7}\cmidrule(lr){8-9}\cmidrule(lr){10-11}
Instance & Obj. & LB & Gap (\%) & CPU (s) & Best & Avg. & Best & Avg. & Best & Avg. \\
\midrule
\endhead
\midrule
\multicolumn{11}{r}{\textit{Continued on next page}} \\
\endfoot
\bottomrule
\endlastfoot
M-05-03-40-1 & \textbf{8.174} & 8.174 & 0.00 & 0.01 & \textbf{8.174} & 8.174 & \textbf{8.174} & 8.174 & \textbf{8.174} & 8.174 \\
M-05-03-50-1 & \textbf{7.505} & 7.505 & 0.00 & 0.00 & \textbf{7.505} & 7.505 & \textbf{7.505} & 7.505 & \textbf{7.505} & 7.505 \\
M-05-03-60-1 & \textbf{7.059} & 7.059 & 0.00 & 0.01 & \textbf{7.059} & 7.059 & \textbf{7.059} & 7.059 & \textbf{7.059} & 7.059 \\
M-05-03-70-1 & \textbf{6.741} & 6.741 & 0.00 & 0.00 & \textbf{6.741} & 6.741 & \textbf{6.741} & 6.741 & \textbf{6.741} & 6.741 \\
M-05-03-80-1 & \textbf{6.502} & 6.502 & 0.00 & 0.00 & \textbf{6.502} & 6.502 & \textbf{6.502} & 6.502 & \textbf{6.502} & 6.502 \\
\addlinespace
M-05-04-40-1 & \textbf{8.866} & 8.866 & 0.00 & 0.02 & \textbf{8.866} & 8.866 & \textbf{8.866} & 8.866 & \textbf{8.866} & 8.866 \\
M-05-04-50-1 & \textbf{7.922} & 7.922 & 0.00 & 0.03 & \textbf{7.922} & 7.922 & \textbf{7.922} & 7.922 & \textbf{7.922} & 7.922 \\
M-05-04-60-1 & \textbf{7.267} & 7.267 & 0.00 & 0.02 & \textbf{7.267} & 7.267 & \textbf{7.267} & 7.267 & \textbf{7.267} & 7.267 \\
M-05-04-70-1 & \textbf{6.799} & 6.799 & 0.00 & 0.02 & \textbf{6.799} & 6.799 & \textbf{6.799} & 6.799 & \textbf{6.799} & 6.799 \\
M-05-04-80-1 & \textbf{6.448} & 6.448 & 0.00 & 0.02 & \textbf{6.448} & 6.448 & \textbf{6.448} & 6.448 & \textbf{6.448} & 6.448 \\
\addlinespace
M-08-05-40-1 & \textbf{14.799} & 14.799 & 0.00 & 0.08 & \textbf{14.799} & 14.799 & \textbf{14.799} & 14.799 & \textbf{14.799} & 14.799 \\
M-08-05-50-1 & \textbf{13.512} & 13.512 & 0.00 & 0.09 & \textbf{13.512} & 13.512 & \textbf{13.512} & 13.512 & \textbf{13.512} & 13.512 \\
M-08-05-60-1 & \textbf{12.655} & 12.655 & 0.00 & 0.08 & \textbf{12.655} & 12.655 & \textbf{12.655} & 12.655 & \textbf{12.655} & 12.655 \\
M-08-05-70-1 & \textbf{12.013} & 12.013 & 0.00 & 0.08 & \textbf{12.013} & 12.013 & \textbf{12.013} & 12.013 & \textbf{12.013} & 12.013 \\
M-08-05-80-1 & \textbf{11.520} & 11.520 & 0.00 & 0.09 & \textbf{11.520} & 11.520 & \textbf{11.520} & 11.520 & \textbf{11.520} & 11.520 \\
\addlinespace
M-10-07-40-1 & \textbf{15.908} & 15.908 & 0.00 & 4.41 & \textbf{15.908} & 15.908 & \textbf{15.908} & 15.908 & \textbf{15.908} & 15.908 \\
M-10-07-50-1 & \textbf{14.584} & 14.584 & 0.00 & 4.81 & \textbf{14.584} & 14.584 & \textbf{14.584} & 14.584 & \textbf{14.584} & 14.584 \\
M-10-07-60-1 & \textbf{13.701} & 13.701 & 0.00 & 4.40 & \textbf{13.701} & 13.701 & \textbf{13.701} & 13.701 & \textbf{13.701} & 13.701 \\
M-10-07-70-1 & \textbf{13.071} & 13.071 & 0.00 & 4.53 & \textbf{13.071} & 13.071 & \textbf{13.071} & 13.071 & \textbf{13.071} & 13.071 \\
M-10-07-80-1 & \textbf{12.598} & 12.598 & 0.00 & 5.22 & \textbf{12.598} & 12.598 & \textbf{12.598} & 12.598 & \textbf{12.598} & 12.598 \\
\addlinespace
M-10-08-40-1 & \textbf{20.906} & 20.906 & 0.00 & 25.99 & \textbf{20.906} & 20.906 & \textbf{20.906} & 20.906 & \textbf{20.906} & 20.906 \\
M-10-08-50-1 & \textbf{19.419} & 19.419 & 0.00 & 40.67 & \textbf{19.419} & 19.419 & \textbf{19.419} & 19.419 & \textbf{19.419} & 19.419 \\
M-10-08-60-1 & \textbf{18.421} & 18.421 & 0.00 & 33.55 & \textbf{18.421} & 18.421 & \textbf{18.421} & 18.421 & \textbf{18.421} & 18.422 \\
M-10-08-70-1 & \textbf{17.708} & 17.708 & 0.00 & 36.41 & \textbf{17.708} & 17.708 & \textbf{17.708} & 17.708 & \textbf{17.708} & 17.708 \\
M-10-08-80-1 & \textbf{17.135} & 17.135 & 0.00 & 44.75 & \textbf{17.135} & 17.135 & \textbf{17.135} & 17.135 & \textbf{17.135} & 17.135 \\
\addlinespace
M-09-09-40-1 & \textbf{21.194} & 21.194 & 0.00 & 143.83 & \textbf{21.194} & 21.217 & \textbf{21.194} & 21.194 & \textbf{21.194} & 21.194 \\
M-09-09-50-1 & \textbf{19.116} & 19.116 & 0.00 & 126.04 & \textbf{19.116} & 19.130 & \textbf{19.116} & 19.116 & \textbf{19.116} & 19.116 \\
M-09-09-60-1 & \textbf{17.730} & 17.730 & 0.00 & 131.92 & \textbf{17.730} & 17.732 & \textbf{17.730} & 17.730 & \textbf{17.730} & 17.730 \\
M-09-09-70-1 & \textbf{16.729} & 16.729 & 0.00 & 134.20 & \textbf{16.729} & 16.729 & \textbf{16.729} & 16.729 & \textbf{16.729} & 16.729 \\
M-09-09-80-1 & \textbf{15.966} & 15.966 & 0.00 & 270.91 & \textbf{15.966} & 15.966 & \textbf{15.966} & 15.966 & \textbf{15.966} & 15.966 \\
\addlinespace
M-12-11-40-1 & \textbf{27.795} & 15.811 & 43.12 & 3600.02 & \textbf{27.795} & 28.000 & \textbf{27.795} & 27.889 & \textbf{27.795} & 27.797 \\
M-12-11-50-1 & \textbf{25.211} & 13.210 & 47.60 & 3600.03 & 25.272 & 25.367 & 25.227 & 25.281 & \textbf{25.211} & 25.219 \\
M-12-11-60-1 & \textbf{23.462} & 14.685 & 37.41 & 3600.02 & \textbf{23.462} & 23.557 & \textbf{23.462} & 23.507 & \textbf{23.462} & 23.465 \\
M-12-11-70-1 & \textbf{22.184} & 13.700 & 38.24 & 3600.04 & 22.193 & 22.250 & \textbf{22.184} & 22.197 & \textbf{22.184} & 22.184 \\
M-12-11-80-1 & \textbf{21.213} & 13.333 & 37.15 & 3600.01 & 21.252 & 21.321 & \textbf{21.213} & 21.225 & \textbf{21.213} & 21.213 \\
\addlinespace
M-15-13-40-1 & \textbf{31.275} & 13.643 & 56.38 & 3600.02 & 31.734 & 31.870 & \textbf{31.275} & 31.321 & \textbf{31.275} & 31.276 \\
M-15-13-50-1 & \textbf{28.748} & 12.624 & 56.09 & 3600.01 & 29.067 & 29.384 & \textbf{28.748} & 28.811 & \textbf{28.748} & 28.768 \\
M-15-13-60-1 & 27.322 & 12.449 & 54.44 & 3600.01 & 27.282 & 27.513 & \textbf{27.063} & 27.128 & \textbf{27.063} & 27.075 \\
M-15-13-70-1 & \textbf{25.859} & 11.418 & 55.84 & 3600.01 & 26.112 & 26.293 & \textbf{25.859} & 25.915 & \textbf{25.859} & 25.870 \\
M-15-13-80-1 & \textbf{24.957} & 10.850 & 56.52 & 3600.01 & 25.118 & 25.293 & \textbf{24.957} & 25.013 & \textbf{24.957} & 24.957 \\
\addlinespace
M-17-15-40-1 & \textbf{35.473} & 14.397 & 59.41 & 3600.02 & 36.447 & 36.746 & 35.724 & 35.811 & 35.474 & 35.675 \\
M-17-15-50-1 & \textbf{32.155} & 11.536 & 64.12 & 3600.01 & 33.043 & 33.221 & 32.355 & 32.538 & 32.244 & 32.332 \\
M-17-15-60-1 & 30.046 & 12.187 & 59.44 & 3600.02 & 30.524 & 30.755 & 30.019 & 30.207 & \textbf{29.945} & 30.027 \\
M-17-15-70-1 & \textbf{28.364} & 12.144 & 57.19 & 3600.01 & 28.816 & 28.982 & 28.424 & 28.497 & 28.365 & 28.412 \\
M-17-15-80-1 & \textbf{27.141} & 11.054 & 59.27 & 3600.01 & 27.447 & 27.649 & 27.161 & 27.210 & \textbf{27.141} & 27.183 \\
\end{longtable}
\end{scriptsize}

\clearpage
\begin{scriptsize}
\setlength{\tabcolsep}{4pt}
\begin{longtable}{lrrrrrrrrrr}
\caption{Per-instance results for the small-truck fleet ($Q=3500$). Gurobi: incumbent (Obj.), best lower bound (LB), MIP gap at termination, and runtime, with a time limit of 3600\,s. Metaheuristics: best and average objective values over ten runs of 30\,s; the proposed method is shown with a bold header. Bold values denote the best-known solution (ties within a relative tolerance of $10^{-5}$).}
\label{tab:appA2}\\
\toprule
 & \multicolumn{4}{c}{Gurobi} & \multicolumn{2}{c}{GRASP-EvPR} & \multicolumn{2}{c}{SA} & \multicolumn{2}{c}{\textbf{GRASP-ILS-PR}} \\
\cmidrule(lr){2-5}\cmidrule(lr){6-7}\cmidrule(lr){8-9}\cmidrule(lr){10-11}
Instance & Obj. & LB & Gap (\%) & CPU (s) & Best & Avg. & Best & Avg. & Best & Avg. \\
\midrule
\endfirsthead
\caption[]{Per-instance results for the small-truck fleet (continued).}\\
\toprule
 & \multicolumn{4}{c}{Gurobi} & \multicolumn{2}{c}{GRASP-EvPR} & \multicolumn{2}{c}{SA} & \multicolumn{2}{c}{\textbf{GRASP-ILS-PR}} \\
\cmidrule(lr){2-5}\cmidrule(lr){6-7}\cmidrule(lr){8-9}\cmidrule(lr){10-11}
Instance & Obj. & LB & Gap (\%) & CPU (s) & Best & Avg. & Best & Avg. & Best & Avg. \\
\midrule
\endhead
\midrule
\multicolumn{11}{r}{\textit{Continued on next page}} \\
\endfoot
\bottomrule
\endlastfoot
M-05-03-40-2 & \textbf{8.760} & 8.760 & 0.00 & 0.02 & \textbf{8.760} & 8.760 & \textbf{8.760} & 8.812 & \textbf{8.760} & 8.760 \\
M-05-03-50-2 & \textbf{8.292} & 8.292 & 0.00 & 0.02 & \textbf{8.292} & 8.292 & \textbf{8.292} & 8.331 & \textbf{8.292} & 8.292 \\
M-05-03-60-2 & \textbf{7.980} & 7.980 & 0.00 & 0.01 & \textbf{7.980} & 7.980 & \textbf{7.980} & 8.012 & \textbf{7.980} & 7.980 \\
M-05-03-70-2 & \textbf{7.757} & 7.757 & 0.00 & 0.02 & \textbf{7.757} & 7.757 & \textbf{7.757} & 7.796 & \textbf{7.757} & 7.757 \\
M-05-03-80-2 & \textbf{7.590} & 7.590 & 0.00 & 0.02 & \textbf{7.590} & 7.590 & \textbf{7.590} & 7.622 & \textbf{7.590} & 7.590 \\
\addlinespace
M-05-04-40-2 & \textbf{8.464} & 8.464 & 0.00 & 0.24 & \textbf{8.464} & 8.464 & \textbf{8.464} & 8.464 & \textbf{8.464} & 8.464 \\
M-05-04-50-2 & \textbf{7.925} & 7.925 & 0.00 & 0.22 & \textbf{7.925} & 7.925 & \textbf{7.925} & 7.925 & \textbf{7.925} & 7.925 \\
M-05-04-60-2 & \textbf{7.566} & 7.566 & 0.00 & 0.23 & \textbf{7.566} & 7.566 & \textbf{7.566} & 7.566 & \textbf{7.566} & 7.566 \\
M-05-04-70-2 & \textbf{7.309} & 7.309 & 0.00 & 0.25 & \textbf{7.309} & 7.309 & \textbf{7.309} & 7.309 & \textbf{7.309} & 7.309 \\
M-05-04-80-2 & \textbf{7.117} & 7.117 & 0.00 & 0.20 & \textbf{7.117} & 7.117 & \textbf{7.117} & 7.117 & \textbf{7.117} & 7.117 \\
\addlinespace
M-08-05-40-2 & \textbf{14.584} & 14.584 & 0.00 & 0.41 & \textbf{14.584} & 14.584 & \textbf{14.584} & 14.584 & \textbf{14.584} & 14.584 \\
M-08-05-50-2 & \textbf{14.069} & 14.069 & 0.00 & 0.55 & \textbf{14.069} & 14.069 & \textbf{14.069} & 14.069 & \textbf{14.069} & 14.069 \\
M-08-05-60-2 & \textbf{13.726} & 13.726 & 0.00 & 0.46 & \textbf{13.726} & 13.726 & \textbf{13.726} & 13.726 & \textbf{13.726} & 13.726 \\
M-08-05-70-2 & \textbf{13.481} & 13.481 & 0.00 & 0.45 & \textbf{13.481} & 13.481 & \textbf{13.481} & 13.481 & \textbf{13.481} & 13.481 \\
M-08-05-80-2 & \textbf{13.297} & 13.297 & 0.00 & 0.51 & \textbf{13.297} & 13.297 & \textbf{13.297} & 13.297 & \textbf{13.297} & 13.297 \\
\addlinespace
M-10-07-40-2 & \textbf{17.204} & 17.204 & 0.00 & 8.34 & \textbf{17.204} & 17.204 & \textbf{17.204} & 17.204 & \textbf{17.204} & 17.204 \\
M-10-07-50-2 & \textbf{16.402} & 16.402 & 0.00 & 8.50 & \textbf{16.402} & 16.402 & \textbf{16.402} & 16.402 & \textbf{16.402} & 16.402 \\
M-10-07-60-2 & \textbf{15.868} & 15.868 & 0.00 & 8.37 & \textbf{15.868} & 15.868 & \textbf{15.868} & 15.868 & \textbf{15.868} & 15.868 \\
M-10-07-70-2 & \textbf{15.486} & 15.486 & 0.00 & 8.18 & \textbf{15.486} & 15.486 & \textbf{15.486} & 15.486 & \textbf{15.486} & 15.486 \\
M-10-07-80-2 & \textbf{15.200} & 15.200 & 0.00 & 4.28 & \textbf{15.200} & 15.200 & \textbf{15.200} & 15.200 & \textbf{15.200} & 15.200 \\
\addlinespace
M-10-08-40-2 & \textbf{22.380} & 22.380 & 0.00 & 283.47 & \textbf{22.380} & 22.380 & \textbf{22.380} & 22.382 & \textbf{22.380} & 22.380 \\
M-10-08-50-2 & \textbf{21.544} & 21.544 & 0.00 & 273.50 & \textbf{21.544} & 21.544 & \textbf{21.544} & 21.544 & \textbf{21.544} & 21.544 \\
M-10-08-60-2 & \textbf{20.987} & 20.987 & 0.00 & 297.38 & \textbf{20.987} & 20.987 & \textbf{20.987} & 20.988 & \textbf{20.987} & 20.987 \\
M-10-08-70-2 & \textbf{20.589} & 20.589 & 0.00 & 287.10 & \textbf{20.589} & 20.589 & \textbf{20.589} & 20.590 & \textbf{20.589} & 20.589 \\
M-10-08-80-2 & \textbf{20.275} & 20.275 & 0.00 & 254.07 & \textbf{20.275} & 20.275 & \textbf{20.275} & 20.278 & \textbf{20.275} & 20.275 \\
\addlinespace
M-09-09-40-2 & \textbf{19.369} & 19.369 & 0.00 & 1466.46 & \textbf{19.369} & 19.380 & \textbf{19.369} & 19.369 & \textbf{19.369} & 19.373 \\
M-09-09-50-2 & \textbf{18.111} & 18.111 & 0.00 & 1562.97 & \textbf{18.111} & 18.123 & \textbf{18.111} & 18.123 & \textbf{18.111} & 18.115 \\
M-09-09-60-2 & \textbf{17.273} & 17.273 & 0.00 & 1448.33 & \textbf{17.273} & 17.291 & \textbf{17.273} & 17.273 & \textbf{17.273} & 17.281 \\
M-09-09-70-2 & \textbf{16.674} & 16.674 & 0.00 & 1707.50 & \textbf{16.674} & 16.689 & \textbf{16.674} & 16.676 & \textbf{16.674} & 16.682 \\
M-09-09-80-2 & \textbf{16.225} & 16.225 & 0.00 & 1450.65 & \textbf{16.225} & 16.236 & \textbf{16.225} & 16.226 & \textbf{16.225} & 16.234 \\
\addlinespace
M-12-11-40-2 & \textbf{27.967} & 23.641 & 15.47 & 3600.02 & 27.972 & 28.060 & \textbf{27.967} & 28.024 & \textbf{27.967} & 27.996 \\
M-12-11-50-2 & \textbf{26.825} & 21.206 & 20.95 & 3600.02 & 26.829 & 26.921 & \textbf{26.825} & 26.963 & \textbf{26.825} & 26.840 \\
M-12-11-60-2 & \textbf{26.064} & 20.220 & 22.42 & 3600.02 & 26.084 & 26.146 & 26.130 & 26.148 & \textbf{26.064} & 26.070 \\
M-12-11-70-2 & \textbf{25.511} & 25.511 & 0.00 & 762.38 & 25.520 & 25.548 & \textbf{25.511} & 25.554 & \textbf{25.511} & 25.520 \\
M-12-11-80-2 & \textbf{25.044} & 25.044 & 0.00 & 1591.63 & 25.082 & 25.124 & \textbf{25.044} & 25.061 & \textbf{25.044} & 25.061 \\
\addlinespace
M-15-13-40-2 & \textbf{32.267} & 23.317 & 27.74 & 3600.02 & 32.531 & 32.665 & 32.296 & 32.361 & 32.292 & 32.309 \\
M-15-13-50-2 & 30.787 & 21.546 & 30.02 & 3600.02 & 31.034 & 31.141 & 30.791 & 30.872 & \textbf{30.745} & 30.832 \\
M-15-13-60-2 & 29.801 & 19.846 & 33.40 & 3600.01 & 30.086 & 30.152 & \textbf{29.731} & 29.822 & 29.782 & 29.841 \\
M-15-13-70-2 & 29.059 & 20.188 & 30.53 & 3600.17 & 29.273 & 29.376 & \textbf{29.007} & 29.088 & 29.062 & 29.126 \\
M-15-13-80-2 & \textbf{28.463} & 20.503 & 27.97 & 3600.19 & 28.674 & 28.815 & \textbf{28.463} & 28.526 & 28.519 & 28.582 \\
\addlinespace
M-17-15-40-2 & 33.191 & 23.647 & 28.76 & 3600.02 & 33.697 & 33.862 & 33.191 & 33.315 & \textbf{33.176} & 33.309 \\
M-17-15-50-2 & \textbf{31.349} & 21.271 & 32.15 & 3600.02 & 31.738 & 31.928 & 31.458 & 31.594 & \textbf{31.349} & 31.457 \\
M-17-15-60-2 & 30.160 & 20.175 & 33.11 & 3600.03 & 30.466 & 30.576 & 30.132 & 30.220 & \textbf{30.127} & 30.211 \\
M-17-15-70-2 & 29.304 & 18.028 & 38.48 & 3600.02 & 29.537 & 29.630 & 29.254 & 29.350 & \textbf{29.247} & 29.309 \\
M-17-15-80-2 & \textbf{28.528} & 17.238 & 39.58 & 3600.02 & 28.801 & 28.953 & \textbf{28.528} & 28.639 & 28.580 & 28.642 \\
\end{longtable}
\end{scriptsize}

\begin{table}[htbp]
\centering
\scriptsize
\setlength{\tabcolsep}{4pt}
\caption{Per-instance results on the large-scale instances. Gurobi: incumbent (Obj.), best lower bound (LB), and MIP gap at termination, with a time limit of 3600\,s, which is reached on all instances. Metaheuristics: best and average objective values over ten runs of 120\,s; the proposed method is shown with a bold header. Bold values denote the best-known solution.}
\label{tab:appA3}
\begin{tabular}{lrrrrrrrrr}
\toprule
 & \multicolumn{3}{c}{Gurobi} & \multicolumn{2}{c}{GRASP-EvPR} & \multicolumn{2}{c}{SA} & \multicolumn{2}{c}{\textbf{GRASP-ILS-PR}} \\
\cmidrule(lr){2-4}\cmidrule(lr){5-6}\cmidrule(lr){7-8}\cmidrule(lr){9-10}
Instance & Obj. & LB & Gap (\%) & Best & Avg. & Best & Avg. & Best & Avg. \\
\midrule
M-20-20-40-1 & \textbf{47.896} & 10.177 & 78.75 & 51.069 & 51.535 & 47.966 & 48.105 & \textbf{47.896} & 47.995 \\
M-20-20-50-1 & 43.322 & 9.532 & 78.00 & 45.318 & 45.972 & \textbf{43.265} & 43.468 & \textbf{43.265} & 43.324 \\
M-20-20-60-1 & \textbf{40.132} & 9.975 & 75.14 & 42.182 & 42.541 & 40.146 & 40.299 & 40.142 & 40.176 \\
M-20-20-70-1 & 37.877 & 9.510 & 74.89 & 39.303 & 39.838 & 37.844 & 37.977 & \textbf{37.815} & 37.848 \\
M-20-20-80-1 & \textbf{36.021} & 9.468 & 73.72 & 37.320 & 37.970 & 36.067 & 36.188 & \textbf{36.021} & 36.055 \\
\addlinespace
M-20-20-40-2 & 43.982 & 22.549 & 48.73 & 44.718 & 44.890 & \textbf{43.852} & 43.949 & \textbf{43.852} & 43.900 \\
M-20-20-50-2 & 41.901 & 21.747 & 48.10 & 42.537 & 42.695 & \textbf{41.834} & 41.924 & 41.903 & 41.939 \\
M-20-20-60-2 & 40.559 & 20.696 & 48.97 & 41.003 & 41.263 & 40.514 & 40.568 & \textbf{40.453} & 40.538 \\
M-20-20-70-2 & \textbf{39.420} & 20.199 & 48.76 & 40.112 & 40.273 & \textbf{39.420} & 39.508 & \textbf{39.420} & 39.502 \\
M-20-20-80-2 & 38.680 & 20.029 & 48.22 & 39.252 & 39.478 & \textbf{38.670} & 38.729 & \textbf{38.670} & 38.733 \\
\addlinespace
M-50-50-40-1 & 136.511 & 4.457 & 96.73 & 145.450 & 147.998 & 123.957 & 125.944 & \textbf{121.480} & 122.796 \\
M-50-50-50-1 & 118.609 & 4.595 & 96.13 & 127.397 & 130.277 & 107.768 & 110.408 & \textbf{107.029} & 109.550 \\
M-50-50-60-1 & 109.531 & 5.199 & 95.25 & 115.098 & 116.551 & 99.079 & 101.044 & \textbf{98.320} & 99.084 \\
M-50-50-70-1 & 100.587 & 3.786 & 96.24 & 105.471 & 107.144 & 92.389 & 93.477 & \textbf{90.982} & 92.254 \\
M-50-50-80-1 & 96.112 & 11.506 & 88.03 & 97.943 & 100.452 & 87.386 & 88.078 & \textbf{85.471} & 86.980 \\
\addlinespace
M-50-50-40-2 & 104.546 & 35.322 & 66.21 & 107.081 & 107.875 & \textbf{100.911} & 101.361 & 101.079 & 101.368 \\
M-50-50-50-2 & 106.250 & 33.804 & 68.18 & 99.509 & 101.235 & 95.766 & 96.012 & \textbf{95.328} & 96.119 \\
M-50-50-60-2 & 97.371 & 33.578 & 65.52 & 96.670 & 97.108 & 91.976 & 92.239 & \textbf{91.854} & 92.257 \\
M-50-50-70-2 & 92.152 & 33.651 & 63.48 & 93.173 & 94.174 & 89.444 & 89.650 & \textbf{89.375} & 89.779 \\
M-50-50-80-2 & 88.639 & 33.395 & 62.32 & 91.101 & 91.841 & 87.549 & 87.723 & \textbf{87.460} & 87.774 \\
\bottomrule
\end{tabular}
\end{table}

   \bibliographystyle{model5-names}
   \biboptions{authoryear}
{\small
  \bibliography{Refs}}




\end{document}